\documentclass{article}

 \usepackage[preprint]{neurips_2026}

\usepackage[utf8]{inputenc} % allow utf-8 input
\usepackage[T1]{fontenc}    % use 8-bit T1 fonts
\usepackage{hyperref}       % hyperlinks
\usepackage{url}            % simple URL typesetting
\usepackage{booktabs}       % professional-quality tables
\usepackage{amsfonts}       % blackboard math symbols
\usepackage{nicefrac}       % compact symbols for 1/2, etc.
\usepackage{microtype}      % microtypography
\usepackage{xcolor}         % colors

\title{Plug-In Classification of Drift Functions in Diffusion Processes Using Neural Networks}

\author{%
}
\author{%
  Yuzhen Zhao \\
  Universit\'e Paris-Dauphine, PSL \\
  Chaire DIALog, Fondation du Risque \\
  Institut Louis Bachelier \\
  Paris, France \\
  \And
  Jiarong Fan \\
  LaMME, University of Paris-Saclay \\
  Evry, France \\
  \And
  Yating Liu \\
  CEREMADE, CNRS \\
  Universit\'e Paris-Dauphine, PSL \\
  Paris, France \\
}
\usepackage{microtype}
\usepackage{graphicx}
\usepackage{subcaption}
\usepackage{booktabs} 
\usepackage{hyperref}

\usepackage{amsmath}
\usepackage{amssymb}
\usepackage{mathtools}
\usepackage{amsthm}

\usepackage[capitalize,noabbrev]{cleveref}

\theoremstyle{plain}
\newtheorem{theorem}{Theorem}[section]
\newtheorem{proposition}[theorem]{Proposition}
\newtheorem{lemma}[theorem]{Lemma}

\theoremstyle{definition}

\newtheorem{assumption}[theorem]{Assumption}
\theoremstyle{remark}
\newtheorem{remark}[theorem]{Remark}

\usepackage{bbm}
\usepackage{comment}

\renewcommand{\d}{\mathrm{d}}

\newcommand{\R}{\mathbb{R}}
\newcommand{\PP}{\mathbb{P}}
\newcommand{\RD}{\mathbb{R}^{d}}
\newcommand{\RR}{\mathbb{R}}
\newcommand{\EE}{\mathbb{E}}

\newcommand{\calY}{\mathcal{Y}}
\newcommand{\calF}{\mathcal{F}}
\newcommand{\calG}{\mathcal{G}}
\newcommand{\calD}{\mathcal{D}}

\newcommand{\calR}{\mathcal{R}}

\newcommand{\calE}{\mathcal{E}}

\DeclareMathOperator*{\argmax}{argmax}
\DeclareMathOperator*{\argmin}{argmin}

\newcommand{\nnset}{\mathcal{F}(L, \mathbf{p}, s, F)}

\usepackage[textsize=tiny]{todonotes}
\allowdisplaybreaks

\begin{document}

\maketitle

\begin{abstract}
We study supervised multiclass classification for diffusion processes, where each class is characterized by a distinct drift function and trajectories are observed at discrete times. We first derive a multidimensional Bayes rule and then construct a plug-in classifier by estimating the class-specific drifts with neural networks. Under standard regularity assumptions, we establish convergence rates for the excess misclassification risk, making explicit the contributions of drift estimation, time discretization, and dimension. Our analysis also highlights the benefit of exploiting the diffusion structure: the drift is learned from all observed increments, leading to sharper guarantees than direct trajectory-based neural classifiers in the considered setting. Numerical experiments support the theory: the proposed method achieves better classification performance than \cite{Denis2024} in dimension one, remains effective in higher dimensions when the drift functions admit a compositional structure, and outperforms end-to-end neural classifiers trained directly on trajectories, as in \cite{bos2022}.
\end{abstract}

\section{Introduction}

In this paper, we study a supervised multiclass classification problem for the drift functions of a time-homogeneous diffusion process using a neural-network (NN) based plug-in classifier. Specifically, we consider a filtered probability space $(\Omega, \calF, (\calF_t)_{t\in[0,T]}, \PP)$, an $(\calF_t)_{t\in[0,T]}$-adapted $d$-dimensional standard Brownian motion  $B = (B_t)_{t \in [0, T]}$, and an $\mathbb{R}^d$-valued diffusion process $X=(X_t)_{t\in[0,T]}$ defined on $(\Omega, \calF, (\calF_t)_{t\in[0,T]}, \PP)$ and solving the stochastic differential equation (SDE)
\begin{equation}\label{eq:sde}
\d X_t = b_Y(X_t)\,\d t + \sigma(X_t)\,\d B_t,
\end{equation}
where $X_0$ is an $\RD$ valued random vector, and $Y$ is a discrete random variable taking values in $\mathcal{Y}\coloneqq\{1,\dots,K\}$ representing the class label. For each label $k\in\mathcal{Y}$, the drift $b_k : \mathbb{R}^d \to \mathbb{R}^d$ and the diffusion coefficient $\sigma : \mathbb{R}^d \to \mathbb{R}^{d\times d}$ are measurable. We assume that  $X_0$ and the Brownian motion $B=(B_t)_{t\in[0,T]}$ are both independent of the label $Y$. 

%\noindent\textbf{Data :}%$\bar{X}^{[k],n} = (\bar X^{[k],n}_t)_{t \in [0, T]}$, $1 \le n \le N$, 

For each fixed label $Y=k\in\mathcal{Y}$, we assume that $N_k$ independent and identically distributed (i.i.d.) sample paths are available and observed at high frequency, that is, at discrete time points on a fine temporal grid. This yields the dataset  $\mathcal{D}_{N_k}^{[k]}$ given by
\begin{align}
 \mathcal{D}_{N_k}^{[k]}
 =
 \Big\{
 \bar{X}^{[k],(n)}_{t_0:t_M}
 \!=\!
 \big(
\bar X^{[k],(n)}_{t_0}\!\!,
 \dots,
\bar X^{[k],(n)}_{t_M}
 \big),   
 1\!\le\! n\!\le\! {N_k}
 \Big\},\nonumber
\end{align}
where $t_m = m\Delta$ with time step $\Delta = \tfrac{T}{M} \to 0$ as $M \to \infty$. We denote by $\mathcal{D}_N = \bigcup_{k=1}^K \mathcal{D}_{N_k}^{[k]}$ the full training dataset, where $N=\sum_{k=1}^{K}N_k$.

The objective of this paper is to construct a classifier
$
g^{\mathcal{D}_N} : (\mathbb{R}^d)^{M+1} \to \mathcal{Y}
$
based on the dataset $\mathcal{D}_N$ such that, given a new observed trajectory
$
\bar Z = (Z_{t_0}, \dots, Z_{t_M}),
$
$g^{\mathcal{D}_N}(\bar Z)$ predicts the corresponding label of the underlying diffusion process $Z$.

%Since the label $Y$ affects the dynamics only through the drift function, we assume throughout the paper that the diffusion coefficient $\sigma$ is known and that the prior label probabilities
%$\mathfrak{p}_k \coloneqq \mathbb{P}(Y = k)$ are given. But the drift functions $b_k, k\in \calY$ are unknown and must be estimated from the data.  When this is not the case, $\sigma$ can be consistently estimated from high-frequency data \rr (see, e.g., \cite{<ref>}), \black and the class priors $\mathfrak{p}_k$ can be estimated by
%$
%\widehat{\mathfrak{p}}_k = N_k / N,
%$
%where $N_k$ denotes the number of training trajectories with label $k$.

%The analysis in this paper is carried out under the assumption that the observed trajectories are generated by an underlying diffusion process. 
The analysis in this paper is carried out under the assumption that the observed trajectories are generated by an underlying diffusion process, and the class label depends only on the drift function \(b_Y\), whereas the diffusion coefficient \(\sigma\) is assumed to be identical across classes. 
Testing whether a given observation is consistent with a diffusion model belongs to the literature on goodness-of-fit and specification testing for diffusion models; see, for instance, the works of Yacine Aït-Sahalia and his collaborators \cite{ait2002telling, MR2488349, AITSAHALIA2018233, MR2722466}, and the recent review \citet{lopezreview}. Moreover, for simplicity of notation, we assume throughout the paper that the prior class probabilities
$
\mathfrak{p}_k \coloneqq \mathbb{P}(Y = k)
$
are known. Otherwise, they can be estimated empirically by
$
\widehat{\mathfrak{p}}_k = \frac{N_k}{N},
$
where \(N_k\) is the number of training trajectories with label \(k\), and \(N\) is the total number of training trajectories.

%%%%%%%
%%%%%%%

%Since the class label $ Y $ influences the dynamics only through the drift function, we assume throughout the paper that the diffusion coefficient $ \sigma $ is known and that the prior class probabilities
%$ \mathfrak{p}_k \coloneqq \mathbb{P}(Y = k) $ are given. In contrast, the drift functions $ b_k $, $ k \in \mathcal Y $, are unknown and must be learned from the data.  If the diffusion coefficient $\sigma$ and the class priors $\mathfrak{p}_k$, $k\in\mathcal Y$, are unknown, then $\sigma$ can be consistently estimated from high-frequency observations (see, e.g., \citet{Comte2007}), and the class priors $\mathfrak{p}_k$ can be estimated empirically as
%$
%\widehat{\mathfrak{p}}_k = \tfrac{N_k}{N},
%$
%where $N_k$ denotes the number of training trajectories labeled $k$.

%Since the label $Y$ affects the dynamics only through the drift function, we assume throughout the paper that the diffusion coefficient $\sigma$ is known and that the prior label probabilities
%$\mathfrak{p}_k \coloneqq \mathbb{P}(Y=k)$ are given. When this is not the case, $\sigma$ can be consistently estimated from high-frequency data (see, e.g., \cite{<ref>}), and the class priors $\mathfrak{p}_k$ can be estimated by
%$\widehat{\mathfrak{p}}_k = N_k / N$, where $N_k$ denotes the number of training trajectories with label $k$.

%[describe sigma is know, $\mathfrak{p}_k$ is known]

%\medskip

%\noindent\textbf{Label :}

%$Y$ which is a r.v. taking values in $\calY\coloneqq\{1, ..., K\}$

%\medskip

%\noindent\textbf{Objective :}

%Construct a classifier 

%\medskip

%\subsection{Literature Review and Motivation}

\subsection{Literature Review and Motivation}

Diffusion processes of the form~\eqref{eq:sde} constitute a fundamental class of stochastic models with a wide range of applications in physics, biology, and mathematical finance (see, e.g., \citet{Gardiner2004, Bressloff2014, Karatzas1998}). More recently, they have gained renewed attention as the continuous-time theoretical foundation of generative diffusion models, which have achieved state-of-the-art performance in image generation and related tasks (see, e.g., \citet{song2021scorebased, dhariwal2021diffusion}). The supervised classification of diffusion paths, where class information is encoded through the drift function, has been investigated in \citet{Cadre2013, Gadat2020, Denis2020, Denis2024, denis2025empirical}. However, existing results are largely restricted to the one-dimensional setting, which limits their applicability in modern high-dimensional contexts. % such as diffusion-based generative models.

This paper extends this line of research to multidimensional diffusion processes. The proposed classifier is based on the nonparametric estimation of the drift function, a well-studied problem in the diffusion literature (see e.g. \citet{Hoffmann1999, Comte2020, Denis2021, zhao2025drift}). We note that drift estimation and classification are distinct tasks. Classification provides a statistical decision framework to map complex estimated functions, possibly learned by neural networks, onto a finite set of interpretable diffusion processes. This classification perspective not only facilitates theoretical analysis but also enhances interpretability and practical relevance in downstream applications.

Alternatively, one may treat the whole observed trajectory $
 \big(
\bar X^{[k],(n)}_{t_0}\!\!,
 \dots,
\bar X^{[k],(n)}_{t_M}
 \big)$  as input and train a trajectory-based classifier, such as a feedforward neural network (FNN) or an RNN (see, e.g., \citet{bos2022, chen2020generalization}). These methods are flexible but do not exploit the diffusion structure. This motivates a comparison with our structure-aware plug-in approach in the diffusion setting.

\subsection{Contribution and Organization of this Paper}

This paper studies supervised multiclass classification for multidimensional diffusion processes, where each class is characterized by a distinct drift function. On the theoretical side, we make the following contributions:
\begin{enumerate}
\item Proposition~\ref{prop:bayes-classifier} provides a characterization of the Bayes classifier in the multidimensional setting, extending the one-dimensional result of \citet[Proposition~1]{Denis2020}. This result also serves as the foundation for constructing plug-in classifiers based on estimated drift functions.
\item Theorem~\ref{thm:main-decomposition} establishes a decomposition of the excess classification risk, defined as the gap between the risk of an arbitrary classifier and that of the Bayes classifier, in terms of the time step size~$\Delta$ and the drift estimation error.
\item Theorem~\ref{thm:main-NN} derives convergence rates for the neural network based plug-in classifier using drift estimation results from \citet{zhao2025drift}.
\item  Appendix~\ref{app:trajectory-comparison} provides a theoretical comparison with FNN-based trajectory classifiers. Using the framework of \citet{bos2022}, we show that treating the whole path as input yields a convergence rate that depends on the number of observation times \(M\), highlighting the benefit of exploiting the diffusion structure. 
\end{enumerate}

On the numerical side, we consider two simulation examples. The first involves a diffusion with locally fluctuating drift functions, where we show that the proposed neural network based classifier significantly outperforms B-spline based methods,  as well as trajectory-based classifiers that take the entire observed path as input, such as FNNs, RNNs, TCNs, and Transformers, without exploiting the underlying SDE structure. 
Moreover, the empirical convergence behavior agrees with Theorem~\ref{thm:main-NN}. In this example, the compositional structure of the drift functions ensures that the dimension affects only the constants, and not the exponent of the convergence rate, and this is reflected in the observed rates across multidimensional settings. 
%under the compositional structure of the drift functions, the dimension affects the constants but not the convergence-rate exponent, as reflected by the observed rates in multidimensional settings. 
The second experiment revisits the example of \citet{Denis2024}, where our method achieves performance comparable to both their classifier and the Bayes benchmark.

The paper is organized as follows. All notations used in this paper are collected in Section~\ref{sec:notations}. Section~\ref{sec:main-theory} introduces the construction of the neural network based plug-in classifier, relying on the Bayes characterization established in Proposition~\ref{prop:bayes-classifier}. Theoretical convergence guarantees for this classifier are established in Theorems~\ref{thm:main-decomposition} and~\ref{thm:main-NN}. Section~\ref{sec:numerical-part} presents the numerical experiments. Finally, Section~\ref{sec:conclusion} presents the conclusion of this paper. The appendix contains detailed proofs and implementation details. % Section~\ref{sec:ProofSketch} provides proof sketches for the main theoretical results. %

\subsection{Notation}\label{sec:notations}

%[TO DO TO DO]

We denote by $\mathcal{C}([0,T], \mathbb{R}^d)$ the space of continuous functions from $[0,T]$ to $\mathbb{R}^d$. For a function $f:D\rightarrow \RR^d$, we write $\Vert f\Vert _{\sup}=\sup_{x\in D}|f(x)|$.  The prior probability of class $k\in\mathcal{Y}$ is denoted by
$\mathfrak{p}_k \coloneqq \mathbb{P}(Y = k)$. For a random variable $X$, $\Vert X\Vert_p$ denotes the $L^p$-norm of $X$, that is, $\Vert X\Vert_p=\{\EE [|X|^p]\}^{\frac{1}{p}}.$ 

For a vector or matrix $W$, we write $|W|$ for the Euclidean norm when $W$ is a vector and for the Frobenius norm when $W$ is a matrix. The notation $|W|_\infty$ denotes the maximum-entry norm, $|W|_0$ denotes the number of nonzero entries of $W$, and $|W|_{\text{op}}$ denotes the operator norm. 

For $\beta \in \mathbb{R}$, $\lfloor \beta \rfloor$ denotes the largest integer strictly smaller than $\beta$.  
For two sequences $(a_N)$ and $(b_N)$, we write $a_N \lesssim b_N$ if there exists a constant $C>0$ such that $a_N \le C b_N$ for all $N$, and we write $a_N \asymp b_N$ if both $a_N \lesssim b_N$ and $b_N \lesssim a_N$ hold.

Finally, the index $k = 1,\dots,K$ denotes the class label in the classification problem, the index $m = 0,\dots,M-1$ refers to discrete time steps on the observation grid, and $n = 1,\dots,N$ indexes the sample paths in the training dataset. %We use the superscript $[k]$ to indicate class labels and the superscript $(n)$ to denote individual sample paths in the training set. 
Throughout the paper, $\mathfrak{C}$ denotes a generic positive constant depending only on the model parameters $(d,T,b_1,\ldots,b_K,\sigma,\|X_0\|_4)$ and more generally, $C_{\lambda_1,\ldots,\lambda_p}$ denotes a positive constant depending on the parameters $\lambda_1,\ldots,\lambda_p$. The value of $\mathfrak{C}$ and $C_{\lambda_1,\ldots,\lambda_p}$ may  vary from line to line.

\section{Construction and Convergence Analysis of the Neural Network Based Plug-In Classifier}\label{sec:main-theory}

Throughout the paper, we work under the following assumptions.
\begin{assumption}\label{assump:Lipschitz}\begin{enumerate}
    \item[$(a)$] $\min_{k\in\calY}\mathfrak{p}_k>0$, $\EE[|X_0|^4]<+\infty$; 
%    \item[$(b)$]  
    \item [$(b)$] The coefficient functions $b_k, \,k\in\calY$ and $\sigma$ are globally Lipschitz continuous, that is, there exist constants $L_b, L_\sigma > 0$ such that for every $ x, y \in \mathbb{R}^d,$
    \[
        \max_{k\in\calY} |b_k(x)-b_k(y)|\leq L_b|x-y|,\qquad |\sigma(x)-\sigma(y)|\leq L_\sigma  |x-y|;\nonumber
    \]
\item[$(c)$] For every $x\in\RD$, the matrix $\sigma(x)$ is invertible.  
\end{enumerate}
\end{assumption}
%The coefficient functions $b_k, \,k\in\calY$ and $\sigma$ are globally Lipschitz continuous, that is, there exist constants $L_b, L_{\sigma} > 0$ such that for all $x, y \in \mathbb{R}^d$, for every $k\in\calY$
%\begin{align*}
%|b_k(x)-b_k(y)| \leq L_b |x-y|, \quad |\sigma(x)-\sigma(y)| \leq L_{\sigma} |x-y|.
%\end{align*} Moreover, for every $x\in\RD$, the matrix $\sigma(x)$ is invertible.  
%\end{assumption}

%\begin{assumption}
%\begin{manualtheorem}{I}
%\label{assum:lip} The initial random variable $X_0$ satisfies $\EE[|X_0|^2]<+\infty$. 
%The coefficient functions $b$ and $\sigma$ are globally Lipschitz continuous; that is, there exist constants $L_b, L_{\sigma} > 0$ such that for all $x, y \in \mathbb{R}^d$,
%\begin{align*}
%|b(x)-b(y)| \leq L_b |x-y|, \quad |\sigma(x)-\sigma(y)| \leq L_{\sigma} |x-y|.
%\end{align*}
%

\begin{assumption}\label{assump:Novikov} (Novikov's condition) 
  $\EE\big[\exp \big(\tfrac{1}{2}\int_0^T |\sigma^{-1}(X_s)\,b_k(X_s)|^2\,\d s\big)\big]<+\infty$,  $k\in\calY$.
\end{assumption}

\begin{assumption}\label{assump:main-theorem}
For every $x \in \mathbb{R}^d$, the matrix $a(x) \coloneqq \sigma\sigma^{\top}(x)$ is invertible, and $a^{-1}$ is globally Lipschitz continuous with respect to the operator norm $|\cdot|_{\mathrm{op}}$, with Lipschitz constant $L_{a^{-1}}$. Moreover, there exists a constant $\Lambda > 0$ such that $\max(|\sigma(x)|,|a(x)^{-1}|_{\mathrm{op}}) \leq \Lambda, \,x\in\RD$.
\end{assumption}

For clarity of exposition, we assume in what follows that the diffusion coefficient $\sigma$ is known, which allows us to present the main ideas without overloading the notation. The case where $\sigma$ is unknown can be handled by replacing $\sigma$ with an estimator $\widehat{\sigma}$, and is discussed in Appendix \ref{app:sigma-unknown}.

\subsection{Bayes classifier}
Consider a diffusion process $(X_t)_{t\in[0,T]}$ that is a solution to~\eqref{eq:sde}
and is observed continuously in time. For a given classifier
$
g : \mathcal{C}([0,T],\mathbb{R}^d) \to \mathcal{Y},
$
its performance is measured by the misclassification risk
\begin{equation}\label{eq:def-misclassification-risk}
\mathcal{R}(g) \coloneqq \mathbb{P}\big( g(X) \neq Y \big).
\end{equation}
The Bayes classifier $g^*$ is defined as any classifier minimizing the misclassification risk over the class
\[
\mathbb{G} \coloneqq \big\{ g : \mathcal{C}([0,T], \mathbb{R}^d) \to \mathcal{Y} \;\text{measurable} \big\},
\]
that is, 
%\begin{equation}\label{eq:def-classifier-bayes}
$\displaystyle g^* \in \argmin_{g \in \mathbb{G}} \mathcal{R}(g).$   
%\end{equation}
Moreover, $g^*$ admits the characterization $\displaystyle g^*(X) \in \argmax_{k \in \mathcal{Y}} \pi_k^*(X)$ (see, e.g., \citet[Section 2.4]{Hastie2009elements}),  where 
\begin{equation}\label{eq:def-pi}
\pi_k^*(X) = \mathbb{P}(\,Y = k \mid X\,).  
\end{equation}
The following proposition, whose proof is provided in Appendix~\ref{sec:app-a}, extends \citet[Proposition~1]{Denis2020} to a high-dimensional setting.

\begin{proposition}\label{prop:bayes-classifier} 

Assume that Assumptions~\ref{assump:Lipschitz} and~\ref{assump:Novikov} hold.
For each $k \in \mathcal{Y}$, define
\begin{align}\label{eq:def-F}
F_k^*(X)
\coloneqq\;
&\int_{0}^{T} b_k(X_s)^{\top} \big(\sigma\sigma^{\top}\big)^{-1}(X_s)\,\d X_s -\frac{1}{2}\int_{0}^{T} \big\lvert \sigma^{-1}(X_s) b_k(X_s) \big\rvert^2 \,\d s .
\end{align}
Then, for each $k \in \mathcal{Y}$, the conditional probability $\pi_k^*$ defined in \eqref{eq:def-pi} satisfies
\begin{equation}\label{eq:pi-equality}
\pi_k^*(X) = \phi_k\big( F^*(X) \big), \quad \PP-\text{a.s.,}
\end{equation}
where $F^* = (F_1^*, \dots, F_K^*)$ and
\begin{equation}\label{eq:def-phi}
\phi_k(x_1,\dots,x_K)
\coloneqq
\frac{\mathfrak{p}_k \exp(x_k)}{\sum_{j=1}^{K} \mathfrak{p}_j \exp(x_j)}, \;1\leq k\leq K  
\end{equation}
denote the softmax functions with prior weights $\mathfrak{p}_k, \,1\leq k\leq K$.
\end{proposition}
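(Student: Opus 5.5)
The plan is a Girsanov change of measure that expresses the law of $X$ under each class as absolutely continuous with respect to a common reference law. This reduces Bayes' formula to a ratio of likelihoods, namely the exponentials of $F_k^*$.

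\textbf{Reference process.} Let $\bar X$ solve the driftless SDE $\d\bar X_t=\sigma(\bar X_t)\,\d B_t$ with $\bar X_0\overset{d}{=}X_0$. Under Assumption~\ref{assump:Lipschitz}$(b)$ it has a unique strong solution, so its law $\mathbb{Q}$ on $\mathcal{C}([0,T],\RD)$ is well defined. For each $k$, let $X^{(k)}$ denote the solution of \eqref{eq:sde} with drift $b_k$; it is again well defined by the Lipschitz assumption. Since $X_0$ and $B$ are independent of $Y$, the conditional law of $X$ given $Y=k$ equals the law $\mathbb{P}_k$ of $X^{(k)}$.

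\textbf{Girsanov step.} Set $\theta_k(x)=\sigma^{-1}(x)b_k(x)$, which is well defined by Assumption~\ref{assump:Lipschitz}$(c)$. Work under the reference measure on which $\bar X$ is driftless, and define the density process $\mathcal{E}_t=\exp\bigl(\int_0^t\theta_k(\bar X_s)^\top\d B_s-\tfrac12\int_0^t|\theta_k(\bar X_s)|^2\d s\bigr)$. Novikov's condition makes $\mathcal{E}$ a true martingale, so Girsanov's theorem gives a measure under which $\tilde B_t=B_t-\int_0^t\theta_k(\bar X_s)\d s$ is a Brownian motion and $\bar X$ solves the SDE with drift $b_k$. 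By uniqueness in law, $\mathbb{P}_k\ll\mathbb{Q}$ with $\frac{\d\mathbb{P}_k}{\d\mathbb{Q}}(x)=\mathcal{E}_T$.

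To express this density as a functional of the path, use $\d B_s=\sigma^{-1}(\bar X_s)\d\bar X_s$. This gives
\[
\int_0^T\theta_k^\top\d B_s=\int_0^T b_k^\top(\sigma^{-1})^\top\sigma^{-1}\d\bar X_s=\int_0^T b_k^\top(\sigma\sigma^\top)^{-1}(\bar X_s)\,\d\bar X_s,
\]
so $\frac{\d\mathbb{P}_k}{\d\mathbb{Q}}=\exp(F_k^*)$ $\mathbb{Q}$-a.s. Here $F_k^*$ is the path functional in \eqref{eq:def-F}, with the stochastic integral taken against the canonical process, which is a $\mathbb{Q}$-semimartingale.

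\textbf{Bayes step.} The joint law of $(X,Y)$ has density $\mathfrak{p}_k\exp(F_k^*(x))$ with respect to $\mathbb{Q}\otimes(\text{counting measure})$. Hence, for every measurable set $A$ of paths,
\[
\PP(Y=k,X\in A)=\EE_{\mathbb{Q}}\bigl[\mathfrak{p}_k e^{F_k^*}\mathbbm{1}_A\bigr].
\]
Summing over $k$ gives $\PP(X\in A)=\EE_{\mathbb{Q}}\bigl[\sum_j\mathfrak{p}_je^{F_j^*}\mathbbm{1}_A\bigr]$. Comparing the two expressions shows that $\phi_k(F^*(X))$ is a version of $\PP(Y=k\mid X)$, which is \eqref{eq:pi-equality}. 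The denominator is strictly positive because each $e^{F_j^*}>0$ and $\min_k\mathfrak{p}_k>0$.

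\textbf{Main obstacle.} The delicate point is making sense of $F_k^*$ as a single measurable functional on path space that agrees a.s. with the Itô integral under every $\mathbb{P}_j$ simultaneously. The plan is to define the stochastic integral under $\mathbb{Q}$ as a limit in probability of Riemann sums along a deterministic sequence of partitions, passing to an a.s.\ convergent subsequence. Mutual absolute continuity, which holds because $\mathcal{E}_T>0$, then transfers this a.s.\ convergence to each $\mathbb{P}_j$. The remaining ingredients are routine: the moment condition on $X_0$ and the local boundedness of $\sigma^{-1}$ ensure the integrals are well defined.
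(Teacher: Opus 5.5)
Your architecture matches the paper's: a common driftless reference law, likelihood ratios $e^{F_k^*}$, then Bayes' formula, and your Bayes step is the same. Working with the path-space law $\mathbb{Q}$ and building $F_k^*$ from Riemann sums is more careful than the paper, whose $\tilde\PP$ is constructed from one fixed $k$ and then silently used for all $k$.

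There is, however, a genuine gap in the Girsanov step. You run the change of measure in the opposite direction from the paper, so the martingale property you need is not the one Assumption~\ref{assump:Novikov} provides. Adding the drift to the driftless process requires $\EE_{\mathbb{Q}}[\mathcal{E}_T]=1$. Novikov would give this only if $\EE_{\mathbb{Q}}\bigl[\exp\bigl(\tfrac12\int_0^T|\sigma^{-1}(\bar X_s)b_k(\bar X_s)|^2\,\d s\bigr)\bigr]<\infty$, which is an exponential moment of the \emph{driftless} diffusion. Assumption~\ref{assump:Novikov} is instead an exponential moment of the \emph{drifted} process $X$ under $\PP$, hence under each $\PP_k$ because $\EE_k[\cdot]\le\EE[\cdot]/\mathfrak{p}_k$. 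These conditions genuinely differ. Take a class with $d=1$, $\sigma\equiv1$, $X_0=0$ and $b_k(x)=-\lambda x$. Girsanov gives $\EE_k\bigl[\exp\bigl(\tfrac{\lambda^2}{2}\int_0^T X_s^2\,\d s\bigr)\bigr]=e^{\lambda T/2}(1+\lambda T)^{-1/2}<\infty$ for all $\lambda,T$. For Brownian motion, by contrast, $\EE\bigl[\exp\bigl(\tfrac{\lambda^2}{2}\int_0^T W_s^2\,\d s\bigr)\bigr]=\infty$ as soon as $\lambda T\ge\pi/2$. So the sentence ``Novikov's condition makes $\mathcal{E}$ a true martingale'' is not justified under the stated hypotheses. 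Everything after it depends on that step: $\PP_k\ll\mathbb{Q}$, the transfer of a.s.\ convergence in your last paragraph, and the Bayes step.

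The paper's direction repairs this. Under $\PP_k$ it removes the drift with $Z^{[k]}_T=\exp\bigl(-\int_0^T\theta_k(X_s)^\top\d B_s-\tfrac12\int_0^T|\theta_k(X_s)|^2\,\d s\bigr)$, for which Assumption~\ref{assump:Novikov} is exactly Novikov's condition. Girsanov then yields $\tilde\PP$ under which $X$ is driftless, so $\tilde\PP\circ X^{-1}=\mathbb{Q}$ by uniqueness in law. Since $Z^{[k]}_T>0$, substituting $\d B_s=\sigma^{-1}(X_s)\bigl(\d X_s-b_k(X_s)\,\d s\bigr)$ gives $\d\PP_k/\d\tilde\PP=1/Z^{[k]}_T=e^{F_k^*(X)}$. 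Pushing this forward to path space gives exactly the density you want.

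Alternatively, keep your direction but prove $\EE_{\mathbb{Q}}[\mathcal{E}_T]=1$ by localization, which needs no Novikov condition at all. Let $\tau_n$ be the exit time from the ball of radius $n$. Up to $\tau_n$ the continuous function $\theta_k$ is bounded, so the stopped exponential is a true martingale, and the tilted measure makes $\bar X$ solve the drifted SDE up to $\tau_n$. Hence $\EE_{\mathbb{Q}}[\mathcal{E}_T]\ge\EE_{\mathbb{Q}}\bigl[\mathcal{E}_{T\wedge\tau_n}\mathbbm{1}_{\{\tau_n>T\}}\bigr]=\PP_k(\tau_n>T)\to1$, by non-explosion of the Lipschitz SDE. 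The reverse inequality $\EE_{\mathbb{Q}}[\mathcal{E}_T]\le1$ follows from the supermartingale property.
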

Taking the Bayes classifier $g^*$ as a reference, the performance of any classifier $g\in\mathbb{G}$ is evaluated using the excess classification risk 
\begin{equation}\label{eq:def-excess classification risk}
   \calR(g)-\calR(g^*). 
\end{equation} This criterion will be used to assess the performance of our neural network based classifier in the sequel.

%Based on the Bayes classifier $g^*$ defined in \eqref{eq:def-classifier-bayes}, the performance of any classifier $g$ is evaluated using the excess classification risk, defined as $\calR(g)-\calR(g^*)$. This criterion is also used to assess the performance of our neural network based classifier in the sequel. 

%\rr the proposed classifier in this paper the preformance is evaluated by using the excess classification risk ... \black

\subsection{Construction of the NN-based plug-in classifier}\label{subsec:construction-nn-classifier}

Using the characterization of the Bayes classifier established in Proposition~\ref{prop:bayes-classifier}, we describe the construction of the NN-based plug-in classifier in two steps.

\paragraph{Step 1: Time discretization of $F_k^*$ and plug-in classification.}
We begin by introducing a time-discretized version of the functional $F_k^*$ defined in \eqref{eq:def-F}. Given a discretely observed trajectory $(X_{t_0}, \dots, X_{t_{M}})$ of the diffusion process $X$, we define 
\begin{align}\label{eq:F-bar}
\bar{F}_k(X)
\coloneqq
&\sum_{m=0}^{M-1} b_k(X_{t_m})^{\top} \big(\sigma\sigma^{\top}\big)^{-1}(X_{t_m})\,(X_{t_{m+1}}\!\!-X_{t_m}) -\frac{\Delta}{2}\sum_{m=0}^{M-1} \big\lvert \sigma^{-1} (X_{t_m})\,b_k(X_{t_m}) \big\rvert^2.
\end{align}
Since the drift functions \( b_k \) are unknown and must be estimated, we assume that, for each \( k \in \mathcal Y \), a nonparametric estimator \( \widehat b_k \) of \( b_k \) is available. Plugging these estimators into \eqref{eq:F-bar} yields the following implementable score function:
\begin{align}\label{eq:F-hat}
\!\!\!\widehat{F}_k(X)
\coloneqq
\sum_{m=0}^{M-1} \widehat{b}_k(X_{t_m})^{\top} \big(\sigma\sigma^{\top}\big)^{-1}(X_{t_m})\,(X_{t_{m+1}}\!\!-X_{t_m}) -\frac{\Delta}{2}\sum_{m=0}^{M-1} \big\lvert \sigma^{-1} (X_{t_m})\,\widehat{b}_k(X_{t_m}) \big\rvert^2.
\end{align}
The associated probabilities and the resulting plug-in classifier are then defined by
\begin{equation}\label{eq:def-pi-hat-g-hat}
\widehat\pi_k(X) = \phi_k\big( \widehat{F}(X) \big), \quad \widehat g(X)\in\argmax_{k\in\mathcal{Y}} \widehat \pi_k(X),
\end{equation}
where $\widehat{F}(X)=(\widehat{F}_1(X),\dots, \widehat{F}_K(X))$ and $\phi_k,\,k\in\calY$ are softmax functions defined by \eqref{eq:def-phi}. 
%The resulting plug-in classifier is then defined by
%\begin{equation}\label{eq:def-g-hat}
%\end{equation}

\paragraph{Step 2: Drift estimation.}  
We now describe the construction of the nonparametric estimators \( \widehat b_k \) of the drift functions \( b_k \) for each class \( k \in \mathcal Y \).
For a fixed label \( k \), we estimate \( b_k \) on a compact set $\mathcal{K}$ using the class-specific training dataset \( \mathcal D_{N_k}^{[k]} \) and neural networks defined further in \eqref{eq:sparseNN}. The compact set \(\mathcal K\) can be selected based on the distribution of the training data, for instance via sample coverage or a sample-splitting procedure. 
Following the approach of \citet{zhao2025drift}, the drift function is estimated component-wise using feedforward neural networks.

Let $\mathcal{F}_{L,\mathbf{p}}$ denote the class of feedforward neural networks with $L$ hidden layers and layer widths $\mathbf{p}=(p_0,p_1,\dots,p_{L+1})\in\mathbb{N}^{L+2}$, where $p_0=d$ is the input dimension and $p_{L+1}=1$ is the output dimension. Each function $f\in\mathcal{F}_{L,\mathbf{p}}$ maps $\mathbb{R}^d$ to $\mathbb{R}$ and admits the representation
\begin{equation}\label{eq:NN}
f(x)
=
W_L \sigma_{\mathbf v_L}
W_{L-1} \sigma_{\mathbf v_{L-1}}
\cdots
W_1 \sigma_{\mathbf v_1}
W_0 x,
\end{equation}
where $W_j$ is a $p_{j+1}\times p_j$ weight matrix. The activation function $\sigma(x)=\max(x,0)$ is the ReLU function applied component-wise, and for $\mathbf v=(v_1,\dots,v_r)\in\mathbb{R}^r$, the shifted ReLU operator $\sigma_{\mathbf v}:\mathbb{R}^r\to\mathbb{R}^r$ is defined by
\[
\sigma_{\mathbf v}\big((y_1,\dots,y_r)^\top\big)
=
\big(\sigma(y_1-v_1),\dots,\sigma(y_r-v_r)\big)^\top.
\]
In this paper, we consider sparse neural networks with sparsity level $s$ and impose a uniform boundedness constraint with constant $F>0$. We define the class of admissible neural network estimators as
\begin{align}%\label{eq:def-nn-set}
&\mathcal{F}(L, \mathbf{p}, s, F) 
:= \Big\{ f\mathbbm{1}_{\mathcal{K}} \::\: f\in \mathcal{F}_{L, \mathbf{p}} \text{ such that }  \max_{j=0,\dots,L} \big( \|W_j\|_{\infty} \vee |\mathbf{v}_j|_{\infty} \big) \le 1,\; \big\| f \big\|_{\infty} \le F,\nonumber \\
&\hspace{3cm}   \text{ and } \sum_{j=0}^{L} \big( \|W_j\|_0 + |\mathbf{v}_j|_0 \big) \le s \Big\}.
\label{eq:sparseNN}
\end{align}
%\rr [yating: we need to explain here why we have $\mathbbm{1}_{\rr [0,1]^d}$] [how we choose the parameter F in the numerical example ? ] \black
For every $i\in\{1, ..., d\}$, the estimator $\widehat b_k^i$ of the $i$-th component of $b_k$ is obtained by minimizing the empirical loss
\begin{equation}\label{eq:train-nn-loss}
\mathcal{Q}^i_{\mathcal D_{N_k}^{[k]}}(f)
\coloneqq\!
\frac{1}{N_kM}
\sum_{n=1}^{N_k}
\sum_{m=0}^{M-1}
\left(
Y^{[k],(n),i}_{t_m}
-
f\big(\bar X^{[k],(n)}_{t_m}\big)
\right)^2
\end{equation}
over the class $\mathcal{F}(L,\mathbf{p},s,F)$, where
\begin{equation}\label{eq:defY}
Y^{[k],(n),i}_{t_m}
\coloneqq
\frac{1}{\Delta}
\left(
\bar X^{[k],(n),\,i}_{t_{m+1}} - \bar X^{[k],(n),\,i}_{t_m}
\right)
\end{equation}
denotes the discrete-time increment of the $i$-th coordinate of the process
$\bar X^{[k],(n)} = (\bar X^{[k],(n)}_{t_0}, \dots, \bar X^{[k],(n)}_{t_M})$.

\subsection{Main Theorems}

We now present the main theoretical results of this paper, which establish convergence rates for the proposed neural network–based plug-in classifier \( \widehat g \) in terms of excess classification risk \eqref{eq:def-excess classification risk}.
For each class \(k\in\mathcal Y\), let
\begin{equation}\label{eq:def-estimation-error-all}
\mathcal{E}(\widehat b_k, b_k)
\coloneqq
\EE \!\left[
\frac{1}{M}\sum_{m=0}^{M-1}
\Big(\widehat b_k(X_{t_m})-b_k(X_{t_m})\Big)^2
\right]
\end{equation}
denote the global drift estimation error, where \(X=(X_t)_{t\in[0,T]}\) is an independent trajectory solving \eqref{eq:sde} and independent of the training dataset \(\mathcal D_N\). Note that in the definition of \(\mathcal{E}(\widehat b_k,b_k)\), the test process \(X\) is not conditioned on any specific class label. % in other words, \(X\) may originate from any class in \(\mathcal Y\).
Theorem~\ref{thm:main-decomposition} establishes a decomposition of the excess classification risk
\(\mathcal R(\widehat g)-\mathcal R(g^*)\) in terms of the drift estimation errors \(\mathcal{E}(\widehat b_k,b_k)\) and the time discretization step \(\Delta\).

In practice, however, each drift \(b_k\) is estimated solely from training trajectories belonging to class \(k\).
Accordingly, we relate the global estimation error \(\mathcal{E}(\widehat b_k,b_k)\) to the class-conditional estimation error
\begin{equation}\label{eq:def-estimation-error-class-k}
\!\!\mathcal{E}_j(\widehat b_k,b_k)
\coloneqq
\EE_j \!\left[
\frac{1}{M}\!\sum_{m=0}^{M-1}\!
\Big(\widehat b_k(X_{t_m})-b_k(X_{t_m})\Big)^2
\right],\!
\end{equation}
where $\EE_j$ denotes the expectation under $\PP_j$, and under $\PP_j$, the trajectory \(X=(X_t)_{t\in[0,T]}\) has label \(Y=j\), see further Lemma~\ref{lem:relation-E-with-Ek}.
Then Theorem~\ref{thm:main-NN} further specializes this relation to neural network drift estimators constructed as described in Section~\ref{subsec:construction-nn-classifier}, yielding explicit convergence rates for the classification error. Proofs of Theorems \ref{thm:main-decomposition} and \ref{thm:main-NN} are given in Appendix~\ref{sec:app-a}.

%Proof sketches for  Section~\ref{sec:ProofSketch}, and  are deferred to 

\begin{theorem}\label{thm:main-decomposition}
Assume that Assumptions~\ref{assump:Lipschitz}, \ref{assump:Novikov} and \ref{assump:main-theorem} hold.
For each \( k \in \mathcal Y \), let \( \widehat b_k \) be an estimator of the drift function \( b_k \) such that 
$\sup_{x\in\RD} |\widehat b_k(x)|\leq \widehat b_{\max}$ for some constant $\widehat b_{\max}>0$. Let \( \widehat g \) denote the plug-in classifier defined in \eqref{eq:def-pi-hat-g-hat} associated with \( \widehat b_k, \,k\in\calY \). 
Then there exists a constant \( C_{\Lambda, \mathfrak{C}, b_{\max}} > 0 \), such that
\begin{equation}\label{eq:inequality-in-main-thm}
    \calR(\widehat{g})-\mathcal{R}(g^*)\leq K^2 C_{\Lambda, \mathfrak{C}, \widehat b_{\max}}\big(\sqrt{\Delta} + \max_{k\in\calY}\calE(\widehat b_k, b_k)^{\frac{1}{2}}\big).
\end{equation}
\end{theorem}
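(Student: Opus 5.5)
We start from the standard plug-in bound for multiclass classification. Let $\pi_k^*(X)=\phi_k(F^*(X))$ as in Proposition~\ref{prop:bayes-classifier}. Since $\widehat g(X)$ maximizes $\widehat\pi_k(X)$ and $g^*(X)$ maximizes $\pi^*_k(X)$, conditioning on $X$ gives
\[
\calR(\widehat g)-\calR(g^*)=\EE\big[\pi^*_{g^*(X)}(X)-\pi^*_{\widehat g(X)}(X)\big]\le 2\,\EE\Big[\max_{k\in\calY}|\widehat\pi_k(X)-\pi_k^*(X)|\Big]\le 2\sum_{k\in\calY}\EE|\widehat\pi_k(X)-\pi^*_k(X)|.
\]
Here $\widehat\pi_k$ uses only the discrete observations, and $\widehat g$ is measurable with respect to the path, so it lies in $\mathbb{G}$. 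The softmax maps $\phi_k$ are Lipschitz with respect to $|\cdot|_\infty$. Indeed $\partial_j\phi_k=\phi_k(\delta_{jk}-\phi_j)$, so $\sum_j|\partial_j\phi_k|\le 2$, uniformly in the priors. Hence $|\widehat\pi_k-\pi_k^*|\le 2\max_j|\widehat F_j-F^*_j|\le 2\sum_j|\widehat F_j-F^*_j|$. Summing over $k$ produces the factor $K^2$, and the task reduces to showing, for each $j$,
\[
\EE|\widehat F_j(X)-F^*_j(X)|\le C\big(\sqrt\Delta+\calE(\widehat b_j,b_j)^{1/2}\big).
\]
We split this as $|\widehat F_j-\bar F_j|+|\bar F_j-F^*_j|$.

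\textbf{Discretization term.} We write $\d X_s=b_Y(X_s)\d s+\sigma(X_s)\d B_s$ inside $F_j^*$ and inside $\bar F_j$. The difference $\bar F_j-F_j^*$ then consists of three kinds of terms.
\begin{itemize}
\item Stochastic integrals with integrand $\big(h(X_{\underline s})-h(X_s)\big)^\top\sigma(X_s)$, where $h=a^{-1}b_j$ and $\underline s=t_m$ on $[t_m,t_{m+1})$. By Itô's isometry these are bounded in $L^2$ by $\EE\int_0^T|h(X_{\underline s})-h(X_s)|^2|\sigma|^2\d s$.
\item Riemann-type terms $\int(h(X_{\underline s})^\top b_Y(X_s)-h(X_s)^\top b_Y(X_s))\d s$.
\item The analogous terms for the quadratic part $|\sigma^{-1}b_j|^2=b_j^\top a^{-1}b_j$.
\end{itemize}
Under Assumptions~\ref{assump:Lipschitz} and~\ref{assump:main-theorem}, $h$ is locally Lipschitz with linear-growth constants: $|h(x)-h(y)|\le \Lambda L_b|x-y|+L_{a^{-1}}|b_j(x)||x-y|$. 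Combining this with $\sup_t\EE|X_t|^4<\infty$ (from $\EE|X_0|^4<\infty$ and Lipschitz coefficients) and $\EE|X_t-X_s|^4\le \mathfrak C|t-s|^2$, Cauchy–Schwarz gives each term the bound $\mathfrak C\sqrt\Delta$ in $L^1$. The label $Y$ is random, but each drift $b_Y$ is one of finitely many Lipschitz functions, so the moment bounds hold uniformly.

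\textbf{Estimation term.} Write $\delta=\widehat b_j-b_j$. Then
\[
\widehat F_j-\bar F_j=\sum_m\delta(X_{t_m})^\top a^{-1}(X_{t_m})\big(X_{t_{m+1}}-X_{t_m}\big)-\frac\Delta2\sum_m\big(\widehat b_j^\top a^{-1}\widehat b_j-b_j^\top a^{-1}b_j\big)(X_{t_m}).
\]
We expand the increment as $\int_{t_m}^{t_{m+1}}b_Y(X_s)\d s+\int_{t_m}^{t_{m+1}}\sigma(X_s)\d B_s$.
\begin{itemize}
\item Martingale part: its $L^2$ norm is at most $\Lambda^2\big(\EE\sum_m\Delta|\delta(X_{t_m})|^2\big)^{1/2}\le\Lambda^2\sqrt{T\,\calE(\widehat b_j,b_j)}$. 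Here $\delta(X_{t_m})$ is $\calF_{t_m}$-measurable, because $\widehat b_j$ depends only on the independent training data.
\item Drift part: it is bounded by $\Lambda\sum_m\Delta|\delta(X_{t_m})|\,\sup_{s}|b_Y(X_s)|$. Cauchy–Schwarz together with $L^2$ moments of the sup gives $\mathfrak C\,\calE^{1/2}$.
\item Quadratic part: it equals $\delta^\top a^{-1}(\widehat b_j+b_j)$. Its size is controlled by $\Lambda|\delta|(\widehat b_{\max}+|b_j|)$, and Cauchy–Schwarz again gives $C\,\calE^{1/2}$.
\end{itemize}

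The main obstacle is the discretization term. First, $a^{-1}b_j$ is only locally Lipschitz with linear growth, which requires the fourth-moment bookkeeping above. Second, we must check that Itô's isometry applies to $F^*_j$ written through $\d X$ under the mixture law of $(X,Y)$. We handle the second point by conditioning on $Y=k$. On that event, $X$ is a strong solution driven by $B$, which is independent of $Y$, so all estimates can be done under each $\PP_k$ and then averaged with the weights $\mathfrak p_k$.
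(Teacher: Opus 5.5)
Your proposal is correct and follows essentially the same route as the paper. Both arguments use the bound $\calR(\widehat g)-\calR(g^*)\le 2\sum_k\EE|\widehat\pi_k-\pi_k^*|$ and the Lipschitz continuity of the softmax, then split through $\bar F_k$. The discretization error is handled by It\^o's isometry plus fourth-moment and Cauchy--Schwarz estimates, using the same local Lipschitz bound on $a^{-1}b_k$. The estimation error is split into martingale, drift and quadratic parts. The remaining differences are minor:
\begin{itemize}
\item You take the Lipschitz constant of $\phi_k$ with respect to $|\cdot|_\infty$ rather than the Euclidean norm.
\item In the drift part you use a moment bound on $\sup_s|X_s|$, whereas the paper applies Cauchy--Schwarz in time and so needs only $\sup_t\|X_t\|_2$.
\item Your explicit treatment of adaptedness, via independence of the training data and conditioning on $Y$, makes precise points the paper leaves implicit.
\end{itemize}
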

We now provide an upper bound on the excess classification risk when the estimators
\(\widehat b_k\), \(k\in\mathcal Y\), are constructed using neural networks.
To simplify the notation, we assume that the sample sizes satisfy \(N_k=\tfrac{N}{K}\) for all
\(k\in\mathcal Y\).
We first introduce the error arising from the neural network training procedure.
Specifically, given the empirical loss $\mathcal{Q}^i_{\mathcal D_{N_k}^{[k]}}(\widehat{f}_i)$ in \eqref{eq:train-nn-loss} and its exact minimizer
over the function class \(\mathcal{F}(L,\mathbf{p},s,F)\), we define
\begin{align}\label{eq:def-psi-i}
&\Psi^{\mathcal{F}, [k],i}\big(\widehat f_i\big)
\coloneqq \EE\Big[
\mathcal{Q}^i_{\mathcal D_{N_k}^{[k]}}(\widehat f_i\,)-\inf_{f\in\mathcal{F}(L,\mathbf{p},s,F)} \mathcal{Q}^i_{\mathcal D_{N_k}^{[k]}}(f)
\Big]\:\text{  and  }\:\Psi^{\mathcal{F}, [k]}\big(\widehat f\big)\coloneqq \max_{1\leq i\leq d}\Psi^{\mathcal{F}, [k],i}\big(\widehat f_i\big) \nonumber
\end{align}
with $\widehat f=(\widehat f_1, \dots, \widehat f_d)$. 
Let 
\begin{align}
\mathcal{C}_{r}^{\beta}(D, \widetilde{K})\!=\!\bigg\{f:D\subset \RR^r\rightarrow \RR :\!\!
\!\!\!\sum_{\boldsymbol{\alpha}:|\boldsymbol{\alpha}|<\beta}\!\!\Vert \partial^{\boldsymbol{\alpha}}f\Vert_{\sup}+\!\!\!\!\!\sum_{\boldsymbol{\alpha}:|\boldsymbol{\alpha}|=\lfloor\beta\rfloor}\sup_{x,y\in D, x\neq y}\!\!\frac{|\partial^{\boldsymbol{\alpha}}f(x)-\partial^{\boldsymbol{\alpha}}f(y)|}{|x-y|_{\infty}^{\beta - \lfloor\beta\rfloor}}\leq \widetilde{K}\bigg\},\nonumber
\end{align}
and let $\mathcal{G}(q,\mathbf{d}, \mathbf{t},\boldsymbol{\beta}, \widetilde{K})$ be the function space defined in \citet{SchmidtHieber2020}:
\begin{align}
&\mathcal{G}(q,\mathbf{d}, \mathbf{t},\boldsymbol{\beta}, \widetilde{K})\coloneqq \big\{ f=g_q\circ \dots g_0: \:  g_i=(g_{ij})_j:[a_i, b_i]^{d_i}\rightarrow [a_{i+1}, b_{i+1}]^{d_{i+1}}, \nonumber\\
&\hspace{3cm}  g_{ij}\in\mathcal{C}_{t_i}^{\beta_i}([a_i, b_i]^{t_i}, \widetilde{K}), \text{ for some }|a_i|, |b_i|\leq \widetilde{K}\big\}.\nonumber
\end{align}
with $\mathbf{d}\coloneqq (d_0, \dots, d_{q+1})$, $\mathbf{t}\coloneqq(t_0, \dots, t_q)$, $\boldsymbol{\beta}\coloneqq (\beta_0,\dots, \beta_q)$.
Define $\beta_i^*\coloneqq \beta_i\prod_{l=i+1}^{q}(\beta_l\wedge 1)$ and 
\begin{equation}\label{eq:def-phi-N}
    \phi_{N}\coloneqq \max_{0\leq i \leq q}N^{-\frac{2\beta_i^*}{2\beta_{i}^*+t_i}}.
\end{equation}
\begin{assumption}
\label{assum:NN-parameter}
$F\geq \max(\sup_{x\in\mathcal{K}}|b(x)|,1)$, $L\geq1$, $s\geq2$, $N\geq2K$ and $\Delta \leq 1$. %\Yating{[Add other constant assumptions here if needed]}
\end{assumption}
\begin{theorem}\label{thm:main-NN}
Assume that \ref{assump:Lipschitz}, \ref{assump:Novikov}, \ref{assump:main-theorem} and \ref{assum:NN-parameter} hold. Assume moreover that, for every $k\in\calY$, %$b_k$ has a compact support $\mathcal{K}$ and satisfies 
$b_k\in \mathcal{G}(q,\mathbf{d}, \mathbf{t},\boldsymbol{\beta}, \widetilde{K})$, %Moreover, assume 
and that the neural network function class  $\nnset$ satisfies 
\begin{enumerate}
    \item[$\mathrm{(i)}$] $F\geq \max(K,1)$, $L\asymp \log_2N$, %$\sum_{i=0}^q \log_2(4t_i \vee 4 \beta_i)\log_2 n\leq L \lesssim n\phi_n $,
    \item[$\mathrm{(ii)}$] $N\phi_N\lesssim \min_{i=1, ..., L}p_i$, $s\asymp N\phi_N \log N$.
\end{enumerate}
Then there exists a constant $\widetilde{C}$  depending  on $q, \mathbf{d}, \mathbf{t}, \boldsymbol{\beta}, F, \mathfrak{C}, \mathcal{K}$ such that if \[\Delta \lesssim \phi_N\log^3N \text{ and }\max_{k\in\calY}\Psi^{\calF}(\widehat b_k)\leq C \phi_N\log^3N,\] then for any $\varepsilon\in(0,\frac{1}{4}]$, it holds %$\calR (\hat f, f_0)\leq \widetilde{C} \phi_N \log^3 N$.
\begin{equation}
    \calR(\widehat{g})-\mathcal{R}(g^*)\leq C_{K,\Lambda, \mathfrak{C}, \widetilde{C},\mathcal{K}, \varepsilon}\Big(\sqrt{\Delta} + \phi_N^{\frac{1}{2}-\varepsilon} \log^{\frac{3}{2}-
\varepsilon} N\Big).\nonumber
\end{equation}
%where the constant $C_{K,\Lambda, \mathfrak{C}, \widetilde{C},\mathcal{K}, \varepsilon}$
\end{theorem}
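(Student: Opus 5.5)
The plan is to combine Theorem~\ref{thm:main-decomposition} with a neural-network drift estimation bound taken from \citet{zhao2025drift}, and to handle the mismatch between the global error $\calE(\widehat b_k,b_k)$ and the class-conditional error $\calE_k(\widehat b_k,b_k)$ through a change-of-measure argument (Lemma~\ref{lem:relation-E-with-Ek}).

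First, every estimator in $\nnset$ satisfies $\|\widehat b_k^i\|_\infty\le F$. Hence $\sup_x|\widehat b_k(x)|\le \sqrt d F=:\widehat b_{\max}$, and Theorem~\ref{thm:main-decomposition} applies. It gives
\[
\calR(\widehat g)-\calR(g^*)\le K^2C\big(\sqrt\Delta+\max_k\calE(\widehat b_k,b_k)^{1/2}\big).
\]
It therefore suffices to show that $\calE(\widehat b_k,b_k)\lesssim (\phi_N\log^3N)^{1-2\varepsilon}$.

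Second, I decompose $\calE(\widehat b_k,b_k)=\sum_j\mathfrak p_j\calE_j(\widehat b_k,b_k)$, since the test trajectory $X$ has label $j$ with probability $\mathfrak p_j$. Only $\calE_k$ is controlled by training on $\mathcal D^{[k]}_{N_k}$. To pass from $\PP_j$ to $\PP_k$, I use Girsanov under Assumption~\ref{assump:Novikov}. Under both measures the path law is absolutely continuous with respect to the driftless reference law, with density $\exp(F_j^*(X))$, so that
\[
\frac{\d\PP_j}{\d\PP_k}=\exp\big(F^*_j(X)-F^*_k(X)\big).
\]
Writing $G=\frac1M\sum_m|\widehat b_k-b_k|^2(X_{t_m})$, Hölder's inequality with exponents $(p,p')$ gives
\[
\EE_j[G]=\EE_k\Big[G\,\tfrac{\d\PP_j}{\d\PP_k}\Big]\le \EE_k[G^{p}]^{1/p}\,\Big\|\tfrac{\d\PP_j}{\d\PP_k}\Big\|_{L^{p'}(\PP_k)}.
\]
Since $G$ is bounded on $\mathcal K$ (both $\widehat b_k$ and $b_k$ are bounded there by $\sqrt dF$), we have $\EE_k[G^p]\le C^{p-1}\EE_k[G]$. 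This yields
\[
\calE_j\lesssim \calE_k^{1/p}\,\Big\|\tfrac{\d\PP_j}{\d\PP_k}\Big\|_{L^{p'}(\PP_k)}.
\]
Taking $1/p=1-2\varepsilon$ produces the loss of $\varepsilon$ in the exponent, and the constant depends on $\varepsilon$ through $p'$.

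The main obstacle is showing that the $p'$-moment of the likelihood ratio is finite, with $p'=1/(2\varepsilon)$ large. Outside $\mathcal K$ one should interpret $\widehat b_k=0$ and handle the corresponding error separately; alternatively, I would assume as in \citet{zhao2025drift} that the error is measured on $\mathcal K$. For the moment bound I would write the ratio as a stochastic exponential in the $\PP_k$-Brownian motion with integrand $\sigma^{-1}(b_j-b_k)$. Then
\[
\EE_k\big[\mathcal E_T^{p'}\big]\le \EE_k\Big[\exp\Big(c_{p'}\int_0^T|\sigma^{-1}(b_j-b_k)|^2\d s\Big)\Big]^{1/2}
\]
by the standard Cauchy--Schwarz trick. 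Using $|\sigma^{-1}|\le$ a bound from Assumption~\ref{assump:main-theorem} together with boundedness of the drifts where relevant (or linear growth plus Gaussian-type moments of $\sup_t|X_t|$ for small $T$), this expectation is finite. If only linear growth is available, I would fall back on localizing to the event $\{\sup_t|X_t|\le R\}$ and optimizing over $R$, which may cost additional logarithmic factors.

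Third, I plug in the estimation bound. By the neural-network result of \citet{zhao2025drift} for compositional drifts, under (i)--(ii), $\Delta\lesssim\phi_N\log^3N$ and the training-error condition $\Psi^{\calF}(\widehat b_k)\le C\phi_N\log^3N$, we get $\calE_k(\widehat b_k,b_k)\le\widetilde C\phi_{N/K}\log^3(N/K)\lesssim_K\phi_N\log^3N$. Combining the three steps gives $\max_k\calE^{1/2}\lesssim(\phi_N\log^3N)^{1/2-\varepsilon}$, which is bounded by $\phi_N^{1/2-\varepsilon}\log^{3/2-\varepsilon}N$ up to a log-power discrepancy absorbed by adjusting $\varepsilon$. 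This gives the stated bound.
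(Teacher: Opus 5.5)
Your proposal is correct and follows the paper's proof. Both combine Theorem~\ref{thm:main-decomposition}, the split $\calE(\widehat b_k,b_k)=\sum_j\mathfrak p_j\calE_j(\widehat b_k,b_k)$, a H\"older change of measure from $\PP_j$ to $\PP_k$ that exploits the boundedness of $G$ (this is Lemma~\ref{lem:relation-E-with-Ek}, applied with $2\varepsilon$ in place of $\varepsilon$), and the class-conditional rate from \citet{zhao2025drift}. The only difference is cosmetic: the paper first bounds $\d\PP_j/\d\PP_k$ by $C\exp(M_T^{j,k})$ and then uses the exponential-martingale identity, whereas you apply the Cauchy--Schwarz trick to the stochastic exponential; both rely on the boundedness of $\sigma^{-1}(b_j-b_k)$, and no adjustment of $\varepsilon$ is needed at the end since $(\phi_N\log^3N)^{1/2-\varepsilon}\le\phi_N^{1/2-\varepsilon}\log^{3/2-\varepsilon}N$ once $\log N\ge1$.
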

\begin{remark}[Compact support assumption]
The assumption in Theorem~\ref{thm:main-NN} that each $b_k$ is defined on a compact support through the definition of
$\mathcal{G}(q,\mathbf{d},\mathbf{t},\boldsymbol{\beta},\widetilde K)$ is not restrictive in practice.
Indeed, the drift estimator $\widehat b_k$ can only be learned accurately on regions where data are observed, which are bounded with high probability. More precisely, Assumption~\ref{assump:Lipschitz} implies that
$\sup_{t \in [0,T]} \EE[|X_t|^4] < \infty$ (see e.g. \citet[Proposition 7.2]{pages2018numerical}).
A simple application of Markov's inequality yields
\[ \forall \, t\in [0,T], \quad 
\PP\big(|X_t| \ge R\big)
\;\le\;
\frac{\sup_{t\in[0,T]}\EE[|X_t|^4]}{R^4}.
\]
%which converges to zero as $R \to \infty$.
Hence, for any prescribed accuracy level $\varepsilon > 0$, one can choose $R$ sufficiently large such that
$\PP(|X_t| \ge R) \le \varepsilon$.
Consequently, one may assume without loss of practical generality that the true drift function $b$ is also defined
on a compact set covering the observed data, without significantly affecting the statistical guarantees of the classifier.
%In contrast, for the practical success of the NN-based plug-in classifier, it is more important to have sufficient data in regions where the drift functions $b_k$ differ, as these regions are the ones that drive class separability.
%In contrast, the practical success of the NN-based plug-in classifier primarily depends on having sufficient data in regions of the state space where the drift functions $b_k$ differ, 
In contrast, for the practical success of the NN-based plug-in classifier, it is more important to have sufficient data in regions where the drift functions $b_k$ differ, since these regions determine the discriminative signal between classes.  %as these regions are the ones that drive class separability.
\end{remark}

%\rr     add a remark to say $b$ support compact is only a technical condition but not a real condition \black \rr if $b_k$ are not support compact, [add something like the compact set $\mathcal{K}$ contains all points of the difference of $b_k$ something like this] \black

%[TO DO]

%[TO DO]

%[TO DO]

%[TO DO]

%[TO DO]

%\rr corollary of aaai here 
%\begin{corollary}\label{cor:cor-of-thm-main} Suppose that Assumptions \ref{assum:lip} and \ref{assum:NN-parameter} hold.
%Assume moreover that $f_0\in \mathcal{G}(q,\mathbf{d}, \mathbf{t},\boldsymbol{\beta}, K)$ and the neural network estimators set $\nnset$ satisfies %$\hat{f}\in\nnset$ with
%\begin{enumerate}
%    \item[$\mathrm{(i)}$] $F\geq \max(K,1)$, $L\asymp \log_2N$ %$\sum_{i=0}^q \log_2(4t_i \vee 4 \beta_i)\log_2 n\leq L \lesssim n\phi_n $,
%    \item[$\mathrm{(ii)}$] $N\phi_N\lesssim \min_{i=1, ..., L}p_i$, $s\asymp N\phi_N \log N$.
%\end{enumerate}
%Then there exists a constant $C$  depending  on $q, \mathbf{d}, \mathbf{t}, \boldsymbol{\beta}, F, C_b, C_\sigma, L_b, L_\sigma, T$ such that if \[\Delta \lesssim \phi_N\log^3N \text{ and }\Psi^{\calF}(\hat f)\leq C \phi_N\log^3N,\] then $\calR (\hat f, f_0)\leq C \phi_N \log^3 N$.
%\end{corollary}
%\black

\section{Numerical Experiments}\label{sec:numerical-part}

We illustrate the performance of the NN-based classifier defined in \eqref{eq:def-pi-hat-g-hat} through two numerical experiments. 

In Section~\ref{sec:double-layer-example}, we consider  double-layer potential drifts inspired by \citet{zhao2025drift}. In the one-dimensional setting, we demonstrate that the proposed NN-based plug-in classifier outperforms the B-spline–based plug-in classifier introduced in \citet{Denis2024}. We also compare our approach with a direct neural network classifier that ignores the underlying SDE structure and learns class labels directly from the observed data, and we show that explicitly exploiting the diffusion structure leads to significantly improved classification performance. Furthermore, we demonstrate that the proposed NN-based plug-in classifier remains computationally tractable in higher dimensions ($d=2,5,10,50$) and achieves favorable convergence behavior.

In Section~\ref{sec:cos-example}, we study the example originally introduced in \citet{Denis2024}. In this case, we show that our NN-based plug-in classifier attains performance comparable to the method proposed therein, which is known to be close to the Bayes classifier.

\subsection{Example 1 : Double-Layer Potential Drifts}\label{sec:double-layer-example}
%\rr [TO DO I think the example in AAAI do not satisfy the Novikov condition if we do not assume that $b$ has a compact support. ] \black

\textbf{Experimental setup.} 
In this section, we consider a three-class classification problem, where each class is characterized by a distinct drift function defined as
\begin{equation}\label{eq:numerical-def-b_k}
   b_k(x)\coloneqq -x + \phi \big({\theta(s(x)+\alpha_k)}\big) \mathbf{1}_{d}, \;k=1,2,3,
\end{equation}
where $\mathbf{1}_{d}=(1,..., 1)^{\top}$, $s(x)=\tfrac{1}{d}\sum_{i=1}^dx_i$,   $(\alpha_1, \alpha_2, \alpha_3)=(0,1,-1)$ controls the separation between classes, and $\theta=5$ controls the intensity of the fluctuations.  The initial condition $X_0$ follows a standard normal distribution $\mathcal{N}(0, I_d)$, and the diffusion coefficient $\sigma(x)$ is set to the identity matrix. Figure~\ref{fig:true_bk_and_path} (left) illustrates the drift functions $b_k(x)$ for $k=1,2,3$, and Figure~\ref{fig:true_bk_and_path} (middle) shows representative sample paths for each class in the one-dimensional setting.

\begin{figure}[ht]
%  \vskip 0.2in
  \begin{center}
     \begin{subfigure}{0.3\linewidth}
        \includegraphics[width=\linewidth]{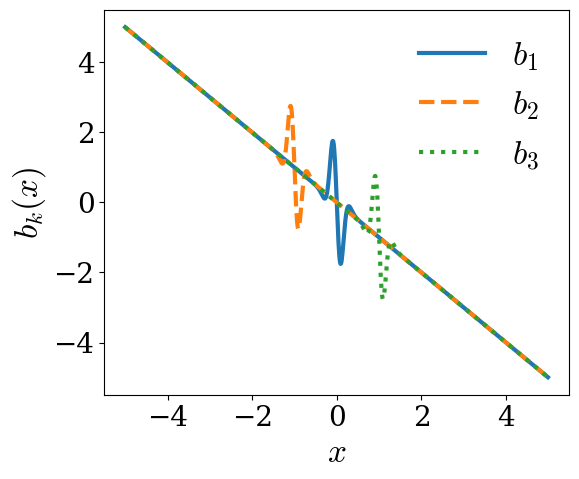}
  %      \caption{True $b_k,\, k\in\{1,2,3\}$}%\tag{1a}
 %       \label{fig:true_bk}
    \end{subfigure}
    \hfill
    \begin{subfigure}{0.3\linewidth}        \includegraphics[width=\linewidth]{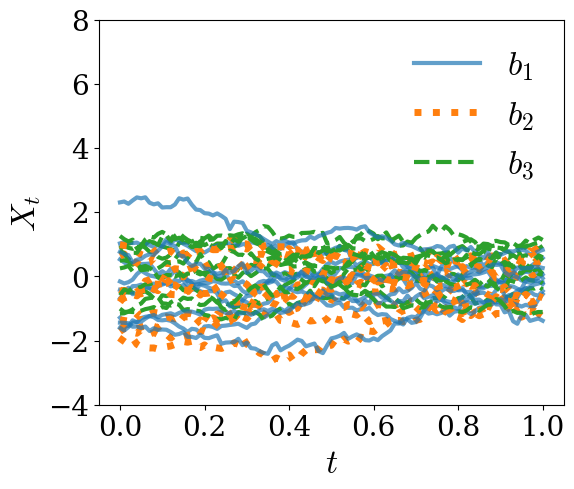}
 %      \caption{Sample paths for each class }
 %       \label{fig:true_bk-path}
    \end{subfigure}\hfill
    \begin{subfigure}{0.3\linewidth}        \includegraphics[width=\linewidth]{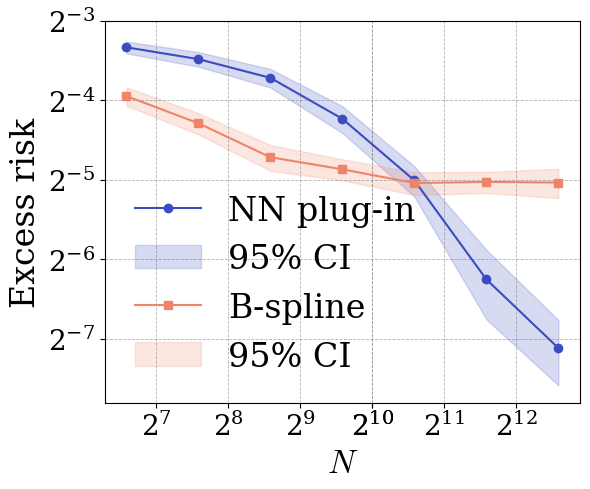}
 %      \caption{Sample paths for each class }
 %       \label{fig:true_bk-path}
    \end{subfigure}
% \caption{Figure 1}
\caption{True drift functions $b_k$, $k=1,2,3$ (left), sample paths from each class (middle), and comparison between the NN-based plug-in classifier and B-spline-based plug-in classifiers (right).}
   \label{fig:true_bk_and_path}
  \end{center}
\end{figure}

We fix the time horizon to $T = 1$ and the time step to $\Delta = 0.01$. Experiments are conducted for dimensions $d \in  \{1, 2, 5, 10, 50\}$, training sample sizes $ N_k \in \{ 2^5, 2^6, \cdots, 2^{12}\}$ for every label class $k\in\calY$, and a test sample size fixed at $N'=3000$. 

For the neural network estimator for $b_k$ in the definition of the plug-in classifier,  we use the  network architectures $\mathbf{p}= (d, 16, 32, 32, 16, 1)$ inspired by \citet{zhao2025drift}. The number of nonzero parameters $s$ is controlled as a proportion $s_{\mathrm{ratio}}$ of the total number of parameters, with $s_{\text{ratio}} = 0.75$. Training is performed using the Adam optimizer with a learning rate of $10^{-3}$. The number of training epochs is selected via early stopping. %, which is triggered after 20 consecutive epochs without improvement.
Additional implementation details are provided in the Appendix \ref{app:Experimental-3.1}.  For the B-spline based classifier, we strictly follow the same configuration as described in \citet[Section 3]{Denis2024}. %, namely, \rr blablablabla.

%[TO DO]\black

%[TO DO] 

Finally, for each experimental configuration, results are averaged over 50 independent repetitions. We report the mean error together with the corresponding 95\% confidence intervals. To study the convergence rate of the excess classification risk defined in \eqref{eq:def-excess classification risk}, we use log$_2$--log$_2$ plots of the empirical excess risk as a function of the sample size. Specifically, the horizontal axis corresponds to $\log_2N$, where $N$ denotes the number of training trajectories for all class $k$. We assume balanced classes, that is $N_k=\tfrac{N}{K}$. The vertical axis corresponds to the $\log_2$ of the excess classification risk. All implementations are carried out in Python and PyTorch.

%We compare three classifiers: the proposed NN-based plug-in classifier, a B-spline–based plug-in classifier \cite{Denis2024}, and a direct neural network classifier that ignores the underlying SDE structure \cite{bos2022}.

%Finally, for each experimental configuration, results are averaged over 50 independent repetitions, and we report the mean error together with the corresponding 95\% confidence intervals. 

%For the study of convergence rate of the excess classification risk defined in \eqref{eq:def-excess classification risk}, we plug  log-log error of the empirical convergence rate of the  with respect to sample $N$ for each label, which means the x-axis is $\log_2(N_k)$, where $N_k$ is the number of sample of label $k$, we assume that for each class $N_k$ are the same. the y-axis is the $\log_2$ of the excess classification risk for different classifier in the comparation, NN-based (ours), B-splined based (see \cite{Denis2024}) and direct NN claissifer ignoring the underlying sde struction (see \cite{bos2022})

%All implementations are performed using Python and  PyTorch.

\textbf{NN-based vs.\ B-spline–based plug-in classifiers.}
Figure~\ref{fig:true_bk_and_path} (right) shows that, in the one-dimensional setting, the proposed NN-based plug-in classifier achieves a faster convergence rate of the excess classification risk defined in \eqref{eq:def-excess classification risk} as the training sample size $N$ increases. In particular, for small sample sizes $N$, B-spline–based classifiers outperform neural networks classifiers; when $N$ is sufficiently large, the excess risk of the NN-based method continues to decrease significantly, whereas the B-spline–based approach shows a clear saturation effect. Moreover, as noted in \citet{zhao2025drift}, B-spline–based estimators of the drift function become computationally expensive in high-dimensional settings, which limits their practical applicability beyond low dimensions.

\begin{comment}

\begin{figure}[ht]
\begin{center}
    \includegraphics[width=0.5\linewidth]{figures-final/nnvsbspline.png}
\caption{NN-based plug-in classifier vs. B-spline-based plug-in classifier.}
\label{fig:NN-basedVSB-spline-based}
\end{center}
\end{figure}

\end{comment}

% due to their lower variance. However, as $N$ grows, their approximation error dominates, preventing further improvement in classification performance, while the NN-based classifier continues to benefit from additional data.
%Moreover, as noted in \cite{zhao2025drift}, B-spline–based drift estimators become computationally prohibitive in high-dimensional settings, which substantially limits their practical applicability beyond low-dimensional problems.
%\paragraph{NN-based vs. B-spline-based plug-in classifiers}
%Figure~\ref{fig:NN-basedVSB-spline-based} shows that our approach achieves a faster convergence rate of the excess risk defined in \eqref{eq:def-excess classification risk} with respect to the training sample size $N$ in the one-dimensional setting, when $N$ is large enough.  B-spline–based estimators works well when $N$ is small but when $N$ is large, error of B-spline do not decrease in the figure well NN decrease significantly. 

\textbf{NN-based plug-in classifier vs. trajectory-based classifiers.}
Figure~\ref{fig:NN-basedVSB-spline-based_and_NN-basedVSdirectNN} compares the proposed NN-based plug-in classifier with trajectory-based classifiers, including FNNs, RNNs, TCNs, and Transformers. These methods take the whole observed path as input and predict class labels directly, without exploiting the underlying SDE structure. As shown in Figure~\ref{fig:NN-basedVSB-spline-based_and_NN-basedVSdirectNN}, exploiting the diffusion structure substantially improves classification performance. A detailed discussion is provided in Appendix~\ref{app:trajectory-comparison}, including a theoretical comparison with FNN-based trajectory classifiers, whose convergence rate is of order $N^{-\frac{1}{2}\frac{(1+\alpha)\beta}{(1+\alpha)\beta+(M+1)d}}$ up to logarithmic factors, and numerical comparisons highlighting the effective use of data.

\begin{comment}
\textbf{NN-based plug-in classifier vs. trajectory-based classifier.}
Figure~\ref{fig:NN-basedVSB-spline-based_and_NN-basedVSdirectNN} compares the proposed NN-based plug-in classifier (blue circular markers) with trajectory-based classifiers (red square markers) that take the entire observed path as input, including feedforward neural networks (referred to as FNNs in what follows; see, e.g., \citet{bos2022}), RNNs, TCNs, and Transformers, which ignore the underlying SDE structure and predict class labels directly from the observed trajectories.

As shown in Figure~\ref{fig:NN-basedVSB-spline-based_and_NN-basedVSdirectNN}, exploiting the diffusion structure substantially improves classification performance. Direct trajectory-based classifiers treat each path as a high-dimensional input vector, whose effective dimension grows with the number of observation times, which can make learning statistically less efficient. A detailed discussion is provided in Appendix~\ref{app:trajectory-comparison}, including a theoretical comparison with FNN-based trajectory classifiers and numerical comparisons on the effective use of data.

\end{comment}

\begin{figure}[ht]
%  \vskip 0.2in
  \begin{center}
%     \begin{subfigure}{0.49\linewidth}
%        \includegraphics[width=\linewidth]{figures-final/nnvsbspline.png}
%        \caption{NN-based vs. B-spline-based plug-in classifiers}
%        \label{fig:NN-basedVSB-spline-based}
%    \end{subfigure}

    \begin{subfigure}{0.24\linewidth}
        \includegraphics[width=\linewidth]{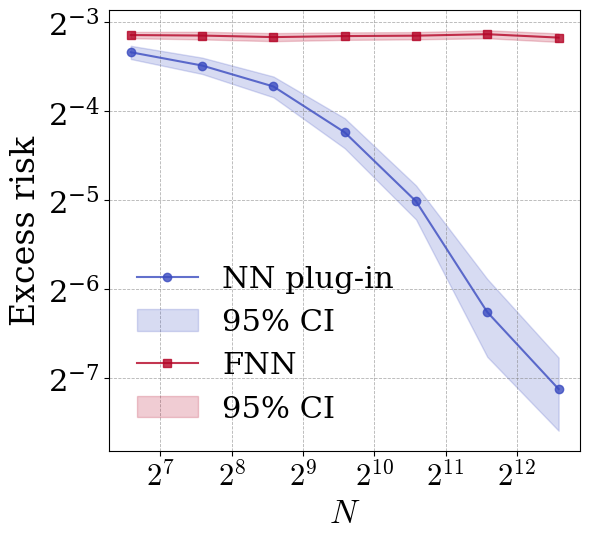}
%       \caption{NN-based plug-in classifier vs. direct NN classification}
 %       \label{fig:NN-basedVSdirectNN}
    \end{subfigure}
    \hfill
     \begin{subfigure}{0.24\linewidth}
        \includegraphics[width=\linewidth]{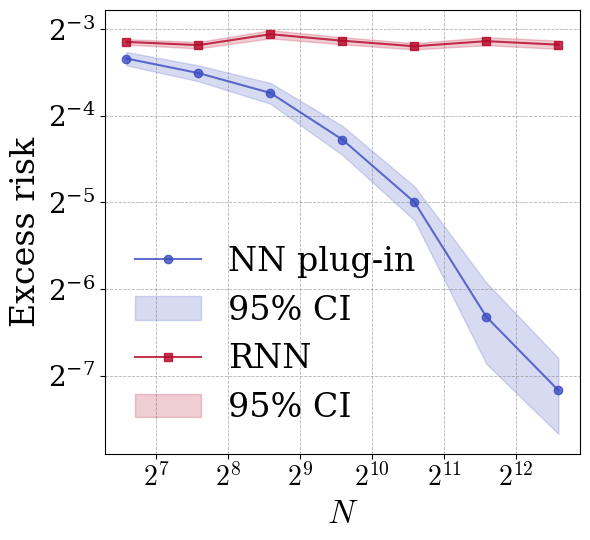}
%        \caption{NN-based vs. B-spline-based plug-in classifiers}
%        \label{fig:NN-basedVSB-spline-based}
    \end{subfigure}
     \begin{subfigure}{0.24\linewidth}
        \includegraphics[width=\linewidth]{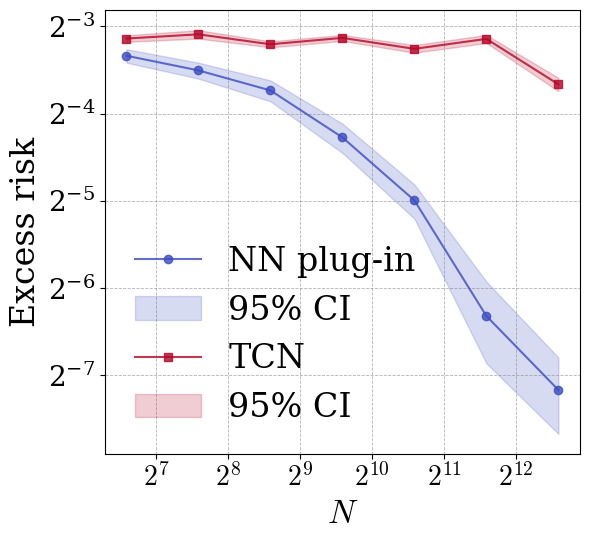}
%        \caption{NN-based vs. B-spline-based plug-in classifiers}
%        \label{fig:NN-basedVSB-spline-based}
    \end{subfigure}
    \hfill
         \begin{subfigure}{0.24\linewidth}
        \includegraphics[width=\linewidth]{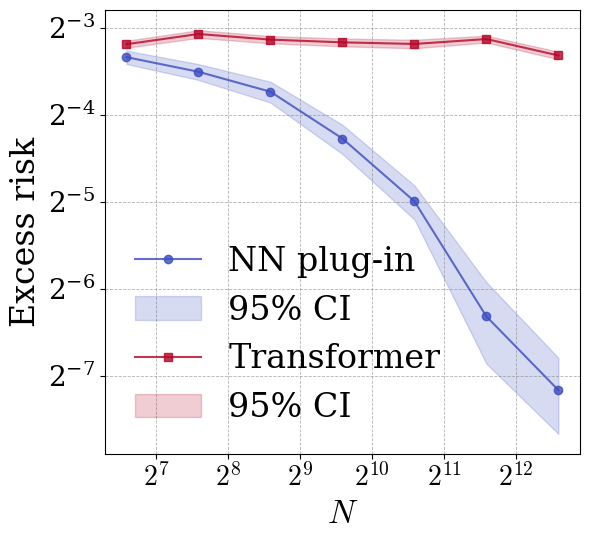}
%        \caption{NN-based vs. B-spline-based plug-in classifiers}
%        \label{fig:NN-basedVSB-spline-based}
    \end{subfigure}
    \caption{Comparison of the NN-based plug-in classifier with trajectory-based classifiers. From left to right: FNN, RNN, TCN, and Transformer.}
%\caption{Comparison between the NN-based plug-in classifier and trajectory-based classifiers: feedforward neural network (top left), RNN (top right), TCN (bottom left), and Transformer (bottom right).}
\label{fig:NN-basedVSB-spline-based_and_NN-basedVSdirectNN}
  \end{center}
\end{figure}

\begin{figure}[ht]
    \centering
    % First row
    \begin{subfigure}{0.24\linewidth}
        \includegraphics[width=\linewidth]{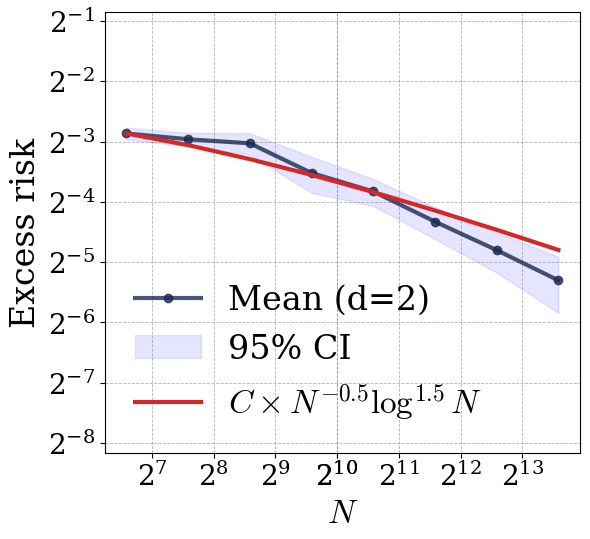}
%        \caption{}
        \label{fig:sub1}
    \end{subfigure}
    \hfill
    \begin{subfigure}{0.24\linewidth}
        \includegraphics[width=\linewidth]{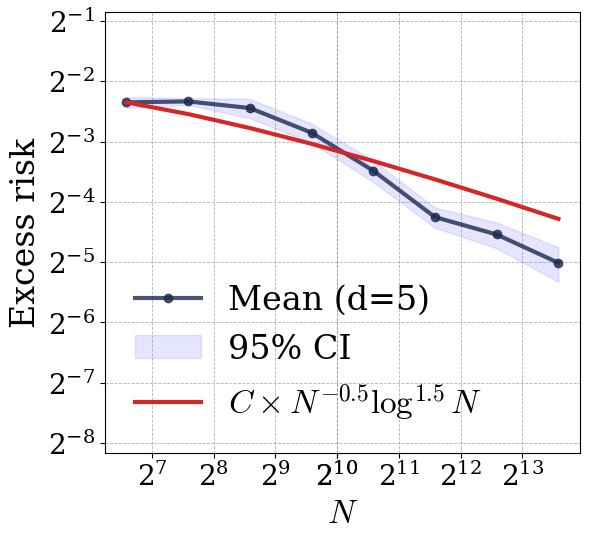}
%       \caption{}
        \label{fig:sub2}
    \end{subfigure}
    \begin{subfigure}{0.24\linewidth}
        \includegraphics[width=\linewidth]{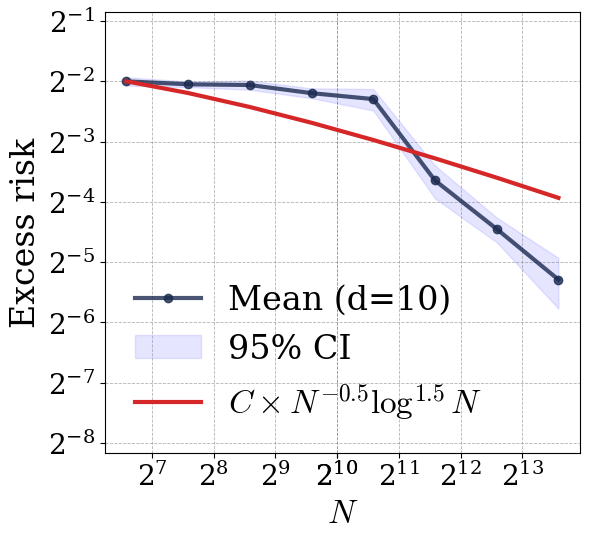}
%        \caption{}
        \label{fig:sub4}
    \end{subfigure}
    \hfill
    \begin{subfigure}{0.24\linewidth} 
        \includegraphics[width=\linewidth]{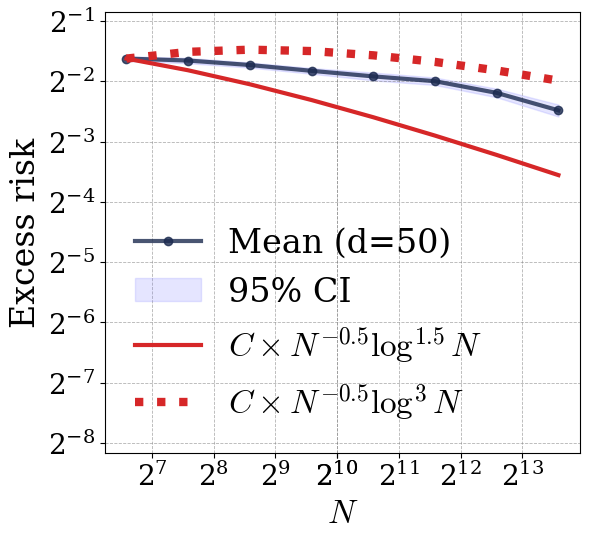}
   %     \caption{}
        \label{fig:sub5}
    \end{subfigure}
    \caption{Convergence rates of the excess classification risk of the NN-based classifier for $d = 2$, $5$, $10$, and $50$, from left to right.}
 %   \caption{Convergence rates of the excess classification risk of NN-based classifier for $ d = 2 $ (top left), $ d = 5 $ (top right), $ d = 10 $ (bottom left), and $ d = 50 $ (bottom right). }
    \label{fig:convergence}
\end{figure}

%Direct NN classification paper: 

\textbf{High-dimensional convergence rate of the NN-based plug-in classifier.}
%We next investigate the empirical convergence rate of the excess classification risk of the NN-based plug-in classifier in high-dimensional settings, with dimensions $d \in \{2,5,10,50\}$. The true drift functions $b_k$, $k = 1,2,3$, defined in \eqref{eq:numerical-def-b_k}, are sufficiently smooth. As a result, the theoretical convergence rate predicted by Theorem~\ref{thm:main-NN} is
%\[
%N^{-\frac{1}{2}+\varepsilon} \log^{\frac{3}{2}-\varepsilon} N,
%\]
%for any arbitrarily small $\varepsilon > 0$.
We next investigate the empirical convergence rate of the excess classification risk of the NN-based plug-in classifier in high-dimensional settings, with dimensions $d \in \{2,5,10,50\}$. The true drift functions $b_k$, $k = 1,2,3$, defined in \eqref{eq:numerical-def-b_k}, are sufficiently smooth. As a result, the theoretical convergence rate predicted by Theorem~\ref{thm:main-NN} is
$
N^{-\frac{1}{2}+\varepsilon} \log^{\frac{3}{2}-\varepsilon} N,
$
for any arbitrarily small $\varepsilon > 0$.

%Figure~\ref{fig:convergence} indicates that the empirical convergence rate of the NN-based plug-in classifier is consistent with a rate of order $N^{-1/2}(\log N)^a$ for $a\in \{\frac{3}{2},3\}$ across all considered dimensions. This observation is in agreement with the theoretical bound derived in Theorem~\ref{thm:main-NN}, suggesting that the rate is essentially sharp up to logarithmic factors. Given the narrow hyperparameter set for the architure of the neural network, it proves that even with a small set of hyperparameter, NN based classifier can acheive the good proformance in this task, which is numerically efficace. 

Figure~\ref{fig:convergence} indicates that the empirical convergence rate of the NN-based plug-in classifier is consistent with a rate of order $N^{-1/2}(\log N)^a$, for $a \in \{\tfrac{3}{2}, 3\}$, across all considered dimensions. This observation is in agreement with the theoretical bound derived in Theorem~\ref{thm:main-NN}, suggesting that the rate is essentially sharp up to logarithmic factors. Moreover, the experiments are conducted using a very limited set of neural network hyperparameters, highlighting its numerical efficiency.

%shows the empricial convergence rate of the log-log classification excess risk of our NN based plug-in classfier. The x axis is log N where N is the number of sample for every t

%[TO DO]

%[TO DO]

\subsection{Example 2: Example in \citet{Denis2024}}\label{sec:cos-example}

%\paragraph{Experimental setup}

In this section, we revisit the example originally introduced in \citet{Denis2024}, with drift functions defined by
\begin{equation}\label{eq:numerical-def-b_k-cos}
b_k(x) = \alpha_k \theta \Big(\tfrac{1}{4} + \tfrac{3}{4}\cos^2 x\Big),
\quad k = 1,2,3,
\end{equation}
$X_0=0$, and a diffusion coefficient defined by $\sigma(x)=0.1+\tfrac{0.9}{\sqrt{1+x^2}}$. Here, $(\alpha_1, \alpha_2, \alpha_3)=(\frac{1}{\theta}, 1, -1)$ and $\theta$ is a parameter controlling the separation between classes. Following \citet{Denis2024}, different values of $\theta$ are considered in order to assess the performance of the classification methods under varying levels of difficulty. Details on data generation and implementation are provided in Appendix~\ref{app:Experimental-3.2}. %The NN-based and B-spline–based classifiers follow the same configurations as in Section~\ref{sec:double-layer-example} and \citep[Section~3 and 5.2]{Denis2024}, respectively.

Table~\ref{table:cos-example} shows that, for this example, our NN-based plug-in classifier achieves comparable performance in terms of classification risk to the B-spline–based classifier proposed in \citet{Denis2024}. The first two columns of Table~\ref{table:cos-example} are reproduced from \citet[Table 1]{Denis2024} for reference. 
\begin{table}[th]
  \begin{minipage}{0.5\linewidth}
      \begin{sc}
        \begin{tabular}{lcccr}
          \toprule
          $\theta$  & Bayes        &  B-spline       & NN \\
          &error & based & based\\
          \midrule
          0.5    & 0.49 & 0.53 &  0.505\\
          1.5 & 0.36 & 0.39 & 0.392 \\
          2.5    & 0.22 & 0.34 & 0.239 \\
          4    & 0.11 & 0.12 &   0.117    \\
          \bottomrule
        \end{tabular}
      \end{sc}
  \end{minipage}
    \begin{minipage}{0.5\linewidth}
      \begin{sc}
        \begin{tabular}{lcccr}
          \toprule
          $\theta$  & Bayes        &  B-spline       & NN \\
          &error & based & based\\
          \midrule
          0.5   & 0.49 & 0.50 & 0.494 \\
          1.5 & 0.36 & 0.37 & 0.376 \\
          2.5    & 0.22 & 0.22 & 0.227 \\
          4    & 0.11 & 0.10 &   0.107      \\
          \bottomrule
        \end{tabular}
      \end{sc}
  \end{minipage}
  \smallskip
\caption{Comparison of classification risk between B-spline--based and NN-based plug-in classifiers for the example introduced in \citet{Denis2024}, with $N=100$ (left) and $N=1000$ (right).}
  \label{table:cos-example}
\end{table}

\begin{comment}

\begin{table}[th]

  \begin{center}
    \begin{small}
    
      \begin{sc}
        \begin{tabular}{lcccr}
          \toprule
          $\theta$  & Bayes error       &  B-spline based      & NN-based \\
          \midrule
          0.5    & 0.49 & 0.53 &  0.505\\
          1.5 & 0.36 & 0.39 & 0.392 \\
          2.5    & 0.22 & 0.34 & 0.239 \\
          4    & 0.11 & 0.12 &   0.117    \\
          \bottomrule
        \end{tabular}
      \end{sc}
\smallskip
      \begin{sc}
        \begin{tabular}{lcccr}
          \toprule
          $\theta$  & Bayes error       &  B-spline based      & NN-based \\
          \midrule
          0.5   & 0.49 & 0.50 & 0.494 \\
          1.5 & 0.36 & 0.37 & 0.376 \\
          2.5    & 0.22 & 0.22 & 0.227 \\
          4    & 0.11 & 0.10 &   0.107      \\
          \bottomrule
        \end{tabular}
      \end{sc}
    \end{small}
  \end{center}
\caption{Comparison of classification risk between B-spline--based and NN-based plug-in classifiers for the example introduced in \citet{Denis2024}, with $N=100$ (top) and $N=1000$ (bottom).}
  \label{table:cos-example}
  \vskip -0.1in
\end{table}
\end{comment}

\section{Conclusion}\label{sec:conclusion}

The classification task is intrinsically more challenging than drift estimation task, since the excess classification risk \eqref{eq:def-excess classification risk} is always bounded by~$1$. In particular, \eqref{eq:inequality-in-main-thm} shows that when the drift estimator is not sufficiently accurate, the resulting classifier behaves nearly at random. This difficulty is further amplified in high-dimensional settings, as also observed in \citet[Section~7]{denis2025empirical}.

\textbf{Limitation.} 
A careful analysis of the proof of Theorem~\ref{thm:main-decomposition} shows that the constant in \eqref{eq:inequality-in-main-thm} grows with the dimension~$d$. This dependence arises from repeated applications of the Burkholder--Davis--Gundy (BDG) inequality and Gronwall's lemma in Lemma~\ref{lem:properties-X} of \citet[Lemma~7.4 and Proposition~7.6]{pages2018numerical}, whose associated constants grow with the dimension~$d$. In the setting of Example~1, the drift functions satisfy the compositional structural assumptions described in Theorem~\ref{thm:main-NN}. Consequently, the dimension affects the constants in the bound, but not the exponent of the convergence rate. The bound therefore remains meaningful in multidimensional settings, although larger sample sizes may be needed to observe the predicted decrease in excess risk as \(d\) increases. This phenomenon is also illustrated in Figure~\ref{fig:convergence}, which exhibits a clear inflection point: as the dimension~$d$ increases, a larger sample size~$N$ is required before the excess risk begins to decrease significantly. Moreover, the requirement $\varepsilon > 0$ in Theorem~\ref{thm:main-NN} is a technical artifact of the proof. Indeed, the constant $C_{K,\Lambda,\mathfrak{C},\widetilde{C},\mathcal{K},\varepsilon}$
diverges as $\varepsilon \to 0$. See the proof of Theorem~\ref{thm:main-NN} for details.

\begin{comment}

\textbf{Limitation.} A careful analysis of the proof of Theorem~\ref{thm:main-decomposition} shows that the constant in \eqref{eq:inequality-in-main-thm} grows with the dimension~$d$. This dependence arises from repeated applications of the Burkholder–Davis–Gundy (BDG) inequality and Gronwall’s lemma in Lemma~\ref{lem:properties-X} of \citet[Lemma~7.4 and Proposition~7.6]{pages2018numerical}, whose associated constants grow with the dimension $d$. Consequently, higher-dimensional problems require increasingly accurate drift estimation to achieve reliable classification. This phenomenon is illustrated in Figure~\ref{fig:convergence}, which exhibits a clear inflection point: as the dimension $d$ increases, a larger sample size $N$ is required before the classification error begins to decrease significantly. Moreover, the requirement $\varepsilon > 0$ in Theorem~\ref{thm:main-NN} is a technical artifact of the proof. Indeed, the constant $C_{K,\Lambda,\mathfrak{C},\widetilde{C},\mathcal{K},\varepsilon}$
diverges as $\varepsilon \to 0$. See the proof of Theorem~\ref{thm:main-NN} for details.
\end{comment}

%Despite these intrinsic difficulties, the proposed neural network–based plug-in classifier achieves strong performance both theoretically and empirically. By explicitly exploiting the structure of the underlying diffusion process, our approach remains effective even in high-dimensional setting and outperforms the trajectory based classifier, as confirmed by our theoretical guarantees and numerical experiments.

Despite these intrinsic difficulties, the proposed neural network--based plug-in classifier achieves strong theoretical and empirical performance. By explicitly exploiting the structure of the underlying diffusion process, our approach remains effective in multidimensional settings and outperforms trajectory-based classifiers,  as confirmed by our theoretical guarantees and numerical experiments.

%Gives this diffucility, our NN-based plug in classifier acheive good performance both theotically and numerically in this paper. 

%it is worth to note that classification problem is more difficulte than simple estimation problem since the excess risk is lower than 1, hence a no good estimation will leed directly random classification. This can be reflected in the Figure \ref{fig:convergence}inflection point, when the estimation error is lower than some level, the classification error becomes significant. 

%explain the change in the figure 

\section*{Acknowledgments}
Yuzhen Zhao acknowledges support from the Research Chair DIALog under the aegis of the Risk Foundation, a joint initiative by Université Paris-Dauphine PSL and CNP Assurances. Yating Liu acknowledges the financial support from the CNRS through the MITI interdisciplinary programs.

\bibliographystyle{plainnat}
\bibliography{neurips_2026}

%%%%%%%%%%%%%%%%%%%%%%%%%%%%%%%%%%%%%%%%%%%%%%%%%%%%%%%%%%%%%%%%%%%%%%%%%%%%%%%
%%%%%%%%%%%%%%%%%%%%%%%%%%%%%%%%%%%%%%%%%%%%%%%%%%%%%%%%%%%%%%%%%%%%%%%%%%%%%%%
% APPENDIX
%%%%%%%%%%%%%%%%%%%%%%%%%%%%%%%%%%%%%%%%%%%%%%%%%%%%%%%%%%%%%%%%%%%%%%%%%%%%%%%
%%%%%%%%%%%%%%%%%%%%%%%%%%%%%%%%%%%%%%%%%%%%%%%%%%%%%%%%%%%%%%%%%%%%%%%%%%%%%%%
%\newpage
\appendix

\section{Detailed proof}\label{sec:app-a}
\begin{proof}[Proof of Proposition \ref{prop:bayes-classifier}]
First, recall that original probability measure $\mathbb{P}$ admits the mixture representation
$
\mathbb{P} = \sum_{k=1}^K \mathfrak{p}_k \, \mathbb{P}_k,
$ 
where $\mathbb{P}_k(\cdot) \coloneqq \mathbb{P}(\,\cdot \mid Y = k)$ and $\mathfrak{p}_k = \mathbb{P}(Y = k)$. Assumption~\ref{assump:Lipschitz} guarantees the existence and strong uniqueness of the solution to~\eqref{eq:sde} under each measure $\mathbb{P}_k$, for all $1 \le k \le K$. 

The proof of Proposition~\ref{prop:bayes-classifier} is divided into two steps. 
In the first step, starting from $\mathbb{P}_k$, we construct a reference probability measure $\tilde{\mathbb{P}}$ on $(\Omega, \mathcal{F}, (\mathcal{F}_t)_{t\in[0,T]})$, and a Brownian motion $(\tilde{B}_t)_{t\in[0,T]}$ under $\tilde{\mathbb{P}}$ such that the process $(\tilde{X}_t)_{t\in[0,T]}$ defined by
\begin{equation}\label{eq:driftless-sde}
\d \tilde{X}_t = \sigma(\tilde{X}_t)\,\d \tilde{B}_t,\quad\tilde{X}_0=X_0    
\end{equation}
solves the original SDE \eqref{eq:sde} under ${\mathbb{P}}$.  This construction relies on Girsanov's theorem.  In the second step, we prove~\eqref{eq:pi-equality}, which follows the proof of~\citet[Proposition~1]{Denis2020} and is reproduced here for  reader’s convenience.

\noindent {\sc Step 1.} Fix a label $k \in \mathcal{Y}$.
Define the process
\begin{align}\label{eq:appendix-martingale}
    Z_t^{[k]} %&\coloneqq \exp \left[\sum_{i=1}^d \int_{0}^t \big[\sigma^{-1}b_k(X_s)\big]_i \d \tilde{B}_s^i -\frac{1}{2}\int_{0}^t |\sigma^{-1}b_k(X_s)|^2\d s\right],\nonumber\\
    & = \exp \left[\sum_{i=1}^d \int_{0}^t - \big[\sigma^{-1}(X_s)\,b_k(X_s)\big]_i \d B_s^i -\frac{1}{2}\int_{0}^t |\sigma^{-1}(X_s)\,b_k(X_s)|^2\d s\right], \; t\in[0,T],
\end{align}
where $\sigma^{-1}(x)$ denotes the inverse of $\sigma(x)$, and $\big[\sigma^{-1}(X_s)\,b_k(X_s)\big]_i$ is the $i$-th coordinate of $\sigma^{-1}(X_s)\,b_k(X_s)$, and $B^i$ denotes the $i$-th component of the
Brownian motion $B$. Assumption \ref{assump:Novikov} ensures that $(Z^{[k]}_t)_{t\in[0,T]}$ defined by \eqref{eq:appendix-martingale} is a true martingale (see, e.g., \citet[Corollary 5.13]{karatzas1991brownian}).
 
We now define the probability measure $\tilde\PP$ on $\calF_T$ by \[\frac{\d \tilde{\PP}}{\d \PP_k} =Z_T^{[k]}, \quad \text{ which means, }\quad  
\tilde\PP(A) \coloneqq \EE_k\Big[\mathbbm{1}_A Z_T^{[k]}\Big], \:A\in\calF_T,
\]
where $\EE_k$ is the expectation with respect to $\PP_k$. Hence, by Girsanov's Theorem (see, e.g., \citet[Theorem 5.1]{karatzas1991brownian}),  the process $\tilde{B}=(\tilde{B}_t)_{t\in[0,T]}$  defined by 
\begin{equation}\label{eq:appendix-BM}
\tilde{B}_t  \coloneqq B_t+\int_{0}^t \sigma^{-1}(X_s)\,b_k(X_s)\d s, \:t\in[0,T] 
\end{equation} 
is a Brownian motion under $\tilde\PP$. 

It remains to verify that the process
$X = (X_t)_{t\in[0,T]}$, which solves~\eqref{eq:sde} under $\PP_k$,
satisfies the driftless SDE~\eqref{eq:driftless-sde} under $\tilde{\PP}$.
Indeed,
\begin{align}
\d X_t = b_k(X_t)\,\d t + \sigma(X_t)\,\d B_t= b_k(X_t)\,\d t + \sigma(X_t)\,\big(\d \tilde{B}_t - \sigma^{-1}(X_t)\,b_k(X_t)\d t\big)=\sigma(X_t)\d \tilde{B}_t.
\end{align}
This concludes the first step, since Assumption~\ref{assump:Lipschitz}
ensures existence and strong uniqueness of solutions to both
\eqref{eq:sde} and~\eqref{eq:driftless-sde}.

\noindent {\sc Step 2.} Since $(Z^{[k]}_t)_{t\in[0,T]}$ defined in \eqref{eq:appendix-martingale} is a true martingale (see {\sc Step 1}), the probability measures $\tilde{\PP}$ and $\PP_k$ are mutually absolutely continuous (see, e.g., \citet[Corollary 5.2 and the subsequent remark]{karatzas1991brownian}).  We therefore define, for every $t\in[0,T]$, 
\begin{align}\label{eq:def-phi-t-k}
&\Phi_t^{[k]}\coloneqq (Z_{t}^{[k]})^{-1}  =  \exp \left[\sum_{i=1}^d \int_{0}^t  \big[\sigma^{-1}(X_s)\,b_k(X_s)\big]_i \d B_s^i +\frac{1}{2}\int_{0}^t |\sigma^{-1}(X_s)\,b_k(X_s)|^2\d s\right]\nonumber\\
%&= \exp \left[\sum_{i=1}^d \int_{0}^t \big[\sigma^{-1}b_k(X_s)\big]_i \d B_s^i +\frac{1}{2}\int_{0}^t |\sigma^{-1}b_k(X_s)|^2\d s\right],\nonumber\\
    & =\exp \left[\int_{0}^t \big[\sigma^{-1}(X_s)\,b_k(X_s)\big]^{\top} \d B_s +\frac{1}{2}\int_{0}^t |\sigma^{-1}(X_s)\,b_k(X_s)|^2\d s\right],\nonumber\\
    &=\exp \Bigg[\int_{0}^t \big[\sigma^{-1}(X_s)\,b_k(X_s)\big]^{\top} \big(\sigma^{-1}(X_s)\d X_s - \sigma^{-1}(X_s)\,b_k(X_s)\d s\big) \nonumber\\
    &\qquad +\frac{1}{2}\int_{0}^t |\sigma^{-1}(X_s)\,b_k(X_s)|^2\d s\Bigg],\nonumber\\
    &=\exp \left[\int_{0}^t b_k(X_s)^{\top} (\sigma\sigma^{\top})^{-1}(X_s)\d X_s  -\frac{1}{2}\int_{0}^t |\sigma^{-1}(X_s)\,b_k(X_s)|^2\d s\right].
\end{align}
Consequently, 
\begin{equation}\label{eq:def-dpk-dptilde}
    \frac{\d \PP_k}{\d \tilde{\PP}}=\Phi_T^{[k]}
\end{equation} (see, e.g., \citet[Section 5.5, Consequence (a) after Theorem 5.22]{legall2016brownian}). It follows that
\[
\d \PP = \sum_{k=1}^K \mathfrak{p}_k\d \PP_{k} =  \sum_{k=1}^K \mathfrak{p}_k \Phi_T^{[k]}\d \tilde{\PP}, 
\]
Moreover, since each measure $\PP_k$, $k \in \mathcal{Y}$, is mutually absolutely continuous with respect to $\tilde{\PP}$, the same holds for $\PP$ and $\tilde{\PP}$.
Therefore, the Radon–Nikodym theorem implies that
\begin{equation}\label{eq:app-density-psi}
\frac{\d \PP_k}{\d \PP}=\frac{\d \PP_k}{\d \tilde\PP}\cdot \frac{\d \tilde\PP}{\d \PP}=\frac{ \Phi_T^{[k]}}{ \sum_{j=1}^K \mathfrak{p}_j \Phi_T^{[j]}}\eqqcolon \Psi^{[k]}.
\end{equation}

%\begin{equation}\label{eq:app-density-psi}
%\frac{\d \PP_k}{\d \PP}=\frac{ \Phi_T^{[k]}\d \tilde{\PP}}{ \sum_{j=1}^K \mathfrak{p}_j \Phi_T^{[j]}\d \tilde{\PP}}=\frac{ \Phi_T^{[k]}}{ \sum_{j=1}^K \mathfrak{p}_j \Phi_T^{[j]}}\eqqcolon \Psi^{[k]}.
%\end{equation}

Let $\calF^X$ denote the $\sigma$-algebra generated by
$X = (X_t)_{t\in[0,T]}$, the solution of~\eqref{eq:sde}.
Let $h:\{1, ..., K\}\rightarrow \RR$ be a  bounded measurable function and let $Z$ be a bounded $\mathcal{F}^X$-measurable random variable. Then
\begin{align}
    \EE [h(Y)Z]= \EE\big[\,h(Y) \,\EE[Z|Y]\,\big] =\sum_{k=1}^K h(k)\mathfrak{p}_k \EE[Z|Y=k]=\sum_{k=1}^K h(k)\mathfrak{p}_k \EE_k[Z]
\end{align}
where $\EE_k[\cdot]$ denotes expectation with respect to $\PP_k$. 
Using~\eqref{eq:app-density-psi}, we obtain
\begin{align}
   \EE [h(Y)Z]= \sum_{k=1}^K h(k)\mathfrak{p}_k \EE\big[Z\Psi^{[k]}\big]=\EE \left[\sum_{k=1}^K h(k)\mathfrak{p}_k \Psi^{[k]} Z\right]. 
\end{align}
By the definition of conditional expectation
(see, e.g., \citet[Theorem and Definition~11.3]{legall2022measure}),
this implies
\begin{equation}\label{eq:app-cond-expectation}
    \EE[h(Y)|X]=\sum_{k=1}^K h(k)\,\mathfrak{p}_k \Psi^{[k]}, \quad \PP-\text{a.s.,}
\end{equation}
Finally, choosing $h(y) = \mathbbm{1}_{\{k\}}(y)$ for every $k \in \mathcal{Y}$
concludes the proof.
\end{proof}

Recall that \( \mathfrak{C} \) denotes a generic positive constant, depending only on the model parameters \( d, T, b_1,\ldots,b_K, \sigma \) and \( \|X_0\|_4 \), whose value may change from line to line.
 The proof of  Theorem \ref{thm:main-decomposition} relies on the following lemma, whose proof can be found in \citet[Lemma 7.4 and Proposition 7.6]{pages2018numerical}. 
\begin{lemma}\label{lem:properties-X}
Let $X=(X_t)_{t\in [0,T]}$ be the unique solution to \eqref{eq:sde}. Under Assumption \ref{assump:Lipschitz}, the coefficient functions $b_k, \,k\in\calY$ and $\sigma$ have linear growth, i.e., for any $x\in\RD$, %it holds 
\[\max (|b_1(x)|, \dots,|b_K(x)| , |\sigma(x)|)\leq \mathfrak{C}(1+|x|).  \]
Moreover, 
\[  \sup_{t\in[0,T]}|| X_t\Vert_4\leq \mathfrak{C}, \;  \textrm{ and for every } \,s,t\in[0,T], \quad \Vert X_t -X_s \Vert_2\leq \Vert X_t -X_s \Vert_4\leq \mathfrak{C} |t-s|^{\frac{1}{2}}. \]
\end{lemma}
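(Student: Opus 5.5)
The statement just before this point is Lemma~\ref{lem:properties-X}, the standard moment and regularity estimate for the SDE; the paper cites \citet{pages2018numerical} for it. I would prove it in three steps, in the order written.

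\textbf{Linear growth.} This follows directly from Assumption~\ref{assump:Lipschitz}(b). For each $k$ and all $x$,
\[
|b_k(x)|\le |b_k(0)|+L_b|x|,\qquad |\sigma(x)|\le|\sigma(0)|+L_\sigma|x| .
\]
Taking $\mathfrak{C}\ge \max_k|b_k(0)|\vee|\sigma(0)|\vee L_b\vee L_\sigma$ gives the first claim.

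\textbf{Uniform fourth moment.} Work conditionally on $Y=k$; the final bound is uniform in $k$, so it passes to the mixture. Write
\[
X_t = X_0+\int_0^t b_Y(X_s)\,\d s+\int_0^t\sigma(X_s)\,\d B_s .
\]
Use $(a+b+c)^4\le 27(a^4+b^4+c^4)$ and bound the three terms separately. For the drift term, H\"older gives $T^3\int_0^t|b_Y(X_s)|^4\d s$. For the stochastic integral, the BDG inequality with exponent $4$, followed by H\"older, bounds the expected supremum by $C_d\,T\int_0^t\EE|\sigma(X_s)|^4\d s$. Linear growth then yields
\[
\EE\sup_{s\le t}|X_s|^4\le \mathfrak{C}\Big(1+\EE|X_0|^4+\int_0^t\EE\sup_{u\le s}|X_u|^4\,\d s\Big).
\]
Gronwall's lemma closes this and gives $\sup_t\|X_t\|_4\le\mathfrak{C}$. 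The only obstacle is that Gronwall needs the left-hand side to be finite a priori. I would handle this the standard way: localize with the stopping times $\tau_R=\inf\{t:|X_t|\ge R\}$, apply the argument to the stopped process, and let $R\to\infty$ using Fatou's lemma.

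\textbf{Increment bound.} For $s<t$,
\[
X_t-X_s=\int_s^t b_Y(X_u)\,\d u+\int_s^t\sigma(X_u)\,\d B_u .
\]
By Minkowski's inequality and the moment bound, the drift part has $L^4$ norm at most $(t-s)\sup_u\|b_Y(X_u)\|_4\le\mathfrak{C}(t-s)$. By BDG, the martingale part has $L^4$ norm at most $C\big(\EE(\int_s^t|\sigma(X_u)|^2\d u)^2\big)^{1/4}\le \mathfrak{C}(t-s)^{1/2}$. Since $|t-s|\le T$, we have $(t-s)\le T^{1/2}(t-s)^{1/2}$, so both parts are at most $\mathfrak{C}|t-s|^{1/2}$. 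Finally, $\|\cdot\|_2\le\|\cdot\|_4$ by Jensen's inequality. The constants depend only on $d$ (through BDG for the vector integral), $T$, the Lipschitz data and $\|X_0\|_4$, as claimed.
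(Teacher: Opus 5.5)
Your proposal is correct and follows essentially the same route as the paper. The paper gives no proof of its own but cites Lemma~7.4 and Proposition~7.6 of \citet{pages2018numerical}, which it describes as relying on BDG and Gronwall. That is exactly your argument: linear growth from the Lipschitz bounds, then BDG plus Gronwall (with localization for a priori finiteness) for the $L^4$ moment, and BDG again for the $|t-s|^{1/2}$ increment bound. Your treatment of the mixture, conditioning on $Y=k$ with bounds uniform in $k$, correctly covers the paper's setting.
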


\begin{proof}[Proof of Theorem \ref{thm:main-decomposition}]
We start by recalling that, by Proposition~2 in \citet{Denis2020},
\begin{align}
\calR(\widehat{g})-\mathcal{R}(g^*)=\EE \Big[ \sum_{i=1}^K\sum_{\substack{k=1\\ k\neq i\,}}^K \big( \pi_i^*(X)-\pi_k^*(X)\big)\mathbbm{1}_{\{\widehat g(X)=k\}}\mathbbm{1}_{\{g^*(X)=i\}}\Big].
\end{align}
Consequently,
\begin{align}
\calR(\widehat{g})-\mathcal{R}(g^*)&\leq \EE \Big[ \sum_{i=1}^K\sum_{k=1, k\neq i}^K \big( \pi_i^*(X)-\widehat \pi_i(X) + \widehat \pi_k(X)-\pi_k^*(X)\big)\mathbbm{1}_{\{\widehat g(X)=k\}}\mathbbm{1}_{\{g^*(X)=i\}}\Big]\nonumber\\
&\leq \EE \Big[ 2 \max_{i\in\mathcal{Y}}| \pi_i^*(X)-\widehat \pi_i(X)|  \sum_{i=1}^K\sum_{k=1, k\neq i}^K \mathbbm{1}_{\{\widehat g(X)=k\}}\mathbbm{1}_{\{g^*(X)=i\}} \Big]\nonumber\\
&=\EE \Big[ 2 \max_{i\in\mathcal{Y}}| \pi_i^*(X)-\widehat \pi_i(X)|  \mathbbm{1}_{\{\widehat g(X)\neq g^*(X)\}}\Big]\leq 2\EE \Big[  \max_{i\in\mathcal{Y}}| \pi_i^*(X)-\widehat \pi_i(X)| \Big]\nonumber\\
&\leq 2 \sum_{k=1}^{K}\,\EE \big[ \big|\pi_k^*(X)-\widehat \pi_k(X) \big|\big],\nonumber
\end{align}
where the first inequality holds because on the event 
\( \{\widehat g(X)=k\} \), the definition of \( \widehat g \) in
\eqref{eq:def-pi-hat-g-hat} implies \( \widehat \pi_k(X) \ge \widehat \pi_i(X) \) for all $i\neq k$.

We now define
\[\bar\pi_k(X) = \phi_k\big( \bar{F}(X) \big),\quad  k\in \calY,\]
where \( \phi_k \) denotes the softmax function defined in
\eqref{eq:def-phi}, \( \bar F = (\bar F_1,\ldots,\bar F_K) \), and each component
\( \bar F_k \) is given by \eqref{eq:F-bar}.
For each fixed label \( k \in \mathcal Y \), %we obtain
\begin{align}\label{eq:ineq-pi-and-F}
    \EE \big[ \big|\pi_k^*(X)-\widehat \pi_k(X) \big|\big]&\leq \EE \big[ \big|\pi_k^*(X)-\bar \pi_k(X) \big|\big]+\EE \big[ \big|\bar\pi_k(X)-\widehat \pi_k(X) \big|\big]\nonumber\\
    &\leq \EE \big[ \big|F^*(X)-\bar F(X) \big|\big]+\EE \big[ \big|\bar F(X)-\widehat F(X) \big|\big]\nonumber\\
    &\leq \sum_{k=1}^K \left(\EE \big[ \big|F_k^*(X)-\bar F_k(X) \big|\big]+\EE \big[ \big|\bar F_k(X)-\widehat F_k(X) \big|\big]\right)
\end{align}
where the second inequality uses the fact that the softmax functions
\( \phi_k : \mathbb R^K \to \mathbb R_+, \:k\in \calY\) are 1-Lipschitz continuous with respect to the Euclidean norm \( |\cdot| \).

We next derive upper bounds for
$\EE \big[ \big|F_k^*(X)-\bar F_k(X) \big|\big]$ and $\EE \big[ \big|\bar F_k(X)-\widehat F_k(X) \big|\big]$, $k\in\calY$. 

{\sc Step 1. Upper bound for $\EE \big[ \big|F_k^*(X)-\bar F_k(X) \big|\big]$.}

The functional $\bar F_k(X)$ defined by \eqref{eq:F-bar} can be rewritten as 
\begin{align}\label{eq:F-bar-rewrite}
\bar{F}_k(X)
%\coloneqq
%&\sum_{m=0}^{M-1} b_k(X_{t_m})^{\top} \big(\sigma\sigma^{\top}\big)^{-1}(X_{t_m})\,(X_{t_{m+1}}\!\!-X_{t_m}) -\frac{\Delta}{2}\sum_{m=0}^{M-1} \big\lvert \sigma^{-1} (X_{t_m})\,b_k(X_{t_m}) \big\rvert^2\nonumber\\
=& \int_{0}^{T}b_k(X_{\eta(s)})^{\top} \big(\sigma\sigma^{\top}\big)^{-1}(X_{\eta(s)})\d X_s - \frac{1}{2}\int_{0}^{T} \big\lvert \sigma^{-1} (X_{\eta(s)})\,b_k(X_{\eta(s)}) \big\rvert^2 \d s,
\end{align}
where, for every $s\in[0, T]$, 
\begin{equation}\label{eq:def-eta(s)}
    \eta (s)\coloneqq t_m, \text{ if } t_m\leq s < t_{m+1}, \text{ and }\eta(T)=T.
\end{equation}%$ if $$ and $$. 

Let \[a(x)\coloneqq\sigma\sigma^{\top}(x), \:h_k(x)\coloneqq b_k(x)^{\top} a(x)^{-1}\text{ and }\ell_k(x)=|\sigma^{-1}(x)\,b_k(x)|^2=b_k(x)^{\top} a(x)^{-1} b_k(x).\] Then 
\begin{align}
F_k^*(X)-\bar F_k(X) = \int_0^T \big( h_k(X_s)-h_k(X_{\eta(s)})\big)\d X_s - \frac{1}{2}\int_0^T \big( \ell_k(X_s)-\ell_k(X_{\eta(s)})\big)\d s. 
\end{align}
Hence, 
\begin{align}
&\EE \big[ \big|F_k^*(X)-\bar F_k(X) \big|\big] \nonumber\\
&\quad\leq \EE \left[ \left|\int_0^T \big( h_k(X_s)-h_k(X_{\eta(s)})\big)\d X_s\right|\right]  + \frac{1}{2} \EE \left[\int_0^T \big| \ell_k(X_s)-\ell_k(X_{\eta(s)})\big|\d s\right]\eqqcolon I_1 + \frac{1}{2} I_2\nonumber
\end{align}

{\sc Step 1-a. Upper bound of $I_1$}

Let $x,y \in \RD$, we have 
\begin{align}\label{eq:lip-h}
    |h_k(x)&-h_k(y)|\nonumber\\
    &=|b_k(x)^{\top} a(x)^{-1}-b_k(y)^{\top} a(y)^{-1} |\nonumber\\
    &=|b_k(x)^{\top} a(x)^{-1}-b_k(x)^{\top} a(y)^{-1}+ b_k(x)^{\top} a(y)^{-1}-b_k(y)^{\top} a(y)^{-1}| \nonumber\\
    &=|b_k(x)^{\top} \big(a(x)^{-1}- a(y)^{-1}\big)+ \big(b_k(x)-b_k(y)\big)^{\top}a(y)^{-1} |\nonumber\\
    &\leq|b_k(x)|\cdot | a(x)^{-1}- a(y)^{-1} |_{\text{op}} + |b_k(x)-b_k(y)| \cdot | a(y)^{-1}|_{\text{op}} \nonumber\\
    &\leq |b_k(x)|\cdot L_{a^{-1}}|x-y| + L_b|x-y|\cdot |a(y)^{-1}|_{\text{op}} + 
    \leq (L_b\Lambda + L_{a^{-1}}|b_k(x)|)|x-y|
\end{align}

%\begin{align}
%    |h_k(x)-h_k(y)|&=|b_k(x)^{\top} a(x)^{-1}-b_k(y)^{\top} a(y)^{-1} |\nonumber\\
%    &=|b_k(x)^{\top} a(x)^{-1}-b_k(y)^{\top} a(x)^{-1}+ b_k(y)^{\top} a(x)^{-1}-b_k(y)^{\top} a(y)^{-1}| \nonumber\\
%    &=|\big(b_k(x)-b_k(y)\big)^{\top}a(x)^{-1} + b_k(y)^{\top} \big(a(x)^{-1}- a(y)^{-1}\big)|\nonumber\\
%    &\leq |b_k(x)-b_k(y)| \cdot \Vert a(x)^{-1}\Vert_{\text{op}}+ |b_k(y)|\cdot \Vert  a(x)^{-1}- a(y)^{-1} \Vert_{\text{op}} \nonumber\\
%    &\leq L_b|x-y|\cdot \Vert a(x)^{-1}\Vert_{\text{op}} + |b_k(y)|\cdot L_{a^{-1}}|x-y|
%    \leq (L_b\Lambda + L_{a^{-1}}|b_k(y)|)|x-y|
%\end{align}
%where $\Vert \cdot \Vert_{\text{op}}$ denotes the operator norm of a matrix, that is, ... 

Hence, 
\begin{align}
    I_1 &= \EE \left[ \left|\int_0^T \big( h_k(X_s)-h_k(X_{\eta(s)})\big)\d X_s\right|\right] \nonumber\\
    &= \EE \left[ \left|\int_0^T \big( h_k(X_s)-h_k(X_{\eta(s)})\big)\big(b_Y(X_s)\d s+\sigma (X_s)\d B_s\big)\right|\right]  \nonumber\\
    &\leq \EE \left[\left|\int_0^T \big( h_k(X_s)-h_k(X_{\eta(s)})\big)b_Y(X_s)\d s\right|\right] + \EE \left[\left|\int_0^T \big( h_k(X_s)-h_k(X_{\eta(s)})\big) \sigma (X_s)\d B_s\right|\right]\nonumber\\
    &\eqqcolon J_1+ J_2.
\end{align}
Now for $J_1$, we have
\begin{align}
J_1 &\leq \EE \left[ \int_0^T  \big| h_k(X_s)-h_k(X_{\eta(s)})\big| \big|b_Y(X_s)\big| \d s\right] = \int_0^T  \EE \left[ \big| h_k(X_s)-h_k(X_{\eta(s)})\big| \big|b_Y(X_s)\big|\right ]\d s \nonumber\\
&\leq \int_0^T\EE \Big[ \big| (L_b\Lambda + L_{a^{-1}}|b_k(X_s)|)|X_s-X_{\eta(s)}|\big| \cdot \big|b_Y(X_s)\big|\Big ]\d s \nonumber\\
&\leq \int_0^T \left \Vert X_s-X_{\eta(s)} \right\Vert_2 \mathfrak{C}\Big(1+  \big(\EE[ |X_s |^4]\big)^{\frac{1}{2}}\Big) \d s\nonumber\\
&\leq \mathfrak{C} \sqrt{\Delta},
\end{align}
where the first equality follows from Fubini’s theorem, the second inequality follows from \eqref{eq:lip-h}, and the last two inequalities from Hölder’s inequality and Lemma~\ref{lem:properties-X}.

For $J_2$, we have  %using Burkhölder–Davis–Gundy (BDG) Inequality (see e.g. \citep[Inequality (7.53)]{pages2018numerical}), we obtain
\begin{align}
(J_2)^2&\leq \EE \left[\left|\int_0^T \big( h_k(X_s)-h_k(X_{\eta(s)})\big) \sigma (X_s)\d B_s\right|^2\right]\nonumber\\
&= \EE \left[\int_0^T \big| h_k(X_s)-h_k(X_{\eta(s)})\sigma (X_s)\big|^2\d s\right]\nonumber\\
&\leq \int_0^T \EE \left[\big| h_k(X_s)-h_k(X_{\eta(s)})\big|^2|\sigma(X_s)|^2_{\text{op}}\right]\d s \nonumber\\
&\leq  \int_0^T \EE \left[\big| h_k(X_s)-h_k(X_{\eta(s)})\big|^2|\sigma(X_s)|^2\right]\d s \nonumber\\
&\leq \Lambda\int_0^T \left\Vert h_k(X_s)-h_k(X_{\eta(s)})\right\Vert_2^2 \d s\leq \mathfrak{C} \Delta,
\end{align}
%\rr error $\sigma \sigma^{\top}$\black
where the first inequality follows from Jensen’s inequality, the equality from Itô’s isometry, the second inequality from Fubini’s theorem and the definition of the operator norm, the next inequality from the fact that $|A|_{\text{op}}\leq |A|$ for any matrix, and the final bound from Lemma~\ref{lem:properties-X}.

Therefore, $I_1\leq \mathfrak{C}\sqrt{\Delta}$. 

{\sc Step 1-b. Upper bound of $I_2$}

Let $x,y \in \RD$, we have 
\begin{align}
   &|\ell_k(x)-\ell_k(y)|\nonumber\\
   &=|b_k(x)^{\top} a(x)^{-1} b_k(x)-b_k(y)^{\top} a(y)^{-1} b_k(y)|\nonumber\\
   &\leq |b_k(x)^{\top} a(x)^{-1} b_k(x)-b_k(y)^{\top} a(x)^{-1} b_k(x) |+|b_k(y)^{\top} a(x)^{-1} b_k(x) -b_k(y)^{\top} a(y)^{-1} b_k(x) |\nonumber\\
   &\quad \:+|b_k(y)^{\top} a(y)^{-1} b_k(x)  - b_k(y)^{\top} a(y)^{-1} b_k(y)|\nonumber\\
   &\leq |(b_k(x)-b_k(y))^{\top}a(x)^{-1} b_k(x)|+|b_k(y)^{\top} \big(a(x)^{-1}-a(y)^{-1}\big) b_k(x) |\nonumber\\
   &\qquad +b_k(y)^{\top} a(y)^{-1} \big(b_k(x)  -  b_k(y)\big)|\nonumber\\
   & \leq \Lambda L_b|x-y||b_k(x)|+ L_{a^{-1}}|b_k(y)||b_k(x)||x-y|+\Lambda L_b|b_k(y)||x-y| \nonumber\\
   &=\Big( \Lambda L_b\big(|b_k(x)|+|b_k(y)|\big)+L_{a^{-1}}|b_k(y)||b_k(x)|\Big)|x-y|. 
\end{align}
It follows that, by Hölder's inequality and Lemma~\ref{lem:properties-X},
\begin{align}
    I_2&=\int_0^T \EE \left[\big| \ell_k(X_s)-\ell_k(X_{\eta(s)})\big|\right]\d s\nonumber\\
    &\leq \int_0^T \EE \left[\Big( \Lambda L_b\big(|b_k(X_s)|+|b_k(X_{\eta(s)})|\big)+L_{a^{-1}}|b_k(X_s)||b_k(X_{\eta(s)})|\Big)|X_s-X_{\eta(s)}|\right]\d s\nonumber\\
    &\leq \int_0^T\left\Vert \Lambda L_b\big(|b_k(X_s)|+|b_k(X_{\eta(s)})|\big)+L_{a^{-1}}|b_k(X_s)||b_k(X_{\eta(s)})|\right\Vert_2\left \Vert X_s-X_{\eta(s)}\right\Vert_2 \d s\nonumber\\
    &\leq \mathfrak{C}\Delta^{\frac{1}{2}} 
\end{align}
Consequently, $\EE \big[ \big|F_k^*(X)-\bar F_k(X) \big|\big]\leq \mathfrak{C}\sqrt{\Delta}$.

%\[I_2=\EE \left[\int_0^T \big| g_k(X_s)-g_k(X_{\eta(s)})\big|\d s\right]\]

{\sc Step 2. Upper bound of $\EE \big[ \big|\bar F_k(X)-\widehat F_k(X) \big|\big]$.}

Using $\eta(s)$ from \eqref{eq:def-eta(s)} and $a=\sigma\sigma^{\top}$, we can write
\begin{align}\label{eq:F-bar-rewrite}
&\bar{F}_k(X)
= \int_{0}^{T}b_k(X_{\eta(s)})^{\top} \big(\sigma\sigma^{\top}\big)^{-1}(X_{\eta(s)})\d X_s - \frac{1}{2}\int_{0}^{T} \big\lvert \sigma^{-1} (X_{\eta(s)})\,b_k(X_{\eta(s)}) \big\rvert^2 \d s,\nonumber\\
&\widehat{F}_k(X)=\int_{0}^{T} \widehat{b}_k(X_{\eta(s)})^{\top} \big(\sigma\sigma^{\top}\big)^{-1}(X_{\eta(s)})\d X_s - \frac{1}{2}\int_{0}^{T} \big\lvert \sigma^{-1} (X_{\eta(s)})\,\widehat{b}_k(X_{\eta(s)}) \big\rvert^2 \d s.
\end{align}
Therefore, 
\begin{align}\label{eq:part-I-and-II}
    \EE &\left[ |\bar F_k(X)-\widehat F_k(X)|\right]\nonumber\\
    &\leq  \EE \left[\left|\int_{0}^T \big( b_k(X_{\eta(s)})-\widehat b_k(X_{\eta(s)})\big)^{\top}a^{-1}(X_{\eta(s)})\d X_s\right|\right]\nonumber\\
     &\quad +\frac{1}{2}\EE \left[\left|\int_0^T \left[\big\lvert \sigma^{-1} (X_{\eta(s)})\,b_k(X_{\eta(s)}) \big\rvert^2-\big\lvert \sigma^{-1} (X_{\eta(s)})\,\widehat{b}_k(X_{\eta(s)}) \big\rvert^2\right]\d s\right|\right]\nonumber\\
     &\eqqcolon \text{(I) + $\frac{1}{2}$(II)}.
\end{align}
For part (I), we have
\begin{align}\label{eq:I}
 \text{(I)} & =   \EE \left[\left|\int_{0}^T \big( b_k(X_{\eta(s)})-\widehat b_k(X_{\eta(s)})\big)^{\top}a^{-1}(X_{\eta(s)})\Big(b_Y(X_s) \d s + \sigma(X_s) \d B_s \Big)\right|\right]\nonumber\\
 & \leq \EE \left[\left|\int_{0}^T \big( b_k(X_{\eta(s)})-\widehat b_k(X_{\eta(s)})\big)^{\top}a^{-1}(X_{\eta(s)})b_Y(X_s) \d s \right|\right]\nonumber\\
 &\quad +\EE \left[\left|\int_{0}^T \big( b_k(X_{\eta(s)})-\widehat b_k(X_{\eta(s)})\big)^{\top}a^{-1}(X_{\eta(s)})\sigma(X_s) \d B_s \right|\right]\eqqcolon \text{(I.a) + (I.b)}.
\end{align}
Next, for (I.a), using the Cauchy–Schwarz inequality, the definition of the operator norm and Lemma \ref{lem:properties-X},  we obtain 
\begin{align}\label{eq:I-a}
    \text{(I.a)}&\leq \EE \left[\int_{0}^T \left|\big( b_k(X_{\eta(s)})-\widehat b_k(X_{\eta(s)})\big)^{\top}a^{-1}(X_{\eta(s)})b_Y(X_s) \right|\d s \right]\nonumber\\
    &\leq \Lambda\EE \left[\int_{0}^T \left|\big( b_k(X_{\eta(s)})-\widehat b_k(X_{\eta(s)})\big)\right| \left|b_Y(X_s) \right|\d s \right]\nonumber\\
    &\leq \Lambda\EE \left[\left(\int_{0}^T \left|\big( b_k(X_{\eta(s)})-\widehat b_k(X_{\eta(s)})\big)\right|^2\d s\right)^{\frac{1}{2}}\left(\int_{0}^T\left|b_Y(X_s) \right|^2\d s\right)^{\frac{1}{2}} \right]\nonumber\\
    &\leq \Lambda \left(\EE \left[\int_{0}^T \left|\big( b_k(X_{\eta(s)})-\widehat b_k(X_{\eta(s)})\big)\right|^2\d s\right]\right)^{\frac{1}{2}}\left(\EE \left[\int_{0}^T\left|b_Y(X_s) \right|^2\d s \right]\right)^{\frac{1}{2}}\nonumber\\
    &\leq \Lambda \mathfrak{C}\left(\EE \left[\Delta\sum_{m=0}^{M-1}\left|\big( b_k(X_{t_m})-\widehat b_k(X_{t_m})\big)\right|^2\right]\right)^{\frac{1}{2}}=\Lambda \mathfrak{C}\,  \calE(\widehat b_k, b_k)^{\frac{1}{2}}.
\end{align}
%where the first inequality follows from the Cauchy–Schwarz inequality and the definition of the operator norm, the second and third inequality follows from the Cauchy–Schwarz inequality 
%\begin{align}
%    \EE \big[ \big|\bar F_k(X)-\widehat F_k(X) \big|\big]\leq 
%\end{align}
Similarly, for part (I.b), using Jensen's inequality and It\^o isometry, we obtain
\begin{align}\label{eq:I-b}
\text{(I.b)}&\leq \left\{    \EE \left[\left|\int_{0}^T \big( b_k(X_{\eta(s)})-\widehat b_k(X_{\eta(s)})\big)^{\top}a^{-1}(X_{\eta(s)})\sigma(X_s) \d B_s \right|^2\right]\right\}^{\frac{1}{2}}\nonumber\\
&=\left\{    \EE \left[\int_{0}^T \left|\big( b_k(X_{\eta(s)})-\widehat b_k(X_{\eta(s)})\big)^{\top}a^{-1}(X_{\eta(s)})\sigma(X_s)\right|^2 \d s \right]\right\}^{\frac{1}{2}}\nonumber\\
&\leq \left\{    \int_{0}^T \EE \left[\left| b_k(X_{\eta(s)})-\widehat b_k(X_{\eta(s)})\right|^2\left|a^{-1}(X_{\eta(s)})\sigma(X_s)\right|^2 \right]\d s \right\}^{\frac{1}{2}}\nonumber\\
&\leq \Lambda^2 \left\{    \int_{0}^T \EE \left[\left| b_k(X_{\eta(s)})-\widehat b_k(X_{\eta(s)}\right|^2\right]\d s \right\}^{\frac{1}{2}} = \Lambda^2  T \calE(\widehat b_k, b_k)^{\frac{1}{2}}. 
\end{align}
For part (II) of \eqref{eq:part-I-and-II}, using the inequality \[\forall\,u, v\in\RD, \quad \big||u|^2-|v|^2\big|=\big|(u-v)^{\top}(u+v)\big|\leq |u-v||u+v| ,\] 
we obtain
\begin{align}\label{eq:II}
\text{(II)} & =    \EE \left[\left|\int_0^T \left[\big\lvert \sigma^{-1} (X_{\eta(s)})\,b_k(X_{\eta(s)}) \big\rvert^2-\big\lvert \sigma^{-1} (X_{\eta(s)})\,\widehat{b}_k(X_{\eta(s)}) \big\rvert^2\right]\d s\right|\right]\nonumber\\
&\leq  \EE \left[\int_{0}^{T} \left| \sigma^{-1} (X_{\eta(s)})\big(b_k(X_{\eta(s)}) -\widehat b_k(X_{\eta(s)}) \big)\right|  \left|\sigma^{-1} (X_{\eta(s)})\big(b_k(X_{\eta(s)}) +\widehat b_k(X_{\eta(s)}) \big)\right|  \d s\right]\nonumber\\
&\leq \left\{ \EE \left[\int_{0}^{T} \left| \sigma^{-1} (X_{\eta(s)})\big(b_k(X_{\eta(s)}) -\widehat b_k(X_{\eta(s)}) \big)\right|^2 \d s \right]\right\}^{\frac{1}{2}}\nonumber\\
&\qquad\times\left\{ \EE \left[\int_{0}^{T} \left| \sigma^{-1} (X_{\eta(s)})\big(b_k(X_{\eta(s)}) +\widehat b_k(X_{\eta(s)}) \big)\right|^2 \d s \right]\right\}^{\frac{1}{2}}\nonumber\\
&\eqqcolon \mathrm{(II.a)}^{\frac{1}{2}}\times \mathrm{(II.b)}^{\frac{1}{2}}.
\end{align}
For (II.a), we have 
\begin{align}\label{eq:II-a}
 \mathrm{(II.a)} & =    \EE \left[\int_{0}^{T} \left| \big(b_k(X_{\eta(s)}) -\widehat b_k(X_{\eta(s)}) \big)^{\top}a^{-1} (X_{\eta(s)})\big(b_k(X_{\eta(s)}) -\widehat b_k(X_{\eta(s)}) \big)\right| \d s \right]\nonumber\\
 &\leq \Lambda \EE \left[\int_{0}^{T} \left| b_k(X_{\eta(s)}) -\widehat b_k(X_{\eta(s)}) \right|^2 \d s\right]   = \Lambda T \calE(\widehat b_k, b_k).
\end{align}
%\rr [attention error ici ... $X$ vient d'un melange de class et pas juste de classe k ici ] \black
For (II.b), we have 
\begin{align}\label{eq:II-b}
     \mathrm{(II.b)} & \leq \Lambda \EE \left[\int_{0}^{T} \left| b_k(X_{\eta(s)}) +\widehat b_k(X_{\eta(s)}) \right|^2 \d s\right] \leq 2 \Lambda T (\mathfrak{C}+\widehat b_{\max}^{\,2})
\end{align}
Combining the above bounds yields
\begin{align}
    \EE \left[ |\bar F_k(X)-\widehat F_k(X)|\right]\leq C_{\Lambda, \mathfrak{C}, \widehat b_{\max}}\calE(\widehat b_k, b_k)^{\frac{1}{2}}
\end{align}
Finally, we conclude that $\calR(\widehat{g})-\mathcal{R}(g^*)\leq K^2 C_{\Lambda, \mathfrak{C}, \widehat b_{\max}}\big(\sqrt{\Delta} + \max_{k\in\calY}\calE(\widehat b_k, b_k)^{\frac{1}{2}}\big)$, which completes the proof.
\end{proof}

Note that the upper bound in \eqref{eq:inequality-in-main-thm} involves the
estimation error \(\mathcal{E}(\widehat b_k, b_k)\), which is defined without
conditioning on any specific class label (see \eqref{eq:def-estimation-error-all}). In practice, however, each drift
function \(b_k\) is estimated only from training trajectories belonging to
class \(k\).
The following Lemma~\ref{lem:relation-E-with-Ek} therefore establishes a
connection between the global estimation error
\(\mathcal{E}(\widehat b_k, b_k)\) and the class-conditional estimation error
\(\mathcal{E}_k(\widehat b_k, b_k)\) defined in
\eqref{eq:def-estimation-error-class-k}.

\begin{lemma}\label{lem:relation-E-with-Ek} Let $j,k\in\calY$ with $j\neq k$. 
Under the assumptions of Theorem~\ref{thm:main-NN}, for any $\varepsilon\in(0,\frac{1}{2}]$, there exists a constant
$C_{\Lambda,\mathfrak{C},\varepsilon}>0$ such that
\[
\mathcal{E}_j(\widehat b_k,b_k)\leq C_{\Lambda, \mathfrak{C}, \varepsilon}\mathcal{E}_k(\widehat b_k,b_k)^{1-\varepsilon}. 
\]
Moreover, $C_{\Lambda, \mathfrak{C}, \varepsilon}\rightarrow+\infty$ when $\varepsilon\rightarrow 0$. 
\end{lemma}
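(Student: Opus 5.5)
Since $\widehat b_k$ is a function of the training data $\mathcal D_N$, which is independent of the test path, we work conditionally on $\mathcal D_N$ and treat $\widehat b_k$ as a fixed bounded measurable function. Set
\[
G(X)\coloneqq \frac{1}{M}\sum_{m=0}^{M-1}\big|\widehat b_k(X_{t_m})-b_k(X_{t_m})\big|^2 .
\]
Then $\mathcal E_j(\widehat b_k,b_k)=\EE_j[G(X)]$, and the plan is a change of measure from $\PP_j$ to $\PP_k$. Both are absolutely continuous with respect to the reference measure $\tilde\PP$ of the proof of Proposition~\ref{prop:bayes-classifier}, with densities $\Phi_T^{[j]}$ and $\Phi_T^{[k]}$ (see \eqref{eq:def-dpk-dptilde}). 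Hence
\[
\frac{\d\PP_j}{\d\PP_k}=\frac{\Phi_T^{[j]}}{\Phi_T^{[k]}}\eqqcolon D_{jk},\qquad \EE_j[G]=\EE_k[D_{jk}G].
\]
Under $\PP_k$ we have $\d X_s=b_k\,\d s+\sigma\,\d B_s$, and writing $u_s\coloneqq\sigma^{-1}(X_s)\big(b_j(X_s)-b_k(X_s)\big)$ we get
\[
D_{jk}=\exp\Big(\int_0^T u_s^\top\d B_s-\tfrac12\int_0^T|u_s|^2\d s\Big).
\]

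Next apply Hölder's inequality with exponents $p=1/\varepsilon$ and $q=1/(1-\varepsilon)$:
\[
\EE_k[D_{jk}G]\le \|D_{jk}\|_{L^{1/\varepsilon}(\PP_k)}\;\EE_k\big[G^{1/(1-\varepsilon)}\big]^{1-\varepsilon}.
\]
To reduce the second factor to $\EE_k[G]^{1-\varepsilon}$, use boundedness. On $\mathcal K$, both $\widehat b_k$ (bounded by $F$) and $b_k$ (bounded since $b_k$ is compactly supported in $\mathcal G(\cdot)$) are bounded, so $G\le \bar G$ for a constant $\bar G$ depending only on $F$ and $\widetilde K$. Therefore
\[
G^{1/(1-\varepsilon)}\le \bar G^{\varepsilon/(1-\varepsilon)}\,G,
\]
and the second factor is at most $\bar G^{\varepsilon}\,\mathcal E_k(\widehat b_k,b_k)^{1-\varepsilon}$. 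Taking the expectation over $\mathcal D_N$ and applying Jensen's inequality (concavity of $x\mapsto x^{1-\varepsilon}$) preserves the bound.

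The main obstacle is showing that $\EE_k[D_{jk}^{p}]<\infty$ with $p=1/\varepsilon$, uniformly in the data. Write
\[
D_{jk}^p=\mathcal E(pu)_T\cdot\exp\Big(\tfrac{p^2-p}{2}\int_0^T|u_s|^2\d s\Big),
\]
where $\mathcal E(pu)_T\coloneqq\exp\big(\int_0^T p\,u_s^\top\d B_s-\tfrac{p^2}{2}\int_0^T|u_s|^2\d s\big)$ is a stochastic exponential. By Cauchy--Schwarz,
\[
\EE_k[D_{jk}^p]\le \EE_k\big[\mathcal E(2pu)_T\big]^{1/2}\,\EE_k\Big[\exp\Big(c_p\int_0^T|u_s|^2\d s\Big)\Big]^{1/2}
\]
for some constant $c_p$ depending only on $p$. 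The first factor is at most $1$ because $\mathcal E(2pu)$ is a nonnegative supermartingale. For the second factor I would use $|u_s|\le \Lambda^{1/2}|b_j-b_k|(X_s)$ (from $|a^{-1}|_{\mathrm{op}}\le\Lambda$). In the setting of Theorem~\ref{thm:main-NN}, where the drifts lie in $\mathcal G$ with compact support, $|b_j-b_k|$ is bounded, so $\int_0^T|u_s|^2\d s$ is bounded by a deterministic constant and the exponential moment is finite. This yields $\|D_{jk}\|_{L^{1/\varepsilon}(\PP_k)}\le \exp(C_{\Lambda,\mathfrak C}/\varepsilon)$, which blows up as $\varepsilon\to0$, matching the final claim of the statement.

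If one only had linear growth of the drifts, this step would instead require Gaussian-type exponential moments of $\sup_{t\in[0,T]}|X_t|^2$ for small multiples, which hold only when $T$ is small. I would therefore rely on the boundedness coming from the compact-support assumption (equivalently, a strengthened Novikov-type condition).
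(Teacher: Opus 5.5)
Your proposal is correct and follows essentially the same route as the paper: a change of measure from $\PP_j$ to $\PP_k$, H\"older's inequality with exponents $1/\varepsilon$ and $1/(1-\varepsilon)$, and boundedness of $\widehat b_k$ and $b_k$ to reduce $\EE_k[G^{1/(1-\varepsilon)}]$ to $\EE_k[G]$, followed by an exponential-martingale bound with bounded quadratic variation that yields a constant of order $\exp(C/\varepsilon)$. The differences are cosmetic. You keep the exact Girsanov density $\exp\big(\int_0^T u_s^\top \d B_s - \tfrac12\int_0^T |u_s|^2\d s\big)$ and bound its $1/\varepsilon$-moment via Cauchy--Schwarz and the supermartingale property, whereas the paper first absorbs the $\d s$-term into a constant and then uses that $\exp\big(M_T/\varepsilon - \langle M\rangle_T/(2\varepsilon^2)\big)$ has expectation one. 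Your explicit conditioning on $\mathcal D_N$ followed by Jensen's inequality is a welcome piece of extra care.
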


\begin{proof}[Proof of Lemma \ref{lem:relation-E-with-Ek}]
Under the assumptions of Theorem~\ref{thm:main-NN}, each $b_k\in \mathcal{G}(q,\mathbf{d}, \mathbf{t},\boldsymbol{\beta}, K)$. In particular, $b_k$ is defined on a compact set. Hence,  there exists a constant $b_{\max}>0$ such that, for every $k\in\calY$, $\Vert b_k\Vert_{\sup}\leq b_{\max}$.

Let $\theta_k(\cdot)\coloneqq \big(\sigma^{-1}(\cdot)b_k(\cdot)\big)^{\top}$ and $\xi_k(\cdot)\coloneqq \theta_k(\cdot)\theta_k^{\top}(\cdot)=|\sigma^{-1}(\cdot)b_k(\cdot)|^2$. Then the process $\Phi_t^{[k]}$ defined in \eqref{eq:def-phi-t-k} can be written as
\[
\Phi_t^{[k]} = \exp \left [  \int_{0}^{t}\theta_k(X_s)\d B_s + \frac{1}{2}\int_{0}^t\xi_k(X_s)\d s\right]. 
\]
Moreover, by \eqref{eq:def-dpk-dptilde}, we have 
\begin{align}\label{eq:upper-bound-change-density}
    \frac{\d \PP_j}{\d \PP_k}= \frac{\Phi_T^{[j]}}{\Phi_T^{[k]}}&= \exp \left [  \int_{0}^{T}\big(\theta_j(X_s)-\theta_k(X_s)\big)\d B_s + \frac{1}{2}\int_{0}^T\big(\xi_j(X_s)-\xi_k(X_s)\big)\d s\right]\nonumber\\
    &\leq C_{\Lambda, \mathfrak{C}}  \exp \left [  \int_{0}^{T}\big(\theta_j(X_s)-\theta_k(X_s)\big)\d B_s\right],
\end{align}
where the inequality follows from the boundedness of $b_\ell$ for $\ell\in\mathcal Y$
and Assumption~\ref{assump:main-theorem}.

Now for every $j,k\in\calY, \,j\neq k$, we define
\begin{equation}\label{eq:def-M-jk}
 M_{t}^{j,k}\coloneqq \int_{0}^{t}\big(\theta_j(X_s)-\theta_k(X_s)\big)\d B_s, \:t\in[0,T].    
\end{equation}
Then, for any fixed $\varepsilon\in(0,\frac{1}{2}]$, we have 
\begin{align}\label{eq:change-measure-j-to-k}
\mathcal{E}_j&(\widehat b_k,b_k)
\coloneqq
\EE_j \left[
\frac{1}{M}\sum_{m=0}^{M-1}
\Big(\widehat b_k(X_{t_m})-b_k(X_{t_m})\Big)^2
\right]\nonumber\\
&= \EE_k \left[
\left(\frac{1}{M}\sum_{m=0}^{M-1}
\Big(\widehat b_k(X_{t_m})-b_k(X_{t_m})\Big)^2 \right)\frac{\d \PP_j}{\d \PP_k}
\right]\nonumber\\
&\leq C_{\Lambda, \mathfrak{C}}\EE_k \left[
\left(\frac{1}{M}\sum_{m=0}^{M-1}
\Big(\widehat b_k(X_{t_m})-b_k(X_{t_m})\Big)^2 \right)\exp\big(M_T^{j,k}\big)
\right]\nonumber\\
%&\leq C_{\Lambda, \mathfrak{C}}\exp(a)\mathcal{E}_k(\widehat b_k,b_k)+C_{\Lambda, \mathfrak{C}}(b_{\max}+F)^2\EE \left[\exp\big(M_T^{j,k}\big)\mathbbm{1}_{\{M_T^{j,k}>a\}}\right],
&\leq C_{\Lambda, \mathfrak{C}} \left\{\EE_k \left[
\left(\frac{1}{M}\sum_{m=0}^{M-1}
\Big(\widehat b_k(X_{t_m})-b_k(X_{t_m})\Big)^2 \right)^{1+\frac{\varepsilon}{1-\varepsilon}}\right]\right\}^{1-\varepsilon} \left\{ \EE_k \left[\exp\left(\frac{1}{\varepsilon}M_T^{j,k}\right)\right] \right\}^{\varepsilon},
\end{align}
where the first inequality follows from \eqref{eq:upper-bound-change-density} and the definition of $M_t^{j,k}$ in \eqref{eq:def-M-jk}, and the second one from H\"older inequality. 
It follows that 
\begin{align}
    \EE_k \left[
\left(\frac{1}{M}\sum_{m=0}^{M-1}
\Big(\widehat b_k(X_{t_m})-b_k(X_{t_m})\Big)^2 \right)^{1+\frac{\varepsilon}{1-\varepsilon}}\right]\leq (b_{\max}+F)^2\mathcal{E}_k(\widehat b_k,b_k),
\end{align}
since $\widehat b_k$ and $b_k$ are respectively bounded by $F$ and $b_{\max}$ and $\frac{\varepsilon}{1-\varepsilon}\leq 1$ for $\varepsilon\in(0,\frac{1}{2}]$. Moreover, 
\begin{align}
    \EE_k \left[\exp\left(\frac{1}{\varepsilon}M_T^{j,k}\right)\right] =\EE_k \left[\exp\left(\frac{1}{\varepsilon}M_T^{j,k}-\frac{1}{2\varepsilon^2}\langle M^{j,k}\rangle_T\right)\exp\left(\frac{1}{2\varepsilon^2}\langle M^{j,k}\rangle_T\right)\right]. 
\end{align}
Since the functions $b_k, k\in\calY$ are bounded, Assumption \ref{assump:main-theorem} implies that  $\xi_k(\cdot)$ is uniformly bounded by $b_{\max}^2 \Lambda$. Hence 
\[\exp\left(\frac{1}{2\varepsilon^2}\langle M^{j,k}\rangle_T\right)\leq \exp\left(\frac{Tb_{\max}^2 \Lambda}{\varepsilon^2}\right)<+\infty.\]
Therefore, \citet[Proposition 5.12 and Corollary 5.13]{karatzas1991brownian} implies that 
\[\EE_k \left[\exp\left(\frac{1}{\varepsilon}M_T^{j,k}-\frac{1}{2\varepsilon^2}\langle M^{j,k}\rangle_T\right)\right]=1. \]
Finally, gathering the above inequalities together gives 
$\mathcal{E}_j(\widehat b_k,b_k)\leq C_{\Lambda, \mathfrak{C}, \varepsilon}\mathcal{E}_k(\widehat b_k,b_k)^{1-\varepsilon}$.\end{proof}

\begin{proof}[Proof of Theorem \ref{thm:main-NN}]

By Lemma~\ref{lem:relation-E-with-Ek} and the definitions of
$\mathcal{E}(\widehat b_k,b_k)$ and $\mathcal{E}_j(\widehat b_k,b_k)$ in
\eqref{eq:def-estimation-error-all} and
\eqref{eq:def-estimation-error-class-k}, we have
\[\mathcal{E}(\widehat b_k, b_k)=\sum_{j\in\calY}\mathfrak{p_j}\mathcal{E}_j(\widehat b_k, b_k)\leq C_{\Lambda, \mathfrak{C}, \varepsilon}\mathcal{E}_k(\widehat b_k,b_k)^{1-\varepsilon}+\mathfrak{p}_k \mathcal{E}_k(\widehat b_k,b_k).\]
Moreover, by Corollary~2.3 of \citet{zhao2025drift}, the estimation error of 
$b_k$ satisfies  
\[
\mathcal{E}_k(\widehat b_k,b_k)
\le
C\,\phi_N\,\log^3 N .
\]
Combining this bound with Theorem~\ref{thm:main-decomposition} yields the
desired result. This concludes the proof of Theorem~\ref{thm:main-NN}.
\end{proof}

\section{Experimental Details}\label{app:Experimental-Details}

\subsection{Example in Section \ref{sec:double-layer-example}}\label{app:Experimental-3.1}

\paragraph{Data Generation.}
Sample paths of \eqref{eq:sde} are simulated using the Euler--Maruyama discretization scheme over the time interval $[0,1]$, discretized with $M = 100$ steps and time step $\Delta = 1/M = 0.01$. Experiments are performed for dimensions $d \in \{1,2,5,10,50\}$.

Recall that, in this example, $K=3$. For each dimension $d$, and for each label $k\in\calY=\{1,2,3\}$, we generate $N_k\in \{2^5,2^6,\dots,2^{12}\}$ independent trajectories as the training set, denoted by  %grouped by 
\[
\mathcal{D}^{[k]}_{N_k} = \bigl\{\, \bar{X}^{[k],(n)}_{t_0:t_M} \,\bigr\}_{1\leq n\leq N_k}. 
\]
%where $N_k $ for all $k \in \mathcal{Y}$. %We set $K = 3$.

For evaluation, we independently generate 1000 trajectories for each label $k\in\calY$, denoted by 
\[
\mathcal{D}_{\mathrm{test}} := \bigcup_{k \in \mathcal{Y}} \mathcal{D}^{[k]}_{\mathrm{test}} \quad \text{ with }\quad
\mathcal{D}^{[k]}_{\mathrm{test}} := \bigl\{\, \tilde{X}^{[k],(n')}_{t_0:t_M}  \,\bigr\}_{1\leq n' \leq 1000},\quad k \in \mathcal{Y}.
\]

\paragraph{NN-based plug-in classifier.}  \:%\\ %\label{app:nn-plugin}

{\sc Architecture.} For the neural network estimator for $b_k$,  we use the sparse network architecture $\mathbf{p}= (d, 16, 32, 32, 16, 1)$ with ReLU activation function and the sparsity ratio $s_{\text{ratio}} = 0.75$ inspired by \citet{zhao2025drift}. 
% The number of nonzero parameters $s$ is controlled as a proportion $s_{\mathrm{ratio}}$ of the total number of parameters, with $s_{\text{ratio}} = 0.75$. And the sparsity constraint $s$ used in the estimator class $\mathcal{F}(L, \mathbf{p}, s, F)$ \eqref{eq:sparseNN} is computed as $$s = \left\lfloor s_{\text{ratio}} \cdot \sum_{i=0}^{L} (p_i + 1)\cdot p_{i+1} \right\rfloor,$$ where $L$ is the number of hidden layers and $(p_0,\dots, p_{L+1})$ is the layer width vector.

{\sc Training Procedure.}
Training is performed using the Adam optimizer with a learning rate of $10^{-3}$ and batch size of 256.  For each experimental configuration, specified by a choice of the dimension $d$, the sample size $N$, and a random seed, we randomly reserve 50\% of the training samples as a validation set for early stopping. Early stopping is triggered if the validation error does not improve for 20 consecutive epochs, with a maximum of 200 training epochs. Once early stopping is activated, we retrain the model on the full training dataset for a number of epochs equal to twice the epoch at which early stopping occurred.

%For each experimental setting, we randomly keep 50\% validation samples to conduct early stopping, which is triggered after 20 consecutive epochs without improvement of validation errors on validation set. Training is performed for up to 200 epochs. Once early stopping is triggered, we retrain the model using the entire training dataset  for a total number of epochs equal to twice the number of epochs observed at the early-stopping point.

During training, the training samples are shuffled and processed in mini-batches. After each optimizer update, we enforce the sparsity by keeping only the top 75\% parameters (by absolute value) and clip all parameters to the interval $[-1, 1]$. 

Random seeds for NumPy and PyTorch are fixed in each repetition to ensure reproducibility.

\paragraph{B-spline–based plug-in classifier.}%\label{app:bspline}
For the B-spline estimator for $b_k$, we adopt the same procedure (including tuning and implementation details) as in \citet{Denis2024}.
% and its accompanying code.

%\paragraph{Direct neural network classification.}%\label{app:nn-direct}

\paragraph{Trajectory-based classifier by feedforward neural network.}
A direct pathwise classification approach is considered, where the full discretely observed path
$$ \bar{X}^{[k],(n)}_{t_0:t_M}
 :=
 \big(
\bar X^{[k],(n)}_{t_0},
 \dots,
\bar X^{[k],(n)}_{t_M}
 \big)$$
is fed into a neural network, which directly outputs a prediction of the class label $k\in\calY$ associated with the trajectory. This approach reduces the problem to a standard multiclass classification task and does not exploit the underlying SDE structure.

A fully connected neural network with two hidden layers and ReLU activation functions is employed.
For each experimental setting, the training data are randomly split, with 50\% of the samples allocated to a validation set for hyperparameter tuning. Hyperparameters are optimized with Optuna \citep{akiba2019optuna}, and the configuration that maximizes the validation accuracy is selected. The hyperparameter search space is defined as follows: %The hyperparameter search space consists of:
\begin{itemize}
    \item learning rate $\in \{10^{-4},\, 3\times10^{-4},\, 10^{-3},\, 3\times10^{-3}\}$,
    \item weight decay $\in\{0,\, 10^{-5},\, 
    10^{-4},\, 10^{-3}\}$,
    \item hidden layers size $\in\{(16,16),\, (32,32),\, (64,64),\, (128,128),\, (256,128)\}$,
    \item batch size $\in\{64,\, 128,\, 256\}$.
\end{itemize}
Early stopping based on the validation accuracy is applied, with a patience of 50 epochs and a maximum budget of 200 epochs. The model achieving the highest validation accuracy during training is retained. 

Random seeds for NumPy and PyTorch are fixed in each repetition to ensure reproducibility.

% The direct NN classifier exhibits an error rate above 60\%. When the total training size satisfies $N \le 3075$, its performance is essentially indistinguishable from random guessing, whose error rate is $66.67\%$ in the three-class setting.

\paragraph{Trajectory-based classifier by RNN.}
A vanilla RNN with $\tanh$ activation is employed for direct pathwise classification.
The final RNN hidden state is used as the trajectory representation and fed into a linear classification classifier.

The same validation split, Optuna-based tuning procedure, early-stopping rule, and random-seed setting are used as for the feedforward neural network classifier. 
The hyperparameter search space is defined as follows:

\begin{itemize}

    \item learning rate $\in \{10^{-4},\, 3\times10^{-4},\, 10^{-3},\, 3\times10^{-3}\}$,

    \item weight decay $\in\{0,\, 10^{-5},\, 10^{-4},\, 10^{-3}\}$,

    \item recurrent hidden dimension $\in\{32,\,64,\,128\}$,

    \item number of recurrent layers $\in\{1,\,2\}$,

    \item dropout rate $\in\{0,\,0.1,\,0.2\}$,

    \item batch size $\in\{64,\,128,\,256\}$.

\end{itemize}

\paragraph{Trajectory-based classifier by TCN.}
A temporal convolutional network is used for direct pathwise classification.
The TCN consists of residual one-dimensional convolutional blocks with ReLU activation and dropout.
The temporal features are averaged over time and fed into a linear classification classifier.

The same validation split, Optuna-based tuning procedure, early-stopping rule, and random-seed setting are used as for the feedforward neural network classifier. 
The hyperparameter search space is defined as follows:

\begin{itemize}

    \item learning rate $\in \{10^{-4},\, 3\times10^{-4},\, 10^{-3},\, 3\times10^{-3}\}$,

    \item weight decay $\in\{0,\, 10^{-5},\, 10^{-4},\, 10^{-3}\}$,

    \item channel sizes $\in\{(16,32,32),\, (32,64,64),\, (32,64,128)\}$,

    \item kernel size $\in\{3,\,5\}$,

    \item dropout rate $\in\{0,\,0.1,\,0.2\}$,

    \item batch size $\in\{64,\,128,\,256\}$.

\end{itemize}

\paragraph{Trajectory-based classifier by Transformer.}
A Transformer encoder is employed for direct pathwise classification.
Each one-dimensional trajectory observation is projected into a $d_{\mathrm{model}}$-dimensional embedding, with trainable positional embeddings added to encode temporal order.
The encoded features are averaged over time and passed through layer normalization and a linear classification classifier.

The same validation split, Optuna-based tuning procedure, early-stopping rule, and random-seed setting are used as for the feedforward neural network classifier. 
The hyperparameter search space is defined as follows:

\begin{itemize}

    \item learning rate $\in \{10^{-4},\, 3\times10^{-4},\, 10^{-3},\, 3\times10^{-3}\}$,

    \item weight decay $\in\{0,\, 10^{-5},\, 10^{-4},\, 10^{-3}\}$,

    \item embedding dimension $d_{\mathrm{model}}\in\{32,\,64,\,128\}$,

    \item number of attention heads $\in\{2,\,4,\,8\}$,

    \item number of encoder layers $\in\{1,\,2,\,3\}$,

    \item feed-forward dimension $\in\{64,\,128,\,256\}$,

    \item dropout rate $\in\{0,\,0.1,\,0.2\}$,

    \item batch size $\in\{64,\,128,\,256\}$.

\end{itemize}

\paragraph{Evaluation.}
To account for the randomness in data generation and optimization process, for each pair $(N, d)$, we repeat the experiment 50 times with different random seeds, resulting in a series of different estimators and test set $(\widehat{g}_{N,d}^{(j)}, \mathcal{D}_{\text{test}}^{(j)})_{j=1}^{50}$. Let $E_{N,d}^{(j)}$ denote the generalization error computed in the $j$-th run with training size $N$ and dimension $d$, defined as 
$$E_{N,d}^{(j)} := \frac{1}{3000}\sum_{n'=1}^{1000} \sum_{k=1}^{3}\mathbbm{1}_{\big\{\widehat{g}_{N,d}^{(j)}(\tilde X^{[k], n'}_{t_0:t_M} )\neq k \big\}}-\calR(g^*),$$
%$$E_{N,d}^{(j)} := \mathcal{R}(\widehat{g}_{N,d}^{(j)}; \mathcal{D}_{\mathrm{test}}^{(j)}),$$
%where the error is evaluated on the test dataset $\mathcal{D}_{\text{test}}^{(j)}$.  
For each $(N, d)$, we compute:
\begin{equation}
\label{eq:minmax-interval}
\bar{E}_{N,d} := \frac{1}{50} \sum_{j=1}^{50} E_{N,d}^{(j)}, \quad
E_{N,d}^{\text{lower}} := \bar{E}_{N,d} - t_{0.975}^{(49)} \cdot \tau_{N,d}, \quad
E_{N,d}^{\text{upper}} := \bar{E}_{N,d} + t_{0.975}^{(49)} \cdot \tau_{N,d}.
\end{equation}
where $\tau_{N,d} = \frac{1}{\sqrt{50}} \big( \frac{1}{49} \sum_{j=1}^{50} \big(E_{N,d}^{(j)} - \bar{E}_{N,d} \big)^2 \big)^{1/2}$ and $t_{0.975}^{(49)}$ is the 97.5 percentile of the Student’s $t$ distribution with 49 degrees of freedom. 

We use log$_2$--log$_2$ plots to display the average empirical error $\bar{E}_{N,d}$ and the corresponding interval $[E_{N,d}^{\text{lower}}, E_{N,d}^{\text{upper}}]$ as functions of the training sample size $N$.

\paragraph{Compute resources.}
The experiments were run on cloud GPU servers, primarily using NVIDIA RTX-series GPUs. Across all reported experiments, the total runtime was approximately 30 GPU-hours. Individual runtimes varied across simulation settings and GPU availability.

\subsection{Example in Section \ref{sec:cos-example}.}\label{app:Experimental-3.2}
%For the comparison with B-spline estimator, we adopt the same construction method of B-spline estimator as in \citet{Denis2024}. The NN-based and 

%The B-spline–based classifiers 

%follow the same experimental setup as in Section~\ref{sec:double-layer-example} and \citet[Section~3 and 5.2]{Denis2024}, respectively.

\paragraph{Data Generation.}

In this example, the numbers of trajectories in each class are allowed to differ, following the experimental setup of \citet{Denis2024}. Specifically, we consider $N \in \{100,1000\}$ and draw $(N_1, N_2, N_3)$ from a multinomial distribution with $N$ trials and equal class probabilities $(\tfrac{1}{3}, \tfrac{1}{3}, \tfrac{1}{3})$, and repeat each experiment 100 times with different random seeds. 

Once the class sizes $N_k$ are determined, for each class $k = 1,2,3$, we generate $N_k$ independent trajectories for training
\[
\mathcal{D}^{[k]}_{N_k}
:=
\bigl\{
\bar{X}^{[k],(n)}_{t_0:t_M}
\bigr\}_{n=1}^{N_k},
\]
using the same simulation procedure as described in Section~\ref{app:Experimental-3.1}. The test set is generated similarly, with $N'=1000$ trajectories. The class prior probabilities $\mathfrak{p}_k$ are estimated empirically from the training data using
\[
\widehat{\mathfrak{p}}_k = \tfrac{N_k}{N}. 
\]

%The training set is then defined by 

%In this example, the number of paths in each class is different following the description  in \citet{Denis2024}. In particuler, we test $N \in \{100,1000\}$ and fix $(N_1, N_2, N_3)$ with the Multinomial distribution with $N$ trials and repective probabilities $(\tfrac{1}{3},\tfrac{1}{3},\tfrac{1}{3})$. 
%Once $N_k$ is fixed, for each class $k=1,2,3$, we generate $N_k$ independent trajectories $\mathcal{D}^{[k]}_{N_k} := \{ \bar{X}^{[k],(n)}_{t_0,t_M} \}_{n=1}^{N_k}$ with the same method as in Section \ref{app:Experimental-3.1}

%where  and two test sets $\mathcal{D}_{\mathrm{test}}^{[k]} := \{\tilde{X}^{[k],(n)}\}_{n=1}^{N'_k}$ with
%\[
%(N'_1, N'_2, N'_3) \sim \mathrm{Multinomial}\bigl(N',(1/3,1/3,1/3)\bigr),
%\]
%where $N' = 1000$. The classifier is then evaluated on
%\[
%\mathcal{D}_{\mathrm{test}} := \bigcup_{k\in\mathcal{Y}} \mathcal{D}_{\mathrm{test}}^{[k]}.
%\]

%We simulate sample paths of \eqref{eq:sde} under the same configuration of \ref{app:Experimental-3.1}

%However, in this example, 

\paragraph{NN-based and B-spline–based plug-in classifier.}

The configurations of the NN-based and B-spline–based plug-in classifiers are the same as those described in Section~\ref{app:Experimental-3.1}.  %We  as described in \citet{Denis2024} and compute the  

%For the B-spline estimator for $b_k$, we adopt the same procedure (including tuning and implementation details) as in \citet{Denis2024}.

%\subsubsection{Evaluation}
%We evaluate the empirical classification error of the Bayes classifier defined in \eqref{eq:def-misclassification-risk}, as well as the average empirical classification error over 100 independent runs of the NN-based and B-spline-based plug-in classifiers, for
%\[
%\theta \in \{0.5,\, 1,\, 1.5,\, 2.5,\, 4\},
%\]
%with $b_k$ specified in \eqref{eq:numerical-def-b_k-cos}.

% The empirical classification error defined in \eqref{eq:def-misclassification-risk} of Bayes classifier and the averaged empirical classification error over 100 runs of NN-based plug-in classifer and B-spline-based plug-in classifier are calculated for $\theta \in {0.5,1,5,2,5,4}$  in \eqref{eq:numerical-def-b_k-cos}.

\section{Extension to Unknown Diffusion Coefficient $\sigma$}\label{app:sigma-unknown}
%\paragraph{}

In this section, we extend the analysis in Theorem \ref{thm:main-decomposition} to the case where $\sigma$ is unknown and replaced by an estimator $\widehat{\sigma}$. We refer to e.g. \citet{Comte2007, Comte2010, Florens-Zmirou1993} and \citet{MR3052408, MR1693608} for the construction of an estimator of the diffusion coefficient $\sigma$. Recall that Assumption~\ref{assump:main-theorem} states that there exists a constant $\Lambda > 0$ such that the coefficient function $\sigma$ and 
$
a=\sigma\sigma^{\top}
$ satisfy $\max(|\sigma(x)|,|a(x)^{-1}|_{\mathrm{op}}) \leq \Lambda, \,x\in\RD$. 
Define
$
\widehat{a}\coloneqq \widehat{\sigma}\widehat{\sigma}^{\top}.
$
\begin{assumption}\label{assump:sigma-hat}
The estimator $\widehat\sigma(x)$ is invertible for every $x\in\RD$ and satisfies
\[
\forall x\in\RD,\qquad \max(|\widehat\sigma(x)|, |\widehat{a}(x)^{-1}|_{\mathrm{op}}) \le \Lambda.
\]
\end{assumption}
This assumption is natural in view of Assumption~\ref{assump:main-theorem} when \(\widehat\sigma\) is intended to provide a sufficiently accurate approximation of \(\sigma\). Now define the estimation error for $\sigma$ by
\begin{equation}\label{eq:def-estimation-error-sigma}
\mathcal{E}(\widehat \sigma, \sigma)
\coloneqq
\EE \!\left[
\frac{1}{M}\sum_{m=0}^{M-1}
\Big|\widehat \sigma(X_{t_m})-\sigma(X_{t_m})\Big|_{\mathrm{op}}^2
\right]
\end{equation}
and 
\begin{align}\label{eq:F-hat-new}
\widehat{F}^{\,\widehat{\sigma}}_k(X)
\coloneqq
&\sum_{m=0}^{M-1} \widehat{b}_k(X_{t_m})^{\top} \big(\widehat{\sigma}\widehat{\sigma}^{\top}\big)^{-1}(X_{t_m})\,(X_{t_{m+1}}-X_{t_m})
-\frac{\Delta}{2}\sum_{m=0}^{M-1} \big\lvert \widehat{\sigma}^{-1}(X_{t_m})\,\widehat{b}_k(X_{t_m}) \big\rvert^2.
\end{align}
Then, the plug-in classifier defined in \eqref{eq:def-pi-hat-g-hat} becomes
\begin{equation}\label{eq:def-pi-hat-g-hat-with-F-new}
\widehat\pi^{\,\widehat{\sigma}}_k(X) = \phi_k\big( \widehat{F}^{\,\widehat{\sigma}}(X) \big),
\qquad
\widehat g^{\,\widehat{\sigma}}(X)\in\argmax_{k\in\mathcal{Y}} \widehat \pi^{\,\widehat{\sigma}}_k(X),
\end{equation}
where
\[
\widehat{F}^{\,\widehat{\sigma}}(X)=\big(\widehat{F}^{\,\widehat{\sigma}}_1(X),\dots,\widehat{F}^{\,\widehat{\sigma}}_K(X)\big),
\]
and  the functions $\phi_k$, $k\in\mathcal{Y}$, are the softmax functions defined in \eqref{eq:def-phi}.

\begin{theorem}\label{thm:main-decomposition-sigma-unknown}
For each \( k \in \mathcal Y \), let \( \widehat b_k \) be an estimator of the drift function \( b_k \) such that 
$\sup_{x\in\RD} |\widehat b_k(x)|\leq \widehat b_{\max}$ for some constant $\widehat b_{\max}>0$.  Let \(\widehat\sigma\) be an estimator of \(\sigma\), and let \(g^{\,\widehat\sigma}\) denote the plug-in classifier defined in \eqref{eq:def-pi-hat-g-hat-with-F-new}, associated with the estimators \(\widehat b_k, {k\in\calY}\) and \(\widehat\sigma\). Assume that Assumptions~\ref{assump:Lipschitz}, \ref{assump:Novikov}, \ref{assump:main-theorem} and \ref{assump:sigma-hat} hold.
Then there exists a constant \( C_{\Lambda, \mathfrak{C}, b_{\max}} > 0 \), such that
\begin{equation}\label{eq:inequality-in-main-thm-with-sigma-hat}
    \calR(g^{\,\widehat{\sigma}})-\mathcal{R}(g^*)\leq K^2 C_{\Lambda, \mathfrak{C}, \widehat b_{\max}}\big(\sqrt{\Delta} + \max_{k\in\calY}\calE(\widehat b_k, b_k)^{\frac{1}{2}}+\mathcal{E}(\widehat\sigma,\sigma)^{\frac{1}{2}}\big).
\end{equation}
\end{theorem}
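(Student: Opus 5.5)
We follow the proof of Theorem~\ref{thm:main-decomposition} line by line and add one extra intermediate score that isolates the error coming from $\widehat\sigma$. As in that proof, Proposition~2 of \citet{Denis2020} together with the 1-Lipschitz property of the softmax maps $\phi_k$ gives
\[
\calR(g^{\,\widehat\sigma})-\calR(g^*)\le 2\sum_{k=1}^K\sum_{\ell=1}^K \EE\big[|F^*_\ell(X)-\widehat F^{\,\widehat\sigma}_\ell(X)|\big].
\]
We then write
\[
F^*_\ell-\widehat F^{\,\widehat\sigma}_\ell=(F^*_\ell-\bar F_\ell)+(\bar F_\ell-\widehat F_\ell)+(\widehat F_\ell-\widehat F^{\,\widehat\sigma}_\ell).
\]
The first two differences are bounded exactly as in Steps~1 and~2 of the proof of Theorem~\ref{thm:main-decomposition}, by $\mathfrak{C}\sqrt{\Delta}$ and $C_{\Lambda,\mathfrak{C},\widehat b_{\max}}\calE(\widehat b_\ell,b_\ell)^{1/2}$ respectively. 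It therefore remains to prove
\[
\EE\big[|\widehat F_\ell(X)-\widehat F^{\,\widehat\sigma}_\ell(X)|\big]\le C\,\calE(\widehat\sigma,\sigma)^{1/2}.
\]

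To control the third difference, we first record matrix perturbation bounds, using the identity $A^{-1}-B^{-1}=A^{-1}(B-A)B^{-1}$. First,
\[
|a^{-1}-\widehat a^{-1}|_{\mathrm{op}}\le |a^{-1}|_{\mathrm{op}}\,|\widehat a-a|_{\mathrm{op}}\,|\widehat a^{-1}|_{\mathrm{op}}\le \Lambda^2|\widehat\sigma\widehat\sigma^\top-\sigma\sigma^\top|_{\mathrm{op}}.
\]
Second, using $\widehat\sigma\widehat\sigma^\top-\sigma\sigma^\top=(\widehat\sigma-\sigma)\widehat\sigma^\top+\sigma(\widehat\sigma-\sigma)^\top$ and Assumptions~\ref{assump:main-theorem} and~\ref{assump:sigma-hat},
\[
|\widehat\sigma\widehat\sigma^\top-\sigma\sigma^\top|_{\mathrm{op}}\le 2\Lambda|\widehat\sigma-\sigma|_{\mathrm{op}}.
\]
Hence $|a^{-1}-\widehat a^{-1}|_{\mathrm{op}}\le 2\Lambda^3|\widehat\sigma-\sigma|_{\mathrm{op}}$. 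For the quadratic term we use $|\sigma^{-1}b|^2=b^\top a^{-1}b$, which gives
\[
\big||\sigma^{-1}\widehat b_\ell|^2-|\widehat\sigma^{-1}\widehat b_\ell|^2\big|\le \widehat b_{\max}^2\,|a^{-1}-\widehat a^{-1}|_{\mathrm{op}}.
\]
Writing the difference via $\eta(s)$ as in \eqref{eq:F-bar-rewrite}, we obtain
\[
\widehat F_\ell-\widehat F^{\,\widehat\sigma}_\ell=\int_0^T\widehat b_\ell(X_{\eta(s)})^\top\big(a^{-1}-\widehat a^{-1}\big)(X_{\eta(s)})\,\d X_s-\tfrac12\int_0^T \widehat b_\ell^\top\big(a^{-1}-\widehat a^{-1}\big)\widehat b_\ell(X_{\eta(s)})\,\d s.
\]

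We bound the terms in this decomposition by mimicking (I.a), (I.b) and (II). Substituting $\d X_s=b_Y(X_s)\d s+\sigma(X_s)\d B_s$ splits the stochastic integral into a drift part and a martingale part.

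The drift part is bounded by Cauchy--Schwarz:
\[
\EE\int_0^T|\widehat b_\ell|\,|a^{-1}-\widehat a^{-1}|_{\mathrm{op}}\,|b_Y(X_s)|\,\d s\le 2\Lambda^3\widehat b_{\max}\Big(\EE\int_0^T|\widehat\sigma-\sigma|^2_{\mathrm{op}}(X_{\eta(s)})\,\d s\Big)^{1/2}\Big(\EE\int_0^T|b_Y(X_s)|^2\,\d s\Big)^{1/2}.
\]
The first factor equals $\big(T\,\calE(\widehat\sigma,\sigma)\big)^{1/2}$, since the integrand is piecewise constant on the grid. The second factor is bounded by $\mathfrak{C}$ by Lemma~\ref{lem:properties-X}.

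The martingale part is handled by Jensen's inequality and It\^o's isometry, using $|\sigma|\le\Lambda$; this gives a bound of order $\Lambda^4\widehat b_{\max}\,T^{1/2}\calE(\widehat\sigma,\sigma)^{1/2}$.

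The $\d s$ term is bounded by $\widehat b_{\max}^2\,\Lambda^3\,\EE\int_0^T|\widehat\sigma-\sigma|_{\mathrm{op}}(X_{\eta(s)})\,\d s$, and one further Cauchy--Schwarz step brings it to the form $C\,\calE(\widehat\sigma,\sigma)^{1/2}$.

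Summing the three bounds over $\ell$ and $k$ produces the factor $K^2$ and yields \eqref{eq:inequality-in-main-thm-with-sigma-hat}.

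The main point requiring care is the stochastic integral, because the test trajectory $X$ carries an unknown label $Y$. This causes no difficulty for two reasons. The integrand is evaluated at the grid point $X_{\eta(s)}$, so it is adapted and It\^o's isometry applies. The unknown drift $b_Y$ only enters through the second moment $\EE\int_0^T|b_Y(X_s)|^2\d s$, which Lemma~\ref{lem:properties-X} bounds uniformly in the label via linear growth and $\sup_t\|X_t\|_4\le\mathfrak{C}$. A second technical point is that $\widehat\sigma$ is an estimator and hence random; we assume it is built from data independent of the test path, so that, conditionally on the training set, it can be treated as a deterministic bounded function.
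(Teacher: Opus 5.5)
Your proof is correct and uses the same ingredients as the paper's: Step~1 of Theorem~\ref{thm:main-decomposition}, the identity $A^{-1}-B^{-1}=A^{-1}(B-A)B^{-1}$ together with $|\widehat a-a|_{\mathrm{op}}\le 2\Lambda|\widehat\sigma-\sigma|_{\mathrm{op}}$, Cauchy--Schwarz for the $\d s$ parts, and Jensen plus It\^o's isometry for the martingale part. The only difference is that you telescope through the intermediate score $\widehat F_\ell$, which lets you reuse Step~2 of Theorem~\ref{thm:main-decomposition} verbatim and isolate a pure $\widehat\sigma$-term, whereas the paper bounds $\bar F_k-\widehat F^{\,\widehat\sigma}_k$ in one pass (Proposition~\ref{prop:error-F-hat-new}) with the same split done pointwise in Lemma~\ref{lem:sigma-hat-bound}; your ordering, which applies It\^o's isometry before bounding the integrand's norm, is also slightly cleaner than the paper's treatment of (I.b) there.
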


The proof of Theorem \ref{thm:main-decomposition-sigma-unknown} is based on the following proposition and is postponed to the end of this section.

\begin{proposition}\label{prop:error-F-hat-new}
Under the same conditions as in Theorem \ref{thm:main-decomposition-sigma-unknown},  there exists a  constant \(C=C_{\Lambda,\mathfrak C,\widehat b_{\max}}\) such that
\[ \EE \left[ \left|\bar F_k(X)-\widehat F^{\,\widehat{\sigma}}_k(X)\right|\right]\leq C_{\Lambda, \mathfrak{C}, \widehat b_{\max}}\left(\calE(\widehat b_k, b_k)^{\frac{1}{2}}+\mathcal{E}(\widehat\sigma,\sigma)^{\frac{1}{2}}\right).\]
\end{proposition}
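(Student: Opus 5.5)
We split the difference through the intermediate score $\widehat F_k$ from \eqref{eq:F-hat}, which uses the estimated drift but the true $\sigma$. By the triangle inequality,
\[
\EE\big[|\bar F_k(X)-\widehat F^{\,\widehat\sigma}_k(X)|\big]\le \EE\big[|\bar F_k(X)-\widehat F_k(X)|\big]+\EE\big[|\widehat F_k(X)-\widehat F^{\,\widehat\sigma}_k(X)|\big].
\]
The first term is exactly Step 2 of the proof of Theorem~\ref{thm:main-decomposition}, giving $C_{\Lambda,\mathfrak C,\widehat b_{\max}}\calE(\widehat b_k,b_k)^{1/2}$. So the new work is only the second term. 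It contains no drift estimation error: only $a^{-1}$ versus $\widehat a^{-1}$ changes, with $\widehat b_k$ fixed and bounded by $\widehat b_{\max}$.

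For the second term we write it with $\eta(s)$ as in \eqref{eq:F-bar-rewrite}, and split into a stochastic-integral part (III) and a quadratic part (IV):
\[
\text{(III)}=\int_0^T \widehat b_k(X_{\eta(s)})^{\top}\big(a^{-1}-\widehat a^{-1}\big)(X_{\eta(s)})\,\d X_s,
\]
\[
\text{(IV)}=\tfrac12\int_0^T \widehat b_k(X_{\eta(s)})^{\top}\big(a^{-1}-\widehat a^{-1}\big)(X_{\eta(s)})\,\widehat b_k(X_{\eta(s)})\,\d s,
\]
where we used $|\sigma^{-1}b|^2=b^{\top}a^{-1}b$ and its analogue with $\widehat\sigma$.

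The key pointwise estimate uses $a^{-1}-\widehat a^{-1}=a^{-1}(\widehat a-a)\widehat a^{-1}$. Together with Assumptions~\ref{assump:main-theorem} and~\ref{assump:sigma-hat} this gives
\[
|a^{-1}-\widehat a^{-1}|_{\mathrm{op}}\le \Lambda^2|\widehat a-a|_{\mathrm{op}}.
\]
Next, $\widehat a-a=\widehat\sigma(\widehat\sigma-\sigma)^{\top}+(\widehat\sigma-\sigma)\sigma^{\top}$, so
\[
|\widehat a-a|_{\mathrm{op}}\le(|\widehat\sigma|+|\sigma|)\,|\widehat\sigma-\sigma|_{\mathrm{op}}\le 2\Lambda|\widehat\sigma-\sigma|_{\mathrm{op}},
\]
using $|\cdot|_{\mathrm{op}}\le|\cdot|$. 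Hence $|a^{-1}-\widehat a^{-1}|_{\mathrm{op}}\le 2\Lambda^3|\widehat\sigma-\sigma|_{\mathrm{op}}$ at every point.

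With this estimate the three pieces are bounded as follows.

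\emph{Quadratic part (IV).} Its expectation is at most
\[
\Lambda^3\widehat b_{\max}^2\,\EE\Big[\int_0^T|\widehat\sigma-\sigma|_{\mathrm{op}}(X_{\eta(s)})\,\d s\Big]\le \Lambda^3\widehat b_{\max}^2\,T\,\calE(\widehat\sigma,\sigma)^{1/2},
\]
by Cauchy--Schwarz and the identity $\int_0^T f(X_{\eta(s)})\,\d s=\Delta\sum_m f(X_{t_m})$.

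\emph{Drift part of (III).} Substitute $\d X_s=b_Y(X_s)\,\d s+\sigma(X_s)\,\d B_s$, exactly as in (I.a)/(I.b). The $\d s$ piece is bounded by Cauchy--Schwarz as in \eqref{eq:I-a}, now with the factor $\widehat b_{\max}$ in place of $|b_k-\widehat b_k|$ and $\EE\int|b_Y(X_s)|^2\,\d s\le\mathfrak C$ from Lemma~\ref{lem:properties-X}.

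\emph{Martingale part of (III).} Jensen's inequality and It\^o's isometry give
\[
\Big(\EE\int_0^T \widehat b_{\max}^2\,|a^{-1}-\widehat a^{-1}|^2_{\mathrm{op}}(X_{\eta(s)})\,|\sigma(X_s)|^2\,\d s\Big)^{1/2}\le C\,\calE(\widehat\sigma,\sigma)^{1/2},
\]
using $|\sigma|\le\Lambda$.

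Collecting the three bounds gives $C_{\Lambda,\mathfrak C,\widehat b_{\max}}\calE(\widehat\sigma,\sigma)^{1/2}$ for the second term, which proves the proposition.

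The main obstacle is measurability and independence: $\widehat\sigma$ may be built from data, so for the It\^o isometry we need $\widehat\sigma$ (like $\widehat b_k$) to be independent of the test path $X$. We would treat it as a deterministic function by conditioning on the training data, as is implicit in the drift case. The only other delicate point is the operator/Frobenius norm bookkeeping, handled by the resolvent identity above.
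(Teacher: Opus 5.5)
Your proof is correct, and it reorganizes the argument rather than following the paper's decomposition.

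\textbf{How the paper proceeds.} The paper never passes through the known-$\sigma$ score $\widehat F_k$. It writes $\bar F_k-\widehat F^{\,\widehat\sigma}_k$ directly as a stochastic integral with the combined integrand $b_k^{\top}a^{-1}-\widehat b_k^{\top}\widehat a^{-1}$, plus a quadratic difference. It then splits pointwise using Lemma~\ref{lem:sigma-hat-bound}, which goes through $\widehat b_k^{\top}a^{-1}$ via the triangle inequality and the resolvent identity. Next it applies Lemma~\ref{lem:a-hat-sigma-hat-bound} to get $|\widehat a-a|_{\mathrm{op}}\le 2\Lambda|\widehat\sigma-\sigma|_{\mathrm{op}}$. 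This produces the terms (I.a.1), (I.a.2), (I.b.1), (I.b.2) and two pieces of (II).

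\textbf{How your route differs.} You use the same intermediate object (estimated drift with the true $a^{-1}$), but you split at the level of the score functionals. The drift-error part is therefore inherited verbatim from Step~2 of Theorem~\ref{thm:main-decomposition}. The only new work involves terms where $\widehat b_k$ is frozen and bounded by $\widehat b_{\max}$. This also simplifies the quadratic term: no product with $|b_k|+|\widehat b_k|$ is needed. The algebra (resolvent identity, expansion of $\widehat a-a$) and the tools (Cauchy--Schwarz, Jensen plus It\^o isometry, Lemma~\ref{lem:properties-X}) are the same as in the paper.

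\textbf{Your martingale step is cleaner.} You apply It\^o's isometry to the exact vector integrand, and only afterwards bound it pointwise by $\widehat b_{\max}\,|a^{-1}-\widehat a^{-1}|_{\mathrm{op}}\,\Lambda$. The paper instead inserts the scalar pointwise bound of Lemma~\ref{lem:sigma-hat-bound} inside the stochastic integral before invoking the isometry. That step is only legitimate once the isometry has turned the integral into $\EE\int_0^T|\cdot|^2\,\d s$. In exchange, the paper's route is self-contained and packages both error sources into one pointwise lemma.

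\textbf{A wording point.} In the $\d s$ part of (III), the quantity that replaces $\Lambda|b_k-\widehat b_k|$ is $2\Lambda^3\widehat b_{\max}|\widehat\sigma-\sigma|_{\mathrm{op}}(X_{\eta(s)})$, not $\widehat b_{\max}$ alone. It is the $|\widehat\sigma-\sigma|_{\mathrm{op}}$ factor that carries $\calE(\widehat\sigma,\sigma)^{1/2}$ through Cauchy--Schwarz.
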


The proof of Proposition \ref{prop:error-F-hat-new} relies on the following two lemmas.

\begin{lemma}\label{lem:sigma-hat-bound}
    Assume that Assumptions \ref{assump:Lipschitz}, \ref{assump:Novikov}, \ref{assump:main-theorem} and \ref{assump:sigma-hat} hold. Then, for every \(x\in\RD\),
\begin{align}
&\left| b_k(x)^{\top}a^{-1}(x)-\widehat b_k(x)^{\top}\,\widehat{a}^{-1}(x)\right|\leq \Lambda |b_k(x)-\widehat b_k(x)|+\Lambda^2|\widehat{b}_k(x)|\left|\widehat{a}(x)-a(x)\right|_{\mathrm{op}},\nonumber\\
&\Big|\big\lvert \sigma^{-1} (x)\,b_k(x) \big\rvert^2-\big\lvert \widehat{\sigma}^{-1} (x)\,\widehat{b}_k(x) \big\rvert^2\Big|\nonumber\\
&\qquad \leq \Lambda |b_k(x)-\widehat b_k(x)|\bigl(|b_k(x)|+|\widehat b_k(x)|\bigr)+\Lambda^2 |\widehat{b}_k(x)|^2\,|\widehat a(x)-a(x)|_{\mathrm{op}}.\nonumber
\end{align}
\end{lemma}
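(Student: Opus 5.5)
Both inequalities follow from adding and subtracting an intermediate term, the triangle inequality, and the resolvent identity for matrix inverses. The plan is to fix $x\in\RD$, suppress the argument, and write $\widehat a=\widehat\sigma\widehat\sigma^{\top}$ and $a=\sigma\sigma^{\top}$. The bounds used throughout are $|a^{-1}|_{\mathrm{op}}\le\Lambda$ from Assumption~\ref{assump:main-theorem} and $|\widehat a^{-1}|_{\mathrm{op}}\le\Lambda$ from Assumption~\ref{assump:sigma-hat}.

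\textbf{First inequality.} Insert the mixed term $\widehat b_k^{\top}a^{-1}$:
\[
b_k^{\top}a^{-1}-\widehat b_k^{\top}\widehat a^{-1}=(b_k-\widehat b_k)^{\top}a^{-1}+\widehat b_k^{\top}\bigl(a^{-1}-\widehat a^{-1}\bigr).
\]
The first piece has norm at most $\Lambda|b_k-\widehat b_k|$. For the second, use the identity $a^{-1}-\widehat a^{-1}=a^{-1}(\widehat a-a)\widehat a^{-1}$. Submultiplicativity of the operator norm then gives $|a^{-1}-\widehat a^{-1}|_{\mathrm{op}}\le\Lambda^2|\widehat a-a|_{\mathrm{op}}$. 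Since $|v^{\top}A|\le|v||A|_{\mathrm{op}}$, this yields the bound $\Lambda^2|\widehat b_k||\widehat a-a|_{\mathrm{op}}$.

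\textbf{Second inequality.} Rewrite the squared norms as quadratic forms: $|\sigma^{-1}b_k|^2=b_k^{\top}a^{-1}b_k$ and $|\widehat\sigma^{-1}\widehat b_k|^2=\widehat b_k^{\top}\widehat a^{-1}\widehat b_k$. Then split with the intermediate term $\widehat b_k^{\top}a^{-1}\widehat b_k$:
\[
b_k^{\top}a^{-1}b_k-\widehat b_k^{\top}a^{-1}\widehat b_k=(b_k-\widehat b_k)^{\top}a^{-1}(b_k+\widehat b_k).
\]
This identity holds because $a^{-1}$ is symmetric, so the cross terms cancel. It is bounded by $\Lambda|b_k-\widehat b_k|\,(|b_k|+|\widehat b_k|)$. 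The remaining term is $\widehat b_k^{\top}(a^{-1}-\widehat a^{-1})\widehat b_k$. Using the same resolvent identity, it is bounded by $\Lambda^2|\widehat b_k|^2|\widehat a-a|_{\mathrm{op}}$.

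The only step needing care is the symmetry remark, which is what makes the cross terms cancel in the quadratic-form difference. Everything else is direct.
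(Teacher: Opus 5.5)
Your proof is correct and follows the paper's argument. For the first bound you use the same intermediate term $\widehat b_k^{\top}a^{-1}$ and the same identity $a^{-1}-\widehat a^{-1}=a^{-1}(\widehat a-a)\widehat a^{-1}$; for the second you use the same quadratic-form rewriting. The only difference is cosmetic: the paper also telescopes through $\widehat b_k^{\top}a^{-1}b_k$ and bounds the two drift-difference terms separately, so it never needs the symmetry of $a^{-1}$ that you invoke to merge them into $(b_k-\widehat b_k)^{\top}a^{-1}(b_k+\widehat b_k)$.
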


\begin{proof}[Proof of Lemma \ref{lem:sigma-hat-bound}]
First, note that for any invertible matrices \(A\) and \(B\), we have
\[
A^{-1}-B^{-1}=A^{-1}(B-A)B^{-1}.
\]
Therefore,
\begin{equation}%\label{eq:operator-norm-submultiplicative}
   |A^{-1}-B^{-1}|_{\mathrm{op}}\leq|A^{-1}|_{\mathrm{op}}|B-A|_{\mathrm{op}}|B^{-1}|_{\mathrm{op}}\nonumber
\end{equation}
by the submultiplicativity of the operator norm.
Hence, for every \(x\in\RD\),
%\begin{align}
%&\left| b_k(x)^{\top}a^{-1}(x)-\widehat b_k(x)^{\top}\,\widehat{a}^{-1}(x)\right|\nonumber\\
%&\leq \left| b_k(x)^{\top}a^{-1}(x)- b_k(x)^{\top}\,\widehat{a}^{-1}(x)\right|+\left| b_k(x)^{\top}\widehat{a}^{-1}(x)-\widehat b_k(x)^{\top}\,\widehat{a}^{-1}(x)\right|\nonumber\\
%&\leq |b_k(x)||a^{-1}(x)-\widehat{a}^{-1}(x)|_{\mathrm{op}}+|b_k(x)-\widehat b_k(x)||\widehat{a}^{-1}(x)|_{\mathrm{op}}\nonumber\\
%&\leq  |b_k(x)||a^{-1}(x)|_{\mathrm{op}}|\widehat{a}(x)-a(x)|_{\mathrm{op}}|\widehat{a}^{-1}(x)|_{\mathrm{op}}+|b_k(x)-\widehat b_k(x)||\widehat{a}^{-1}(x)|_{\mathrm{op}}\nonumber\\
%&\leq \Lambda^2|b_k(x)||\widehat{a}(x)-a(x)|_{\mathrm{op}}+\Lambda|b_k(x)-\widehat b_k(x)|.\nonumber
%\end{align}
\begin{align}
&\left| b_k(x)^{\top}a^{-1}(x)-\widehat b_k(x)^{\top}\,\widehat{a}^{-1}(x)\right|\nonumber\\
&\leq \left| b_k(x)^{\top}a^{-1}(x)- \widehat{b}_k(x)^{\top}\,a^{-1}(x)\right|+\left| \widehat{b}_k(x)^{\top}\,a^{-1}(x)-\widehat b_k(x)^{\top}\,\widehat{a}^{-1}(x)\right|\nonumber\\
&\leq |b_k(x)-\widehat b_k(x)||a^{-1}(x)|_{\mathrm{op}}+|\widehat{b}_k(x)||a^{-1}(x)-\widehat{a}^{-1}(x)|_{\mathrm{op}}\nonumber\\
&\leq |b_k(x)-\widehat b_k(x)||{a}^{-1}(x)|_{\mathrm{op}}+ |\widehat{b}_k(x)||a^{-1}(x)|_{\mathrm{op}}|\widehat{a}(x)-a(x)|_{\mathrm{op}}|\widehat{a}^{-1}(x)|_{\mathrm{op}}\nonumber\\
&\leq \Lambda|b_k(x)-\widehat b_k(x)|+\Lambda^2|\widehat{b}_k(x)||\widehat{a}(x)-a(x)|_{\mathrm{op}}.\nonumber
\end{align}

We now prove the second inequality. For every \(x\in\RD\),
%\begin{align}
%&\Big|\big\lvert \sigma^{-1} (x)\,b_k(x) \big\rvert^2-\big\lvert \widehat{\sigma}^{-1} (x)\,\widehat{b}_k(x) \big\rvert^2\Big|= \left| b_k(x)^{\top}a^{-1}(x)b_k(x)-\widehat b_k(x)^{\top}\widehat a^{-1}(x)\widehat b_k(x)\right|\nonumber\\
%&\leq \left| b_k(x)^{\top}a^{-1}(x)b_k(x)-b_k(x)^{\top}\widehat a^{-1}(x)b_k(x)\right| +\left| b_k(x)^{\top}\widehat a^{-1}(x)b_k(x)-b_k(x)^{\top}\widehat a^{-1}(x)\widehat b_k(x)\right|\nonumber\\
%&\quad +\left| b_k(x)^{\top}\widehat a^{-1}(x)\widehat b_k(x)-\widehat b_k(x)^{\top}\widehat a^{-1}(x)\widehat b_k(x)\right|\nonumber\\
%&\leq |b_k(x)|^2\,|a^{-1}(x)-\widehat a^{-1}(x)|_{\mathrm{op}} +|b_k(x)|\,|\widehat a^{-1}(x)|_{\mathrm{op}}\,|b_k(x)-\widehat b_k(x)| +|\widehat b_k(x)|\,|\widehat a^{-1}(x)|_{\mathrm{op}}\,|b_k(x)-\widehat b_k(x)|\nonumber\\
%&\leq |b_k(x)|^2\,|a^{-1}(x)|_{\mathrm{op}}\,|\widehat a(x)-a(x)|_{\mathrm{op}}\,|\widehat a^{-1}(x)|_{\mathrm{op}}+\Lambda |b_k(x)-\widehat b_k(x)|\bigl(|b_k(x)|+|\widehat b_k(x)|\bigr)\nonumber\\
%&\leq \Lambda^2 |b_k(x)|^2\,|\widehat a(x)-a(x)|_{\mathrm{op}}
%+\Lambda |b_k(x)-\widehat b_k(x)|\bigl(|b_k(x)|+|\widehat b_k(x)|\bigr).\nonumber\hfill \qedhere
%\end{align}
\begin{align}
&\Big|\big\lvert \sigma^{-1} (x)\,b_k(x) \big\rvert^2-\big\lvert \widehat{\sigma}^{-1} (x)\,\widehat{b}_k(x) \big\rvert^2\Big|= \left| b_k(x)^{\top}a^{-1}(x)b_k(x)-\widehat b_k(x)^{\top}\widehat a^{-1}(x)\widehat b_k(x)\right|\nonumber\\
&\leq \left| b_k(x)^{\top}a^{-1}(x)b_k(x)-\widehat b_k(x)^{\top} a^{-1}(x)b_k(x)\right| +\left| \widehat b_k(x)^{\top} a^{-1}(x)b_k(x)-\widehat b_k(x)^{\top} a^{-1}(x)\widehat b_k(x)\right|\nonumber\\
&\quad +\left| \widehat b_k(x)^{\top} a^{-1}(x)\widehat b_k(x)-\widehat b_k(x)^{\top}\widehat a^{-1}(x)\widehat b_k(x)\right|\nonumber\\
&\leq |b_k(x)|\,| a^{-1}(x)|_{\mathrm{op}}\,|b_k(x)-\widehat b_k(x)| +|\widehat b_k(x)|\,| a^{-1}(x)|_{\mathrm{op}}\,|b_k(x)-\widehat b_k(x)|\nonumber\\
&\qquad +|\widehat{b}_k(x)|^2\,|a^{-1}(x)-\widehat a^{-1}(x)|_{\mathrm{op}} \nonumber\\
&\leq \Lambda |b_k(x)-\widehat b_k(x)|\bigl(|b_k(x)|+|\widehat b_k(x)|\bigr)+|\widehat{b}_k(x)|^2\,|a^{-1}(x)|_{\mathrm{op}}\,|\widehat a(x)-a(x)|_{\mathrm{op}}\,|\widehat a^{-1}(x)|_{\mathrm{op}}\nonumber\\
&\leq \Lambda |b_k(x)-\widehat b_k(x)|\bigl(|b_k(x)|+|\widehat b_k(x)|\bigr)+\Lambda^2 |\widehat{b}_k(x)|^2\,|\widehat a(x)-a(x)|_{\mathrm{op}}.\nonumber\hfill \qedhere
\end{align}
\end{proof}

\begin{lemma}\label{lem:a-hat-sigma-hat-bound}
Assume that Assumptions~\ref{assump:main-theorem} and \ref{assump:sigma-hat} hold. We have
\[
\EE\!\left[
\frac{1}{M}\sum_{m=0}^{M-1}
\left|\widehat a(X_{t_m})-a(X_{t_m})\right|_{\mathrm{op}}^2
\right]
\le 4\Lambda^2\,\mathcal E(\widehat \sigma,\sigma).
\]
\end{lemma}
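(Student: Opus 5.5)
The bound is pointwise in $x$ and is then averaged. The plan is to first prove, for every $x\in\RD$,
\[
|\widehat a(x)-a(x)|_{\mathrm{op}}\le 2\Lambda\,|\widehat\sigma(x)-\sigma(x)|_{\mathrm{op}}.
\]
Squaring gives the factor $4\Lambda^2$. Evaluating at $x=X_{t_m}$, averaging over $m$, and taking expectations then gives exactly $4\Lambda^2\,\mathcal E(\widehat\sigma,\sigma)$, by the definition \eqref{eq:def-estimation-error-sigma}.

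For the pointwise bound, write $\widehat a-a=\widehat\sigma\widehat\sigma^{\top}-\sigma\sigma^{\top}$ and add and subtract a mixed term:
\[
\widehat a-a=(\widehat\sigma-\sigma)\widehat\sigma^{\top}+\sigma(\widehat\sigma-\sigma)^{\top}.
\]
By the triangle inequality, submultiplicativity, and $|A^{\top}|_{\mathrm{op}}=|A|_{\mathrm{op}}$, this yields
\[
|\widehat a-a|_{\mathrm{op}}\le |\widehat\sigma-\sigma|_{\mathrm{op}}\bigl(|\widehat\sigma|_{\mathrm{op}}+|\sigma|_{\mathrm{op}}\bigr).
\]
Next, use $|A|_{\mathrm{op}}\le|A|$ (Frobenius), as already done in the proof of Theorem~\ref{thm:main-decomposition}. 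Together with Assumption~\ref{assump:main-theorem}, which gives $|\sigma(x)|\le\Lambda$, and Assumption~\ref{assump:sigma-hat}, which gives $|\widehat\sigma(x)|\le\Lambda$, we get $|\widehat\sigma|_{\mathrm{op}}+|\sigma|_{\mathrm{op}}\le 2\Lambda$. This proves the pointwise claim.

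No real obstacle is expected. The only point to check is that the uniform bounds on $\sigma$ and $\widehat\sigma$ are stated in the Frobenius norm, and that this norm dominates the operator norm. Measurability of $x\mapsto|\widehat a(x)-a(x)|_{\mathrm{op}}$ is implicit in the definition of $\mathcal E(\widehat\sigma,\sigma)$. The step from the pointwise inequality to the expectation is just monotonicity and linearity of $\EE$.
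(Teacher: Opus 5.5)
Your proposal is correct and follows essentially the same route as the paper. Both use the decomposition $\widehat a-a=(\widehat\sigma-\sigma)\widehat\sigma^{\top}+\sigma(\widehat\sigma-\sigma)^{\top}$, then submultiplicativity to get $|\widehat a-a|_{\mathrm{op}}\le 2\Lambda|\widehat\sigma-\sigma|_{\mathrm{op}}$ pointwise, then square, average over $m$ and take expectations. You also make explicit the step $|\cdot|_{\mathrm{op}}\le|\cdot|$ needed to pass from the Frobenius-norm bounds in the assumptions to $|\widehat\sigma|_{\mathrm{op}}+|\sigma|_{\mathrm{op}}\le 2\Lambda$, which the paper uses silently.
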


\begin{proof}[Proof of Lemma \ref{lem:a-hat-sigma-hat-bound}]
For every \(x\in\RD\), we write
\begin{align*}
\widehat a(x)-a(x)
&=
\widehat \sigma(x)\widehat \sigma(x)^{\top}
-\sigma(x)\sigma(x)^{\top}=
\bigl(\widehat \sigma(x)-\sigma(x)\bigr)\widehat \sigma(x)^{\top}
+\sigma(x)\bigl(\widehat \sigma(x)-\sigma(x)\bigr)^{\top}.
\end{align*}
Taking the operator norm and using the triangle inequality, together with the submultiplicativity of the operator norm, yields
\begin{align*}
\left|\widehat a(x)-a(x)\right|_{\mathrm{op}}
&\le
\left|\bigl(\widehat \sigma(x)-\sigma(x)\bigr)\widehat \sigma(x)^{\top}\right|_{\mathrm{op}}
+
\left|\sigma(x)\bigl(\widehat \sigma(x)-\sigma(x)\bigr)^{\top}\right|_{\mathrm{op}}\\
&\le
\left|\widehat \sigma(x)-\sigma(x)\right|_{\mathrm{op}}
\left|\widehat \sigma(x)^{\top}\right|_{\mathrm{op}}
+
\left|\sigma(x)\right|_{\mathrm{op}}
\left|\bigl(\widehat \sigma(x)-\sigma(x)\bigr)^{\top}\right|_{\mathrm{op}}.
\end{align*}
%Since the operator norm is invariant under transposition, we have
%\[
%\left|\widehat \sigma(x)^{\top}\right|_{\mathrm{op}}
%=
%\left|\widehat \sigma(x)\right|_{\mathrm{op}},
%\qquad
%\left|\bigl(\widehat \sigma(x)-\sigma(x)\bigr)^{\top}\right|_{\mathrm{op}}
%=
%\left|\widehat \sigma(x)-\sigma(x)\right|_{\mathrm{op}}.
%\]
Therefore,
\[
\left|\widehat a(x)-a(x)\right|_{\mathrm{op}}
\le
\bigl(
|\widehat \sigma(x)|_{\mathrm{op}}+|\sigma(x)|_{\mathrm{op}}
\bigr)
\left|\widehat \sigma(x)-\sigma(x)\right|_{\mathrm{op}}\le 2\Lambda \left|\widehat \sigma(x)-\sigma(x)\right|_{\mathrm{op}}
\]
and
\[
\EE\!\left[
\frac{1}{M}\sum_{m=0}^{M-1}
\left|\widehat a(X_{t_m})-a(X_{t_m})\right|_{\mathrm{op}}^2
\right]
\le
4\Lambda^2
\EE\!\left[
\frac{1}{M}\sum_{m=0}^{M-1}
\left|\widehat \sigma(X_{t_m})-\sigma(X_{t_m})\right|_{\mathrm{op}}^2
\right]= 4\Lambda^2\,\mathcal E(\widehat \sigma,\sigma). \hfill\qedhere
\]
%The last claim follows immediately from the definition of \(\mathcal E(\widehat \sigma,\sigma)\).
\end{proof}

%\begin{lemma}\label{lem:sigma-hat-bound}
%Assume that Assumptions~\ref{assump:Lipschitz}, \ref{assump:Novikov}, \ref{assump:main-theorem}, and \ref{assump:sigma-hat} hold. Then, for every \(x\in\RD\),

%\end{lemma}

\begin{proof}[Proof of Proposition \ref{prop:error-F-hat-new}]
Recall from \eqref{eq:def-eta(s)} that, for every $s\in[0,T]$,
\[
\eta(s)\coloneqq t_m \quad \text{if } t_m \le s < t_{m+1}, \qquad \text{with } \eta(T)=T.
\]
Hence, we can rewrite
\begin{align}\label{eq:F-bar-rewrite}
&\bar{F}_k(X)
= \int_{0}^{T}b_k(X_{\eta(s)})^{\top} \big(\sigma\sigma^{\top}\big)^{-1}(X_{\eta(s)})\d X_s - \frac{1}{2}\int_{0}^{T} \big\lvert \sigma^{-1} (X_{\eta(s)})\,b_k(X_{\eta(s)}) \big\rvert^2 \d s,\nonumber\\
&\widehat{F}^{\,\widehat{\sigma}}_k(X)=\int_{0}^{T} \widehat{b}_k(X_{\eta(s)})^{\top} \big(\widehat{\sigma}\widehat{\sigma}^{\top}\big)^{-1}(X_{\eta(s)})\d X_s - \frac{1}{2}\int_{0}^{T} \big\lvert \widehat{\sigma}^{-1} (X_{\eta(s)})\,\widehat{b}_k(X_{\eta(s)}) \big\rvert^2 \d s.
\end{align}
Therefore, 
\begin{align}\label{eq:part-I-and-II-add-sigma-hat}
    &\EE \left[ |\bar F_k(X)-\widehat F^{\,\widehat{\sigma}}_k(X)|\right]\leq  \EE \left[\left|\int_{0}^T \Big( b_k(X_{\eta(s)})^{\top}a^{-1}(X_{\eta(s)})-\widehat b_k(X_{\eta(s)})^{\top}\,\widehat{a}^{-1}(X_{\eta(s)})\Big)\d X_s\right|\right]\nonumber\\
     &\quad +\frac{1}{2}\EE \left[\left|\int_0^T \left[\big\lvert \sigma^{-1} (X_{\eta(s)})\,b_k(X_{\eta(s)}) \big\rvert^2-\big\lvert \widehat{\sigma}^{-1} (X_{\eta(s)})\,\widehat{b}_k(X_{\eta(s)}) \big\rvert^2\right]\d s\right|\right]\eqqcolon \text{(I) + $\frac{1}{2}$(II)}.
\end{align}
For part (I) of \eqref{eq:part-I-and-II-add-sigma-hat}, we have
\begin{align}\label{eq:I}
 \text{(I)} & =   \EE \left[\left|\int_{0}^T \Big( b_k(X_{\eta(s)})^{\top}a^{-1}(X_{\eta(s)})-\widehat b_k(X_{\eta(s)})^{\top}\,\widehat{a}^{-1}(X_{\eta(s)})\Big)\Big(b_Y(X_s) \d s + \sigma(X_s) \d B_s \Big)\right|\right]\nonumber\\
 & \leq \EE \left[\left|\int_{0}^T \Big( b_k(X_{\eta(s)})^{\top}a^{-1}(X_{\eta(s)})-\widehat b_k(X_{\eta(s)})^{\top}\,\widehat{a}^{-1}(X_{\eta(s)})\Big)b_Y(X_s) \d s \right|\right]\nonumber\\
 &\quad +\EE \left[\left|\int_{0}^T \Big( b_k(X_{\eta(s)})^{\top}a^{-1}(X_{\eta(s)})-\widehat b_k(X_{\eta(s)})^{\top}\,\widehat{a}^{-1}(X_{\eta(s)})\Big)\sigma(X_s) \d B_s \right|\right]\eqqcolon \text{(I.a) + (I.b)}.
\end{align}
Next, for (I.a), using  Lemma \ref{lem:properties-X} and Lemma \ref{lem:sigma-hat-bound},  we obtain 
\begin{align}\label{eq:I-a}
    \text{(I.a)}&\leq \EE \left[\int_{0}^T \left|\Big( b_k(X_{\eta(s)})^{\top}a^{-1}(X_{\eta(s)})-\widehat b_k(X_{\eta(s)})^{\top}\,\widehat{a}^{-1}(X_{\eta(s)})\Big)b_Y(X_s) \right|\d s \right]\nonumber\\
    &\leq \EE \left[\int_{0}^T \left| b_k(X_{\eta(s)})^{\top}a^{-1}(X_{\eta(s)})-\widehat b_k(X_{\eta(s)})^{\top}\,\widehat{a}^{-1}(X_{\eta(s)})\right| \left|b_Y(X_s) \right|\d s \right]\nonumber\\
 %   &\leq \EE \left[\int_{0}^T \left[ \Lambda \big|b_k(X_{\eta(s)})-\widehat b_k(X_{\eta(s)})\big|\left|b_Y(X_s) \right|+\Lambda^2|\widehat{b}_k(X_{\eta(s)})|\left|\widehat{a}(X_{\eta(s)})-a(X_{\eta(s)})\right|_{\mathrm{op}}\left|b_Y(X_s) \right|\right] \d s \right]\nonumber\\
    &\leq \Lambda\EE \left[\int_{0}^T  \big|{b}_k(X_{\eta(s)})-\widehat b_k(X_{\eta(s)})\big|\left|b_Y(X_s) \right| \d s \right] \nonumber\\
    &\qquad + \Lambda^2\EE \left[\int_{0}^T |\widehat{b}_k(X_{\eta(s)})|\left|\widehat{a}(X_{\eta(s)})-a(X_{\eta(s)})\right|_{\mathrm{op}}\left|b_Y(X_s) \right| \d s \right] \nonumber\\
    &\eqqcolon \text{(I.a.1)}+\text{(I.a.2)}. 
\end{align}
For \text{(I.a.1)}, using the Cauchy–Schwarz inequality,  we obtain 
\begin{align}\label{eq:I-a}
    \text{(I.a.1)}%&\leq \EE \left[\int_{0}^T \left|\big( b_k(X_{\eta(s)})-\widehat b_k(X_{\eta(s)})\big)^{\top}a^{-1}(X_{\eta(s)})b_Y(X_s) \right|\d s \right]\nonumber\\
    %&\leq \Lambda\EE \left[\int_{0}^T \left|\big( b_k(X_{\eta(s)})-\widehat b_k(X_{\eta(s)})\big)\right| \left|b_Y(X_s) \right|\d s \right]\nonumber\\
    &\leq \Lambda\EE \left[\left(\int_{0}^T \left|\big( b_k(X_{\eta(s)})-\widehat b_k(X_{\eta(s)})\big)\right|^2\d s\right)^{\frac{1}{2}}\left(\int_{0}^T\left|b_Y(X_s) \right|^2\d s\right)^{\frac{1}{2}} \right]\nonumber\\
    &\leq \Lambda \left(\EE \left[\int_{0}^T \left|\big( b_k(X_{\eta(s)})-\widehat b_k(X_{\eta(s)})\big)\right|^2\d s\right]\right)^{\frac{1}{2}}\left(\EE \left[\int_{0}^T\left|b_Y(X_s) \right|^2\d s \right]\right)^{\frac{1}{2}}\nonumber\\
    &\leq \Lambda \mathfrak{C}\left(\EE \left[\Delta\sum_{m=0}^{M-1}\left|\big( b_k(X_{t_m})-\widehat b_k(X_{t_m})\big)\right|^2\right]\right)^{\frac{1}{2}}=\Lambda \mathfrak{C}\,  \calE(\widehat b_k, b_k)^{\frac{1}{2}}.
\end{align}
Similarly, for \text{(I.a.2)}, still by the Cauchy–Schwarz inequality,  we obtain 
\begin{align}
    \text{(I.a.2)}&\leq \Lambda^2\EE \left[\left(\int_{0}^T \left|\widehat{a}(X_{\eta(s)})-a(X_{\eta(s)})\right|_{\mathrm{op}}^2\d s\right)^{\frac{1}{2}}\left(\int_{0}^T|\widehat b_k(X_{\eta(s)})|^2\left|b_Y(X_s) \right|^2\d s\right)^{\frac{1}{2}} \right]\nonumber\\
    &\leq \Lambda^2\left(\EE \left[\int_{0}^T \left|\widehat{a}(X_{\eta(s)})-a(X_{\eta(s)})\right|_{\mathrm{op}}^2\d s\right]\right)^{\frac{1}{2}}\left(\EE \left[\int_{0}^T|\widehat{b}_k(X_{\eta(s)})|^2\left|b_Y(X_s) \right|^2\d s \right]\right)^{\frac{1}{2}}\nonumber\\
    &\leq \Lambda^2 \widehat{b}_{\max}\sqrt{T}\left(\EE \left[\Delta\sum_{m=0}^{M-1}\left|\big( \widehat a(X_{t_m})-a(X_{t_m})\big)\right|_{\mathrm{op}}^2\right]\right)^{\frac{1}{2}} \left(\EE \left[\int_{0}^T\left|b_Y(X_s) \right|^2\d s \right]\right)^{\frac{1}{2}}\nonumber\\
    &\leq \Lambda^2\mathfrak{C}\left(\EE \left[\Delta\sum_{m=0}^{M-1}\left|\big( \widehat a(X_{t_m})-a(X_{t_m})\big)\right|_{\mathrm{op}}^2\right]\right)^{\frac{1}{2}}\leq \Lambda^{3}\widehat{b}_{\max}\mathfrak{C}\,\mathcal{E}(\widehat\sigma,\sigma)^{\frac{1}{2}}. \nonumber
    %&\leq \Lambda^2\mathfrak{C}\left(\EE \left[\Delta\sum_{m=0}^{M-1}\left|\big( b_k(X_{t_m})-\widehat b_k(X_{t_m})\big)\right|^2\right]\right)^{\frac{1}{2}}=\Lambda \mathfrak{C}\,  \calE(\widehat b_k, b_k)^{\frac{1}{2}}.\nonumber\\
    %&\leq \Lambda^2\EE \left[\int_{0}^T |b_k(X_{\eta(s)})|\left|\widehat{a}(X_{\eta(s)})-a(X_{\eta(s)})\right|_{\mathrm{op}}\left|b_Y(X_s) \right| \d s \right]
\end{align}
For (I.b), by applying Lemma \ref{lem:sigma-hat-bound} and using Jensen's inequality and It\^o isometry, we have
\begin{align}
  & \text{(I.b)}= \EE \left[\left|\int_{0}^T \Big( b_k(X_{\eta(s)})^{\top}a^{-1}(X_{\eta(s)})-\widehat b_k(X_{\eta(s)})^{\top}\,\widehat{a}^{-1}(X_{\eta(s)})\Big)\sigma(X_s) \d B_s \right|\right]\nonumber\\
   &\leq \EE \left[\left|\int_{0}^T \Big( \Lambda |b_k(X_{\eta(s)})-\widehat b_k(X_{\eta(s)})|+\Lambda^2|\widehat b_k(X_{\eta(s)})|\left|\widehat{a}(X_{\eta(s)})-a(X_{\eta(s)})\right|_{\mathrm{op}}\Big)\sigma(X_s) \d B_s \right|\right]\nonumber\\
%   &\leq \left\{\EE \left[\left|\int_{0}^T \Big( \Lambda |b_k(X_{\eta(s)})-\widehat b_k(X_{\eta(s)})|+\Lambda^2|\widehat b_k(X_{\eta(s)})|\left|\widehat{a}(X_{\eta(s)})-a(X_{\eta(s)})\right|_{\mathrm{op}}\Big)\sigma(X_s) \d B_s \right|^2\right]\right\}^{\frac{1}{2}}\nonumber\\
   &\leq \left\{\!\EE \left[\int_{0}^T \!\!\left|\Big( \Lambda |b_k(X_{\eta(s)})-\widehat b_k(X_{\eta(s)})|\!+\!\Lambda^2|\widehat b_k(X_{\eta(s)})|\left|\widehat{a}(X_{\eta(s)})\!-\!a(X_{\eta(s)})\right|_{\mathrm{op}}\!\Big)\sigma(X_s)\right|^2 \!\d s \right]\!\right\}^{\frac{1}{2}}\nonumber\\
   &\leq\sqrt{2}\left\{\EE \left[\int_{0}^T \left|\Lambda |b_k(X_{\eta(s)})-\widehat b_k(X_{\eta(s)})|\sigma(X_s)\right|^2 \d s \right]\right\}^{\frac{1}{2}}\nonumber\\
   &\qquad +\sqrt{2}\left\{\EE \left[\int_{0}^T \left|\Lambda^2|\widehat b_k(X_{\eta(s)})|\left|\widehat{a}(X_{\eta(s)})-a(X_{\eta(s)})\right|_{\mathrm{op}}\sigma(X_s)\right|^2 \d s \right]\right\}^{\frac{1}{2}}\nonumber\\
   &\eqqcolon \text{(I.b.1)}+\text{(I.b.2)}.\nonumber
\end{align}
Next, for (I.b.1),  we obtain
\begin{align}
    \text{(I.b.1)}\leq \sqrt{2}\Lambda^2 \left\{    \int_{0}^T \EE \left[\left| b_k(X_{\eta(s)})-\widehat b_k(X_{\eta(s)})\right|^2\right]\d s \right\}^{\frac{1}{2}} = \sqrt{2}\Lambda^2  \sqrt{T} \calE(\widehat b_k, b_k)^{\frac{1}{2}}.\nonumber
\end{align}
Similarly, for (I.b.2), we have
\begin{align}
    \text{(I.b.2)}
    &=\sqrt{2}\left\{ \EE \left[\int_{0}^T \Lambda^4|\widehat b_k(X_{\eta(s)})|^2\left|\widehat{a}(X_{\eta(s)})-a(X_{\eta(s)})\right|_{\mathrm{op}}^2|\sigma(X_s)|^2 \d s \right]\right\}^{\frac{1}{2}}\nonumber\\
    &\leq \sqrt{2}\Lambda^3 \widehat b_{\max} \mathfrak{C}\left(\EE \left[\Delta\sum_{m=0}^{M-1}\left|\big( \widehat a(X_{t_m})-a(X_{t_m})\big)\right|_{\mathrm{op}}^2\right]\right)^{\frac{1}{2}}\leq \sqrt{2}\Lambda^{4} \widehat b_{\max} \mathfrak{C}\mathcal{E}(\widehat\sigma,\sigma)^{\frac{1}{2}}.
\end{align}
For part (II) of \eqref{eq:part-I-and-II-add-sigma-hat}, by applying Lemma \ref{lem:sigma-hat-bound} and the Cauchy–Schwarz inequality, we obtain 
\begin{align}
    \text{(II)} & = \EE \left[\left|\int_0^T \left[\big\lvert \sigma^{-1} (X_{\eta(s)})\,b_k(X_{\eta(s)}) \big\rvert^2-\big\lvert \widehat{\sigma}^{-1} (X_{\eta(s)})\,\widehat{b}_k(X_{\eta(s)}) \big\rvert^2\right]\d s\right|\right]\nonumber\\
    &\leq \EE \left[\int_0^T \Lambda \left|b_k(X_{\eta(s)})-\widehat b_k(X_{\eta(s)})\right|\left(\left|b_k(X_{\eta(s)})\right|+\left|\widehat b_k(X_{\eta(s)})\right|\right)\d s\right]\nonumber\\
    &\qquad +\EE \left[\int_0^T \Lambda^2 \left|\widehat{b}_k(X_{\eta(s)})\right|^2\,\left|\widehat a(X_{\eta(s)})-a(X_{\eta(s)})\right|_{\mathrm{op}}\d s\right]\nonumber\\
    &\leq \Lambda \!\left\{\!\EE \left[\int_0^T \!\!\left|b_k(X_{\eta(s)})-\widehat b_k(X_{\eta(s)})\right|^2 \!\d s\right]\!\right\}^{\frac{1}{2}}\!\!\left\{\EE\left[\int_0^T \!\!\left(\left|b_k(X_{\eta(s)})\right|+\left|\widehat b_k(X_{\eta(s)})\right|\right)^2\!\d s\right]\!\right\}^{\frac{1}{2}}\nonumber\\
    &\qquad+\Lambda^2 \widehat{b}_{\max}^2\left\{\EE \left[\int_0^T \left|\widehat a(X_{\eta(s)})-a(X_{\eta(s)})\right|_{\mathrm{op}}^2\d s\right]\right\}^{\frac{1}{2}}\nonumber\\
    &\leq \Lambda (\sqrt{2}\widehat{b}_{\max}+\mathfrak{C})\calE(\widehat b_k, b_k)^{\frac{1}{2}}+\Lambda^{3} \widehat{b}_{\max}^2\mathfrak{C}\mathcal{E}(\widehat\sigma,\sigma)^{\frac{1}{2}}.
\end{align}

Combining the above bounds yields
   \[\EE \left[ |\bar F_k(X)-\widehat F^{\,\widehat{\sigma}}_k(X)|\right]\leq C_{\Lambda, \mathfrak{C}, \widehat b_{\max}}\left(\calE(\widehat b_k, b_k)^{\frac{1}{2}}+\mathcal{E}(\widehat\sigma,\sigma)^{\frac{1}{2}}\right). \hfill\qedhere\]
%%%%%%%%
%%%%%%%%
%%%%%%%%
\end{proof}

\begin{proof}[Proof of Theorem \ref{thm:main-decomposition-sigma-unknown}]
The first part of the proof follows the same arguments as in Theorem \ref{thm:main-decomposition}, with $\widehat \pi_k(X)$ and $\widehat g$ replaced by $\widehat \pi_k^{\,\widehat{\sigma}}(X)$ and $\widehat g^{\,\widehat{\sigma}}$, respectively. Using the definition of $\widehat{F}^{\,\widehat{\sigma}}$, inequality \eqref{eq:ineq-pi-and-F} in the proof of Theorem~\ref{thm:main-decomposition} becomes
\begin{align}\label{eq:ineq-with-F-new}
\EE \big[ \big|\pi_k^*(X)-\widehat \pi_k^{\,\widehat{\sigma}}(X) \big|\big]
&\leq \EE \big[ \big|\pi_k^*(X)-\bar \pi_k(X) \big|\big]
   +\EE \big[ \big|\bar\pi_k(X)-\widehat \pi^{\,\widehat{\sigma}}_k(X) \big|\big] \nonumber\\
&\leq \EE \big[ \big|F^*(X)-\bar F(X) \big|\big]
   +\EE \big[ \big|\bar F(X)-\widehat F^{\,\widehat{\sigma}}(X) \big|\big] \nonumber\\
&\leq \sum_{k=1}^K \left(
\EE \big[ \big|F_k^*(X)-\bar F_k(X) \big|\big]
+\EE \big[ \big|\bar F_k(X)-\widehat F^{\,\widehat{\sigma}}_k(X) \big|\big]
\right).
\end{align}
The term
$
\EE \big[ \big|F_k^*(X)-\bar F_k(X) \big|\big]
$
is treated exactly as in Step~1 of the proof of Theorem~\ref{thm:main-decomposition}. Moreover, the upper bound for
$
\EE \big[ \big|\bar F_k(X)-\widehat F^{\,\widehat{\sigma}}_k(X) \big|\big]
$
follows from Proposition~\ref{prop:error-F-hat-new}, which concludes the proof.
\end{proof}

\section{Comparison with Trajectory-Based Classifiers}\label{app:trajectory-comparison}

In this section, we compare the convergence rate of our approach with the trajectory-based classifier within the framework of \citet{bos2022}. We note that \citet{bos2022} studies classification error in terms of the Kullback--Leibler risk with respect to the true conditional class probabilities, rather than the excess risk considered in Theorems~\ref{thm:main-decomposition} and~\ref{thm:main-NN}. Hence, we first need to establish a connection between these two measures of classification error before discussing the comparison.

For the reader's convenience, we first recall the main notation of \citet{bos2022} in Section~\ref{sec:bos-notation}. In Section~\ref{sec:bos-prop}, we derive in Proposition~\ref{prop:excess-risk-bos2022} a convergence rate for the excess risk based on Theorem~3.3 of \citet{bos2022}. Finally, Section~\ref{sec:bos-comparison} provides a detailed comparison between Proposition~\ref{prop:excess-risk-bos2022} and Theorem~\ref{thm:main-NN}.

%For the reader's convenience, we first recall the main notation introduced in \citet{bos2022} in Section \ref{sec:bos-notation}. Then, Proposition~\ref{prop:excess-risk-bos2022} in Section \ref{sec:bos-prop} derives a convergence rate for the excess risk based on Theorem~3.3 of \citet{bos2022}. Finally, we  provide a detailed comparison between Proposition~\ref{prop:excess-risk-bos2022} and Theorem~\ref{thm:main-NN} in Section \ref{sec:bos-comparison}.

\subsection{Notation from \citet{bos2022}.} \label{sec:bos-notation}

\paragraph{Data, true conditional class probabilities, and error metric.} The dataset $\calD_N=\{(\mathbf{X}_i, \mathbf{Y}_i):i=1, ..., N\}$ consists of $n$ i.i.d. copies of the pair $(\mathbf{X}, \mathbf{Y})$, where the input data $\mathbf{X}$ takes values in $[0,1]^\mathbf{d}$ and the corresponding label
$\mathbf{Y}=(Y_1,\ldots,Y_K)^\top$ is encoded using the $K$-dimensional standard basis vectors: namely, $\mathbf{Y}=e_k=(0, \cdots, 0,\underset{k\text{-th}}{\underbrace{1}},0,\cdots, 0)$, if the observation belongs to class $k$. The true conditional class probabilities are defined as\begin{equation}\label{eq:true-cond-bos2022}    p_k^0(\mathbf{x})\coloneqq \mathbb{P}(Y_k=1\mid \mathbf{X}=\mathbf{x}), \qquad k=1,\ldots,K,\end{equation}
and we denote by\begin{equation}\label{eq:true-cond-vector-bos2022} 
     \mathbf{p}_0(\mathbf{x})\coloneqq \big(p_1^0(\mathbf{x}),\ldots,p_K^0(\mathbf{x})\big)^\top
\end{equation}  the corresponding vector of conditional class probabilities. In \citet{bos2022}, the expected truncated Kullback--Leibler risk 
\begin{equation}\label{eq:bos-error-def-RB}
    R_B(\mathbf{p}_0, \widehat{\mathbf{p}})\coloneqq \EE_{\calD_{N}, \mathbf{X}}\left[\,\mathrm{KL}_B\big( \mathbf{p}_0(\mathbf{X}), \widehat{\mathbf{p}}(\mathbf{X})\big)\,\right]
\end{equation}
is used as a metric to measure the discrepancy between the true conditional probabilities $\mathbf{p}_0$ and their estimators $\widehat{\mathbf{p}}$, where
\begin{equation}\label{eq:def-bos-truncated-Kullback--Leibler}
\mathrm{KL}_B \big(\mathbf{p}_0(\mathbf{X}), \widehat{\mathbf{p}}(\mathbf{X})\big) \coloneqq \mathbf{p}_0(\mathbf{X})^{\top} \left( B\wedge \log \left( \frac{\mathbf{p}_0(\mathbf{X})}{\widehat{\mathbf{p}}(\mathbf{X})}\right)\right),
\end{equation}
and $B$ is the truncation parameter (see Section 2 in \citet{bos2022}). When $B=\infty$, the truncated Kullback--Leibler risk $\mathrm{KL}_B$ coincides with the standard Kullback--Leibler risk.

\paragraph{FNN classifier in \citet{bos2022} and assumptions.} Let $\mathcal{S}_K$ be the $(K-1)$-simplex in $\RR^K$, that
is, \[\mathcal{S}_K=\left\{\mathbf{v}\in\mathbb{R}^K :\sum_{k=1}^K v_k=1, v_k\geq 0, k=1, \dots, K\right\}.\]
We consider deep ReLU networks with softmax output to model conditional class probabilities. 
Let $\sigma(x)=\max\{x,0\}$ denote the ReLU activation. A network with depth $L$ and widths 
$m=(m_0,\ldots,m_{L+1})$ is a function of the form
\[
\mathbf{x}\mapsto \mathbf{f(x)}=\Phi W_L \sigma_{v_L} \cdots W_1 \sigma_{v_1} W_0 \mathbf{x},
\]
where $(W_j,v_j)$ are weight matrices and shift vectors, and $\Phi$ is the output activation. 
To ensure probabilistic outputs, the softmax function $\Phi:\mathbb{R}^K \to \mathcal{S}_K$ 
is used in the final layer. We denote by $\mathcal{F}_\Phi(L,m,s)$ the class of such networks with parameters bounded by $1$ w.r.t. $\Vert \cdot \Vert_{\infty}$ and sparsity level $s$.  These networks then produce functions $\widehat{\mathbf{p}}:\,[0,1]^\mathbf{d} \to \mathcal{S}_K$  for estimating 
conditional class probabilities.

\begin{assumption}\label{ass:assumption-bos-true-proba}
The true conditional class probability vector $\mathbf{p}_0$ satisfies
    \begin{align}\label{eq:amoothness-bos}
        \mathbf{p}_0\!\in\!\calG(\beta, Q)\!=\!\Big\{\mathbf{p}=(p_1, ..., p_K)^{\top} \!: [0,1]^\mathbf{d}\rightarrow\mathcal{S}^{K} \,\Big| \, p_k\in C^{\beta}([0,1]^\mathbf{d},Q), \,k=1,\dots, K
    \Big\},
    \end{align}
   and, moreover, $\mathbf{p}_0$ satisfies the $\alpha$-SVB condition, meaning that there exists a constant $C>0$ such that
$\PP_X({p}_k(X)\leq t)\leq Ct^{\alpha}$ for all $t\in(0,1]$ and for all $k\in\{1, \dots, K\}$. 
\end{assumption}
For $\alpha\in[0,1]$, the index from the SVB condition above, and $\beta$ the smoothness index in \eqref{eq:amoothness-bos}, we define
\begin{equation}\label{eq:def-psi-bos}
  \psi_{N,\mathbf{d}}=K^{\frac{(1+\alpha)\beta+(3+\alpha)\mathbf{d}}{(1+\alpha)\beta+\mathbf{d}}} N^{-\frac{(1+\alpha)\beta}{(1+\alpha)\beta+\mathbf{d}}}.   
\end{equation}
\begin{assumption}\label{ass:assumption-bos-nn}
The neural network class $\mathcal{F}_\Phi(L,m,s)$ satisfies
    \begin{align}
        A(\mathbf{d},\beta)\log_2(N)\leq L\lesssim N \psi_{N,\mathbf{d}}, \quad \min_{i=1, ..., L} m_i \gtrsim N\psi_{N,\mathbf{d}}, \quad s\asymp N \psi_{N,\mathbf{d}} \log(N),
    \end{align}  
    for suitable constant $A(\mathbf{d}, \beta)$. 
\end{assumption}

Moreover, we define the optimization error $\Delta_N(\widehat{\mathbf{p}}, \mathbf{p}_0)$ as the gap between the empirical risk achieved by $\widehat{\mathbf{p}}$ and the minimal empirical risk over the class $\mathcal{F}_\Phi(L,m,s)$:
\[
\Delta_N(\widehat{\mathbf{p}}, \mathbf{p}_0)
\coloneqq
\mathbb{E}_{\mathcal{D}_N}\left[
-\frac{1}{N}\sum_{i=1}^N \mathbf{Y}_i^{\top} \log \widehat{\mathbf{p}}(\mathbf{X}_i)
\;-\;
\min_{\mathbf{p} \in \mathcal{F}_\Phi(L,m,s)}
\left(
-\frac{1}{N}\sum_{i=1}^N \mathbf{Y}_i^{\top} \log \mathbf{p}(\mathbf{X}_i)
\right)
\right].
\]

%These networks produce functions $p:\,[0,1]^d \to \mathcal{S}_K$ suitable for estimating 
%conditional class probabilities.

\subsection{Excess risk of a FNN classifier with trajectory input}\label{sec:bos-prop}

We now consider the trajectory-based classifier, where the entire discretely observed path is used as input. More precisely, each sample is written as  
\[
\mathbf{X}^{[k]}_i =
\big(
\bar X^{[k],(i)}_{t_0},
\dots,
\bar X^{[k],(i)}_{t_M}
\big),
\]
and this vector is directly fed into a feedforward neural network in $\mathcal{F}_\Phi(L,m,s)$. Note that, in this setting, each input has dimension \(\mathbf{d}=(M+1)d\). We now establish an upper bound on the excess risk by applying Theorem 3.3 of \citet{bos2022}.

Let \(\mathbf{p}_0\) denote the true conditional class probability vector defined in
\eqref{eq:true-cond-bos2022} and \eqref{eq:true-cond-vector-bos2022}, and let
\(\widehat{\mathbf{p}}=(\widehat p_1, ..., \widehat p_K)\) be the estimator of
\(\mathbf{p}_0\), constructed by a neural network in the class
\(\mathcal{F}_\Phi(L,m,s)\), satisfying Assumption~\ref{ass:assumption-bos-nn}.  We define the corresponding predicted-label functions by
\[
\mathbf{f}_{\mathbf{p}_0(\mathbf{X})}= \argmax_k p_k^0(\mathbf{X}) \quad \text{ and }\quad \mathbf{f}_{\widehat{\mathbf{p}}(\mathbf{X})}= \argmax_k \widehat{p}_k(\mathbf{X}). 
\]
Note that both \(\mathbf{f}_{\mathbf{p}_0(\mathbf{X})}\) and
\(\mathbf{f}_{\widehat{\mathbf{p}}(\mathbf{X})}\) take values in
\(\{1,\ldots,K\}\). The excess misclassification risk is then given by
$
  \PP (\mathbf{f}_{\widehat{\mathbf{p}}(\mathbf{X})}\neq Y)-   \PP (\mathbf{f}_{\mathbf{p}_0(\mathbf{X})}\neq Y). 
$
\begin{proposition}\label{prop:excess-risk-bos2022}
Assume that Assumptions~\ref{ass:assumption-bos-true-proba} and~\ref{ass:assumption-bos-nn} hold. Fix $M\in\mathbb{N}^*$. 
If $N$ is sufficiently large, then there exist constants $C'$ and $C''$, depending only on $\alpha,C,\beta,d$, such that whenever
$\Delta_N(\widehat{\mathbf{p}},\mathbf{p}_0)
\leq C'' B\psi_{N,(M+1)d} L\log^2(N),
$
we have
\begin{equation}
   \PP (\mathbf{f}_{\widehat{\mathbf{p}}(\mathbf{X})}\neq Y)-   \PP (\mathbf{f}_{\mathbf{p}_0(\mathbf{X})}\neq Y)\lesssim \sqrt{\psi_{N,(M+1)d}\log^3N},
\end{equation}
where $\psi_{N, \mathbf{d}}$ is defined in \eqref{eq:def-psi-bos}.
\end{proposition}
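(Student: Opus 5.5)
The plan is to reduce the excess misclassification risk to an $L^1$ distance between probability vectors, and then to the truncated Kullback--Leibler risk $R_B$, which Theorem~3.3 of \citet{bos2022} controls. As in the proof of Theorem~\ref{thm:main-decomposition}, the decomposition of \citet[Proposition~2]{Denis2020} holds for any plug-in rule $\mathbf{f}_{\widehat{\mathbf{p}}}$. On the event $\{\mathbf{f}_{\widehat{\mathbf{p}}}=k,\ \mathbf{f}_{\mathbf{p}_0}=i\}$ we have $\widehat p_k\ge \widehat p_i$, which yields
\[
\PP (\mathbf{f}_{\widehat{\mathbf{p}}(\mathbf{X})}\neq Y)-   \PP (\mathbf{f}_{\mathbf{p}_0(\mathbf{X})}\neq Y)\le 2\,\EE\big[\max_k|p^0_k(\mathbf{X})-\widehat p_k(\mathbf{X})|\big]\le 2\,\EE\big[\|\mathbf{p}_0(\mathbf{X})-\widehat{\mathbf{p}}(\mathbf{X})\|_1\big],
\]
with the expectation also taken over $\calD_N$.

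Next I will apply Pinsker's inequality, $\|\mathbf{p}-\mathbf{q}\|_1\le\sqrt{2\,\mathrm{KL}(\mathbf{p},\mathbf{q})}$. Since Theorem~3.3 of \citet{bos2022} bounds the truncated risk $R_B$ and not the full KL, I need a truncated version of Pinsker. The route I would try first is the following. For $B\ge 2$ (say), the truncated divergence still dominates a constant multiple of the squared Hellinger distance. Indeed, $\mathrm{KL}_B(\mathbf{p},\mathbf{q})\ge c_B\,H^2(\mathbf{p},\mathbf{q})$, obtained by writing $\mathbf{p}^\top(B\wedge\log(\mathbf{p}/\mathbf{q}))$ and using $B\wedge\log x\ge c(1-1/\sqrt{x})$-type elementary bounds summed against $\mathbf{p}$. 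Combined with $\|\mathbf{p}-\mathbf{q}\|_1\le 2H(\mathbf{p},\mathbf{q})$, this gives pointwise $\|\mathbf{p}_0-\widehat{\mathbf{p}}\|_1\le C_B\sqrt{\mathrm{KL}_B(\mathbf{p}_0,\widehat{\mathbf{p}})}$. If this elementary inequality proves awkward, a fallback is to cite the corresponding lemma in \citet{bos2022}, which relates $\mathrm{KL}_B$ to Hellinger distance.

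Jensen's inequality then gives
\[
\EE\|\mathbf{p}_0(\mathbf{X})-\widehat{\mathbf{p}}(\mathbf{X})\|_1\le C_B\sqrt{R_B(\mathbf{p}_0,\widehat{\mathbf{p}})}.
\]
It remains to invoke Theorem~3.3 of \citet{bos2022} with input dimension $\mathbf{d}=(M+1)d$. For fixed $M$ this is a fixed dimension, and the data lie in (or are rescaled to) $[0,1]^{\mathbf{d}}$. Under Assumptions~\ref{ass:assumption-bos-true-proba} and~\ref{ass:assumption-bos-nn}, for $N$ large and under the stated optimization-error condition on $\Delta_N(\widehat{\mathbf{p}},\mathbf{p}_0)$, that theorem gives $R_B(\mathbf{p}_0,\widehat{\mathbf{p}})\lesssim \psi_{N,(M+1)d}\log^3 N$, with $B$ fixed and absorbed into the constants. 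Taking square roots yields the claim, with constants depending on $\alpha,C,\beta,d$ and implicitly on $M$ and $K$.

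The main obstacle is the truncated-Pinsker step, i.e., making sure that truncation at $B$ does not destroy the lower bound on $\mathrm{KL}_B$ by a squared distance. The delicate regime is where $\widehat p_k\ll p^0_k$. There the truncation caps the contribution at $p^0_k B$, but the $L^1$ contribution is at most $p^0_k$, so the comparison is still fine once $B\gtrsim 1$. Carefully matching the conditions and log factors of Theorem~3.3 to our setting is routine.
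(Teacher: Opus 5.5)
Your argument is correct. Its first step, reducing the excess risk to $2\,\EE\|\mathbf p_0(\mathbf X)-\widehat{\mathbf p}(\mathbf X)\|_1$, is essentially the paper's; the paper gets there by a direct conditional-expectation computation rather than through \citet[Proposition~2]{Denis2020}.

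Where you genuinely differ is the passage from $L^1$ to the quantity that Theorem~3.3 of \citet{bos2022} actually controls. The paper applies the ordinary Pinsker inequality to the untruncated KL divergence. To reconnect with $R_B$, it then restricts the estimators to take values in $[e^{-B},1]^K$ with $B\asymp\log N$, so that $\mathrm{KL}_B=\mathrm{KL}$. You instead keep $B\ge 2$ fixed and use a truncated Pinsker inequality.

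That inequality does hold, with constant one, so you do not need the fallback. Since $\log y\ge 2(1-y^{-1/2})$ and $B\ge 2$, one has $B\wedge\log(p_k/q_k)\ge 2\bigl(1-\sqrt{q_k/p_k}\bigr)$. Summing against $\mathbf p$ and using $\sum_k q_k=1$ gives $\mathrm{KL}_B(\mathbf p,\mathbf q)\ge\sum_k(\sqrt{p_k}-\sqrt{q_k})^2$. Cauchy--Schwarz then yields $\|\mathbf p-\mathbf q\|_1\le 2\sqrt{\mathrm{KL}_B(\mathbf p,\mathbf q)}$.

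Your route has two advantages. First, it applies to $\widehat{\mathbf p}$ exactly as given by Assumption~\ref{ass:assumption-bos-nn}, with no modification of the network outputs. Second, the factor $B$ in the bound of Theorem~3.3 stays a constant. Judging from the form of the hypothesis on $\Delta_N$, that factor enters linearly, so the paper's choice $B\asymp\log N$ would appear to cost an extra logarithm.

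The paper's route, in exchange, needs only the textbook Pinsker inequality and produces a bound on the full KL risk. The price is the lower-truncated estimator class.

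In both arguments, the final $\log^3 N$ requires $L\asymp\log N$. This is implicit rather than stated in Assumption~\ref{ass:assumption-bos-nn}, which only imposes $L\lesssim N\psi_{N,(M+1)d}$ from above.
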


\begin{proof}[Proof of Proposition~\ref{prop:excess-risk-bos2022}]
We first establish a link between the Kullback--Leibler (KL) risk and the excess risk. Using conditional expectation,
\begin{align}
     \PP (\mathbf{f}_{\widehat{\mathbf{p}}(\mathbf{X})}\neq Y)=\EE[\mathbbm{1}_{\{\mathbf{f}_{\widehat{\mathbf{p}}(\mathbf{X})}\neq Y\}}]=\EE\big[\EE[\mathbbm{1}_{\{\mathbf{f}_{\widehat{\mathbf{p}}(\mathbf{X})}\neq Y\}}|\mathbf{X}]\big]=\EE\big[\PP(\mathbf{f}_{\widehat{\mathbf{p}}(\mathbf{X})}\neq Y|\mathbf{X})\big],\nonumber
\end{align}
and for any $\mathbf{x}\in\R^{(M+1)d}$, we have 
\begin{align}
    \PP(\mathbf{f}_{\widehat{\mathbf{p}}(\mathbf{X})}\neq Y|\mathbf{X}=\mathbf{x})=1-p^0_{\mathbf{f}_{\widehat{\mathbf{p}}(\mathbf{x})}}(\mathbf{x}). \nonumber%\quad \PP(\mathbf{f}_{{\mathbf{p}_0}(\mathbf{X})}\neq Y|X=x)=1-p^0_{\mathbf{f}_{{\mathbf{p}_0}(\mathbf{x})}}(x), 
\end{align}
The same argument applied to \(\mathbf{f}_{\mathbf{p}_0(\mathbf{X})}\) yields
\begin{align}
\PP &(\mathbf{f}_{\widehat{\mathbf{p}}(\mathbf{X})}\neq Y)-\PP (\mathbf{f}_{\mathbf{p}_0(\mathbf{X})}\neq Y)=\EE\big[\PP(\mathbf{f}_{\widehat{\mathbf{p}}(\mathbf{X})}\neq Y|X)\big]-\EE\big[\PP(\mathbf{f}_{{\mathbf{p}_0}(\mathbf{X})}\neq Y|X)\big]\nonumber\\
&= \EE [p^0_{\mathbf{f}_{{\mathbf{p}_0}(\mathbf{X})}}(\mathbf{X})-p^0_{\mathbf{f}_{\widehat{\mathbf{p}}(\mathbf{X})}}(\mathbf{X})]\nonumber\\
&\leq \EE [p^0_{\mathbf{f}_{{\mathbf{p}_0}(\mathbf{X})}}(\mathbf{X})- \widehat{p}_{\mathbf{f}_{{\mathbf{p}_0}(\mathbf{X})}}(\mathbf{X}) +\widehat{p}_{\mathbf{f}_{{\mathbf{p}_0}(\mathbf{X})}}(\mathbf{X}) - \widehat{p}_{\mathbf{f}_{\widehat{{\mathbf{p}}}(\mathbf{X})}}(\mathbf{X}) + \widehat{p}_{\mathbf{f}_{\widehat{{\mathbf{p}}}(\mathbf{X})}}(\mathbf{X}) -p^0_{\mathbf{f}_{\widehat{\mathbf{p}}(\mathbf{X})}}(\mathbf{X})]\nonumber\\
&\leq \EE [p^0_{\mathbf{f}_{{\mathbf{p}_0}(\mathbf{X})}}(\mathbf{X})- \widehat{p}_{\mathbf{f}_{{\mathbf{p}_0}(\mathbf{X})}}(\mathbf{X})  + \widehat{p}_{\mathbf{f}_{\widehat{{\mathbf{p}}}(\mathbf{X})}}(\mathbf{X}) -p^0_{\mathbf{f}_{\widehat{\mathbf{p}}(\mathbf{X})}}(\mathbf{X})], \nonumber
\end{align}
since the term $\widehat{p}_{\mathbf{f}_{{\mathbf{p}_0}(\mathbf{X})}} (\mathbf{X})- \widehat{p}_{\mathbf{f}_{\widehat{{\mathbf{p}}}(\mathbf{X})}}(\mathbf{X})$ is nonpositive by definition. 
Finally, 
\begin{align}
    \PP &(\mathbf{f}_{\widehat{\mathbf{p}}(\mathbf{X})}\neq Y)-\PP (\mathbf{f}_{\mathbf{p}_0(\mathbf{X})}\neq Y)\leq 2\EE \left[\sum_{k=1}^{K}\big|p^0_{k}(\mathbf{X})- \widehat{p}_{k}(\mathbf{X}) \big|\right]= 2 \EE \left[\big\Vert\mathbf{p}^0(\mathbf{X})- \widehat{\mathbf{p}}(\mathbf{X}) \big\Vert_{1}\right]\nonumber\\
    &\leq 4 \EE \left[\mathrm{TV}\big(\mathbf{p}^0(\mathbf{X}), \widehat{\mathbf{p}}(\mathbf{X}) \big)\right]\leq 2\sqrt{2} \EE \left[\sqrt{\mathrm{KL}\big(\mathbf{p}^0(\mathbf{X}), \widehat{\mathbf{p}}(\mathbf{X})\big )}\right]\leq 2\sqrt{2} \sqrt{\EE \left[\mathrm{KL}\big(\mathbf{p}^0(\mathbf{X}), \widehat{\mathbf{p}}(\mathbf{X}) \big)\right]}\label{eq:bos-pinsker}
\end{align}
by Pinsker's inequality and Jensen's inequality, where $\mathrm{TV}(P,Q)$ denotes the total variation distance between two  probability  measures $P$ and $Q$. 

Following \citet[Theorem~3.3 and the discussion thereafter]{bos2022}, one can restrict the class of estimators to functions taking values in $[e^{-B},1]^K$, with $B\asymp \log N$, which ensures that the truncation parameter $B$ does not affect the convergence rate. Under this construction, the Kullback--Leibler risk satisfies
$\EE \left[\mathrm{KL}\big(\mathbf{p}^0(\mathbf{X}), \widehat{\mathbf{p}}(\mathbf{X}) \big)\right]\leq C\psi_{N,(M+1)d}\log^3 N$. We conclude by plugging this bound into \eqref{eq:bos-pinsker}.
\end{proof}

\subsection{Comparison of our plug-in classifier with trajectory-based classifiers}\label{sec:bos-comparison}

\paragraph{Theoretical comparison.}
Proposition~\ref{prop:excess-risk-bos2022} yields, up to logarithmic factors, the convergence rate
\[
N^{-\frac{1}{2}\frac{(1+\alpha)\beta}{(1+\alpha)\beta+(M+1)d}}
\]
for the excess risk, by the definition of \(\psi_{N,\mathbf d}\) in \eqref{eq:def-psi-bos}. 
This bound suffers from the curse of dimensionality: when the entire trajectory is used as input, the effective input dimension is \(\mathbf d=(M+1)d\). Hence, for large \(M\), the convergence rate becomes very slow. This is consistent with our numerical findings in Figure~\ref{fig:NN-basedVSB-spline-based_and_NN-basedVSdirectNN}.

For other trajectory-based classifiers, such as RNNs, TCNs, and Transformers, 
we are not aware of directly comparable Bos--Schmidt-Hieber-type nonparametric 
convergence-rate bounds. Existing results for 
recurrent architectures, such as \citet{chen2020generalization}, provide 
PAC-style complexity-based bounds, which are not directly comparable to the 
excess-risk rate considered here. We therefore include these methods in the 
numerical comparison, but not in the theoretical comparison.

\paragraph{Numerical comparison.}
From a numerical perspective, there is a clear difference in the effective use of data. Direct classifiers treat each trajectory as a single training sample, and therefore use only \(N\) samples for classification, with each sample lying in dimension \(\mathbf d=(M+1)d\). By contrast, our plug-in approach learns the drift from trajectory increments, which provides roughly \(NM\) effective data points, where \(M\) is the number of observation intervals. This makes the proposed approach more data-efficient.

\newpage

\end{document}